\documentclass{article}

\PassOptionsToPackage{numbers,sort&compress}{natbib}
\usepackage[preprint]{neurips_2026}

\usepackage[utf8]{inputenc}
\usepackage[T1]{fontenc}
\usepackage{hyperref}
\usepackage{url}
\usepackage{booktabs}
\usepackage{amsmath,amssymb,amsfonts,mathtools}
\usepackage{amsthm}
\usepackage{microtype}
\usepackage{xcolor}
\usepackage{graphicx}
\usepackage{enumitem}
\usepackage{nicefrac}
\usepackage{multirow}
\usepackage{wrapfig}

\usepackage[most]{tcolorbox}
\usepackage[table]{xcolor}
\definecolor{ourscolor}{RGB}{230,243,255}  % light blue
\definecolor{mydarkblue}{RGB}{0,80,160}

  \newtcolorbox{summarybox}{
    colback=ourscolor,
    colframe=mydarkblue,
    boxrule=1pt,
    arc=3pt,
    left=6pt,
    right=6pt,
    top=6pt,
    bottom=6pt,
  }

\newtheorem{proposition}{Proposition}

\newcommand{\R}{\mathbb{R}}

\newcommand{\Mem}{\mathsf{Mem}}

\newcommand{\ValAcc}{\mathrm{ValAcc}}
\newcommand{\TrainAcc}{\mathrm{TrainAcc}}
\newcommand{\Loss}{\mathcal{L}}

\newcommand{\sigw}{\sigma_w}
\newcommand{\taubest}{\tau_{\mathrm{best}}}
\newcommand{\tauinterp}{\tau_{\mathrm{interp}}}

\title{Do Deep Networks Forget Initialization?\\
A Forgetting-Time View of Practical Inductive Bias}

\author{Mohua Das\thanks{Equal contribution, correspondence at \texttt{mohuadas@mit.edu} and \texttt{pierb@mit.edu}.}\\
  MIT\\
  \And
  Pierfrancesco Beneventano$^*$\\
  MIT\\
  \And
  Shibshankar Dey\\
  Northwestern University\\
  \And
  Gareth H. McKinley \\
  MIT\\
  \And
  Tomaso Poggio \\
  MIT\\
  }

\begin{document}

\maketitle

\begin{abstract}
Randomly initialized neural networks induce a prior over functions, but the predictor used in practice is produced only after training.  We ask how much of this initial bias survives the training pipeline.  To make the question measurable, we introduce \emph{initialization memory}: the dependence of the validation-selected predictor on the scale of the random initialization.  We perform controlled CIFAR-10 experiments on ResNets where initialization memory already sharply separates training regimes.  Low-learning-rate SGD can interpolate while still remembering its initialization: on ResNet-9 with batch size $b=128$, test accuracy varies by $26.5$ percentage points across initialization scales despite $\ge99.5\%$ training accuracy.  This is not undertraining: extending the same low-learning-rate regime to $5{,}000$ epochs leaves the spread essentially unchanged. In contrast, Adam-family methods largely erase the dependence. SGD can also be made to forget when larger learning rates are paired with explicit $L_2$ norm control.  We interpret these findings in terms of the time scale of forgetting: gradient-flow-like dynamics can preserve initialization memory, whereas stochastic finite-step effects, explicit norm decay, and adaptive preconditioning erase it on scales governed by the size of explicit or implicit regularization. The practical inductive bias of a trained network is therefore not the architectural prior alone, but the architectural prior after being filtered by the forgetting dynamics of the training pipeline; and the same regularizers that improve generalization are precisely those that erase memory of initialization. 
%pier{I changed it a bit, one should add a final sentence to close up. Please change it in the submission}
\end{abstract}

\section{Introduction}
\label{sec:intro}
\paragraph{The literature on initialization.}
Modern neural networks are too expressive a priori for performance to be explained by capacity alone. Thus, the relevant object is not merely the hypothesis class, but the training pipeline that selects one function from it \citep{zhang2021understanding}.  In practice, this
pipeline includes data preprocessing, architecture, initialization, optimizer,
batching, explicit regularization, and training time.  Understanding performance, therefore, requires understanding not only the prior induced by the architecture, but also how training transforms that prior. This article investigates a precise subquestion:
\vspace{-0.3em}
\begin{center}
    \textit{What is the role of initialization in explaining performance?}
\end{center}
\vspace{-0.3em}
Initialization is a natural place to look for such bias. From a dynamical
systems perspective, the initial condition, in the absence of regularization, determines which region of parameter space the trajectory explores (basin of attraction) and which solution is ultimately selected~\citep{strogatz2015nonlinear,hirsch2013differential}.
In simplified linear and homogeneous networks, initialization is also known to
control the implicit bias of gradient-based training~\citep{min2021explicit,gruber2024role}.
A line of work studies the function prior of random networks: before seeing labels, architectures, and initialization schemes assign much higher probability to some functions than to others, often favoring simple ones \citep{valle2019deep,mingard2025occam, fink2026deep}.
Mingard et al. \citep{mingard2025occam} distinguish the first-order question of why overparameterized DNNs generalize at all from the second-order question of how to further improve the performance of already-generalizing models. Our focus is on one concrete part of that bridge: whether the simplicity bias present at initialization survives training strongly enough to remain visible in the final predictor and in practical performance. Answering this requires studying (i) what happens after optimization begins and (ii) whether practical performance is affected by these geometric biases. The simplicity bias present at initialization may be
preserved, distorted, or erased by the subsequent training dynamics. 
Thus, the practical question is not only whether random networks have a simplicity bias, but whether that bias remains visible in the predictor selected by a modern training pipeline.
The question, given these works, becomes:
\vspace{-0.3em}
\begin{center}
    \textit{When does training remember initialization's bias, when does it forget it, and on what time scale?}\\
    \textit{How does this geometric simplicity bias resolve in practical performance?}
\end{center}
\vspace{-0.3em}
In apparent contrast with the literature above, a different intuition is common in large-scale model training.  There, initialization is often treated less as a source of final inductive bias and more as a mechanism for stabilizing optimization: preventing exploding or vanishing signals, enabling depth, and making training predictable at scale. This perspective underlies variance-preserving initialization schemes, random-matrix and dynamical-mean-field analyses of signal propagation \citep{glorot2010understanding,he2015delving,
poole2016exponential,schoenholz2017deep,pennington2017resurrecting,
hanin2018start,hanin2018gradients,xiao2018dynamical,chen2018dynamical}, and modern parameterization theory such as $\mu$P \citep{yang2021feature,yang2021tensorv,bordelon2023self,
bordelon2024finite,bordelon2025deep,lauditi2025adaptive}.   From this viewpoint, improvements attributed to initialization may come primarily from making training possible or stable, rather than from a persistent preference among functions.

Recent evidence that random seeds and initialization can affect
language-model training---both during fine-tuning
\citep{dodge2020finetuning} and during pretraining
\citep{vanderwal2025polypythias,fehlauer2025convergence,tong2026seedprints,li2026transformersborn}---further sharpens the timeline of the issue.
Related controlled studies of language-model training pipelines and
architecture choices reinforce that large-scale behavior is shaped by
more than the architecture alone~\citep{allenzhu2024physics31,allenzhu2025physics41}.

% Not only have these questions not been answered, to the knowledge of the authors, but recent evidence that different initialization seed may lead to different performance for pretrained LLMs \citep{allenzhu_language_models_things}, prompts the need for understanding the fundamental question of what the effect of initialization is.

\begin{figure}[t]
\centering
\includegraphics[width=0.9\textwidth]{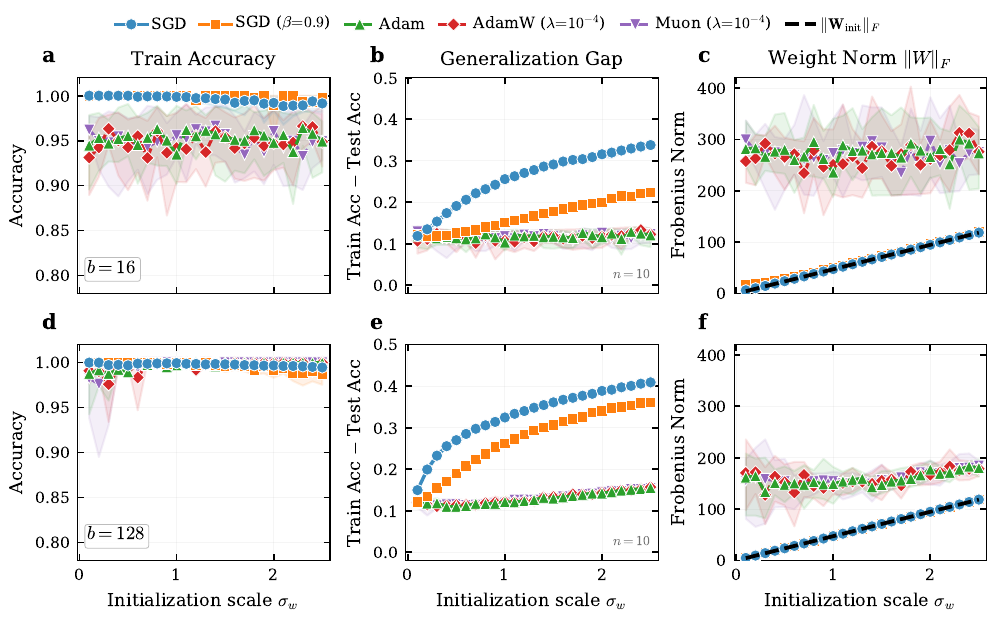}
\caption{
\textbf{SGD remembers initialization; Adam-family methods forget.}
ResNet-9 under a shared low-learning-rate training procedure. Each curve shows the mean over $n=10$ seeds; shaded bands indicate the $10^{\mathrm{th}}-90^{\mathrm{th}}$ percentile range. SGD interpolates, but its generalization gap grows with $\sigw$. Adam, AdamW, and Muon show substantially weaker dependence on $\sigw$. The norm panels show radial memory: SGD retains sensitivity to the initial norm (dashed line), whereas adaptive methods converge toward a common final norm scale. Top row: $b = 16$; bottom row: $b=128.$
}
\label{fig:memory}
\vspace{-1.0em}
\end{figure}

\paragraph{Our contribution.}

As we argue above, there is substantial work showing that random networks and SGD-trained networks are biased toward simple functions, and there is separate work showing that normalization, SGD noise, regularization, and finite-step discretization strongly alter optimization trajectories \citep{li2020reconciling,barrett2021implicit,smith2021origin,beneventano2023trajectories,beneventano2024support} (further related work in Appendix \ref{app:further-related-work}). What is still comparatively under-articulated is when initialization-induced simplicity survives training and when it is forgotten. That is the precise gap our paper occupies.

This article is an empirical study in a deliberately controlled setting: ResNets on CIFAR-10. We perform extensive ablations across initialization scales, optimizers, batch sizes, depths, training horizons, and explicit regularization. We introduce initialization memory as our diagnostic: the extent to which the predictor returned by a training pipeline still depends on the initialization scale~$\sigma_w$.

\begin{enumerate}[leftmargin=1.4em,itemsep=0.15em]
    \item \textbf{A controlled phase diagram of initialization-scale memory.}
    We show that training procedures differ sharply in whether the returned
    predictor still depends on initialization scale. 
    \begin{itemize}
    \item Low-learning-rate SGD can interpolate while retaining large
    initialization-scale memory: for ResNet-9 at \(b=128\), test accuracy varies
    by \(26.5\) percentage points across \(\sigw\) despite \(\ge99.5\%\) training
    accuracy.
    \item In the same diagnostic grid, Adam, AdamW, and Muon instead largely erase this dependence.
    \end{itemize}
    Hyperparameter ablations and depth stress tests show that the failure mode changes across regimes: poor forgetting can appear both as interpolation with poor generalization in shallower networks, or as degraded trainability
    in deeper ones.

    \item \textbf{A separation between interpolation, training horizon, and initialization-scale forgetting.}
    We show that erasing initialization-scale dependence is not implied by
    fitting the labels or by extending the same low-learning-rate dynamics.
    A \(5{,}000\)-epoch low-LR SGD control leaves the initialization-scale
    spread essentially unchanged. SGD can nevertheless be made to forget when
    the recipe supplies larger effective movement (larger implicit regularization) or explicit regularization as weight decay, showing that forgetting is a property of the full training recipe rather than the optimizer name.
    In particular, we show that the training procedure forgets more with a larger learning rate and regularization, or a smaller batch size.

    \item \textbf{A forgetting-timescale mechanism.}
    We organize the results by cumulative optimizer clocks:
    \[
        \mathcal T_{\mathrm{SGD}}
        =
        \frac{1}{b}\sum_{k<K}\eta_k^2,
        \qquad
        \mathcal T_{L_2}
        =
        \lambda\sum_{k<K}\eta_k,
        \qquad
        \mathcal T_{\mathrm{adapt}}
        =
        \sum_{k<K}\eta_k .
    \]
    These time scales track stochastic finite-step effects, explicit norm decay, and adaptive preconditioning. They explain why epoch count and interpolation time are not reliable proxies for erasing initialization-scale dependence,
    and are supported by the repair map, sensitivity-collapse plots, and minimal conservation-law model.
\end{enumerate}

Importantly, we see that the main takeaways align with those of the (implicit) regularization community:
\vspace{-0.3em}
\begin{center}
    \textit{Gradient-flow-like training can remember initialization; stochasticity, adaptivity, and norm control erase it on optimizer-dependent time scales.}
\end{center}
\vspace{-0.3em}

\begin{figure}[t]
\centering
\includegraphics[width=0.8\linewidth]{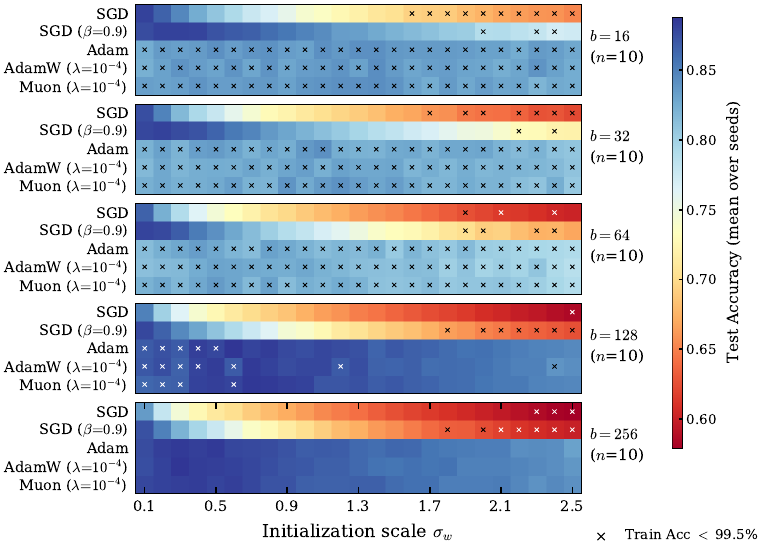}
\vspace{-0.3em}
\caption{
\textbf{Large-batch fixed-epoch regimes forget initialization more slowly.}
ResNet-9 test accuracy at $\taubest$, averaged over $n=10$ seeds. Each panel corresponds to one batch size \(b\in\{16,32,64,128,256\}\);
within each panel, rows are optimizers and columns are initialization scales.  Cells marked $\times$ did not
reach $99.5\%$ mean training accuracy at the best-validation-loss checkpoint $\taubest$.  SGD shows a strong left-to-right
degradation that becomes more pronounced at larger batch sizes; adaptive methods
remain much flatter.
}
\label{fig:heatmap}
\vspace{-1.3em}
\end{figure}
\vspace{-0.3em}
\section{Experimental Design}
\label{sec:design}
\vspace{-0.5em}
We use the initialization scale as a controlled perturbation of the training
pipeline. For each optimizer--batch-size pair, we sweep the global scale
\(\sigw\) of the random kernel initialization while holding all other choices
fixed. If the predictor returned by the same training and checkpoint-selection
rule varies across \(\sigw\), the procedure has retained initialization-scale
memory; if the dependence is small, this component of the initial condition has
been erased.

Formally, let \(\Sigma\) be the finite set of scales in the sweep and let
\(\mathcal R\) denotes the full training procedure: architecture, data split,
optimizer, batch size, schedule, regularization, horizon, and checkpoint rule.
Let \(A_{\mathcal R,K}(\sigma,s)\) be the predictor returned after \(K\) updates
from scale \(\sigma\) and seed \(s\). For metric \(m\), define
\[
    \Mem_m^{\mathrm{ret}}(\mathcal R,K;\Sigma)
    =
    \max_{\sigma\in\Sigma}
    \mathbb E_s[
        m(A_{\mathcal R,K}(\sigma,s))
    ]
    -
    \min_{\sigma\in\Sigma}
    \mathbb E_s[
        m(A_{\mathcal R,K}(\sigma,s))
    ].
\]
In the main experiments, \(m\) is the test accuracy, and the returned predictor is
the best-validation-loss checkpoint.

\begin{summarybox}
\textbf{Diagnostic.}
Initialization-scale memory is the finite-horizon dependence of the returned predictor on one controlled initialization perturbation, \(\sigw\). It is a property of the full training-and-selection procedure.
\end{summarybox}
% \vspace{-1.4em}
\paragraph{Controlled setting.}
We use CIFAR-10 BatchNorm ResNets. The main ResNet-9 grid fixes the
data split, architecture, learning-rate schedule, and training horizon, and varies only initialization scale, optimizer, and batch size. We sweep
$\sigw\in\{0.10,0.20,\ldots,2.50\}$,
optimizers SGD, SGD with momentum, Adam, AdamW, and Muon, and
$b\in\{16,32,64,128,256\}$,
with \(10\) seeds per configuration. All main-grid optimizers use
\(\eta_0=10^{-3}\) with cosine decay for \(300\) epochs.
Ablations over learning rate, $L_2$ regularization, training
length, normalization, augmentation, and depth (ResNet-56, ResNet-110, R9-AvgPool) are reported in
Appendices~\ref{app:ablation}--\ref{app:norm};
full hyperparameters are in Appendix~\ref{app:experiments}.
%\pier{Through the ablations in the appendix, we also ablate through learning rate schedules, xxx. Full hyperparameters are in Appendix~\ref{app:experiments}.}

\begin{figure}[t]
\centering
\includegraphics[width=0.8\linewidth]{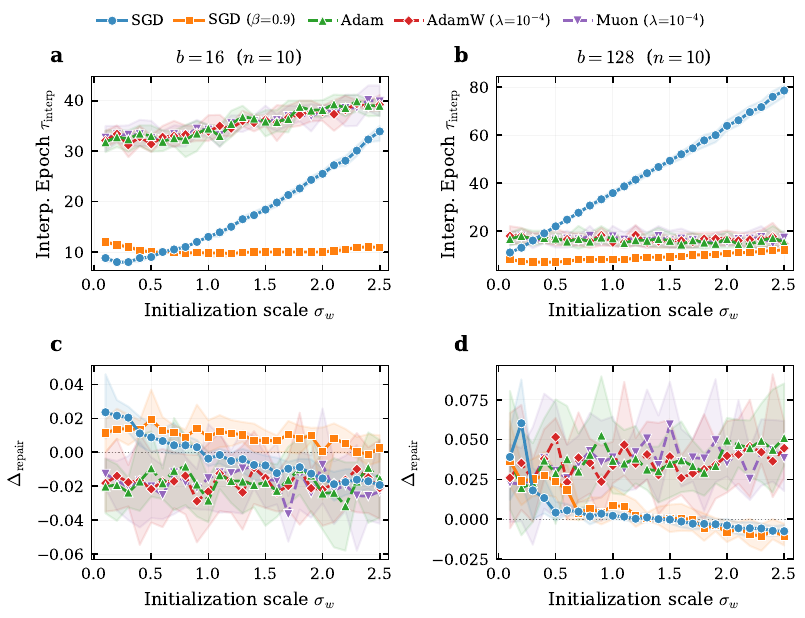}
\vspace{-0.3em}
\caption{\textbf{Interpolation is not forgetting.}
(a,b) Interpolation epoch $\tauinterp$ versus $\sigw$. SGD's interpolation time grows sharply with $\sigw$, especially at large batch sizes; Adam's is nearly flat.
(c,d) Repair gap $\Delta_{\mathrm{repair}}=\mathrm{ValAcc}_{\taubest}-\mathrm{ValAcc}_{\tauinterp}$. At large batch ($b\!\ge\!128$) Adam-family methods continue to gain validation accuracy after interpolation ($\taubest\gg\!\tauinterp$, $\Delta_{\mathrm{repair}}\!>\!0$). At small batch ($b\!\le\!64$) they reach $\taubest$ \emph{before} interpolating, so the formal repair gap can be negative even though validation accuracy keeps improving (see Appendix.~\ref{app:tables}). Vanilla SGD's repair gap stays near zero: validation accuracy is flat between $\tau_{\mathrm{best}}$ and $\tau_{\mathrm{interp}}$, even as both grow with $\sigma_w$.}
\label{fig:dynamics}
\vspace{-1.0em}
\end{figure}
\vspace{-0.5em}
\paragraph{Metrics.}
We report test accuracy at the best-validation-loss checkpoint \(\taubest\). We
also track the interpolation epoch, $\tauinterp=\min\{t:\TrainAcc_t\ge 99.5\%\},$
the trainable-kernel Frobenius norm
\[
    \|W\|_F=\left(\sum_\ell\|W^{(\ell)}\|_F^2\right)^{1/2},
\]
and the checkpoint repair gap, 
$    \Delta_{\mathrm{repair}}
    =
    \ValAcc_{\taubest}-\ValAcc_{\tauinterp}.$

The interpolation epoch measures when the labels are fit. The norm measures
radial memory of the initial scale. The repair gap measures how validation
accuracy changes between interpolation and the selected checkpoint.

\section{Memory of Initialization: Optimizers and Hyperparameters}
\label{sec:memory}
\vspace{-0.5em}
We now compare how much initialization-scale memory different procedures retain.
The central contrast is that, under the shared low-learning-rate diagnostic
procedure, vanilla SGD leaves \(\sigw\) visible at the selected checkpoint, whereas
Adam, AdamW, and Muon largely erase it. Two diagnostics refine this comparison: the kernel norm tests whether the radial scale has been overwritten, and the
checkpoint analysis separates fitting the labels from forgetting the initial
scale.

\vspace{-0.3em}
\subsection{SGD \textit{vs.} Adam and Muon}
Figure~\ref{fig:memory} exemplifies the basic optimizer contrast; Figure~\ref{fig:heatmap} extends it to the full $\sigma_w\!\times\!b$ grid.

\paragraph{Generalization.}
\textit{{Vanilla SGD remembers where it started}}: at $b=128$ it exceeds $99.5\%$ training accuracy across the entire $\sigw$ sweep, yet its test accuracy ranges from $85.0\%$ at $\sigw=0.1$ down to $58.6\%$ at $\sigw=2.5$, a $26.5$ pp spread between two runs that, by training accuracy alone, appear to have learned the same data. \textit{{Momentum is only a partial cure}}: at small batch it reduces the spread substantially (from $22.8$ to $10.9$ pp at $b=16$), but at $b=128$ under the same low-LR procedure it barely helps. 

\textit{{Adam-family methods largely wash out initialization scale}}: with the same data, learning rate, and epoch budget, Adam, AdamW, and Muon have $b{=}128$ spreads of only $4.3$, $4.7$, and $4.0$\,pp, respectively, and Adam's $b{=}16$ spread is $1.6$\,pp against SGD's $22.8$\,pp. The fact that \emph{Adam itself} (no weight decay) matches AdamW and Muon (decoupled $\lambda{=}10^{-4}$) shows the effect is not driven by the optimizer-default weight decay (Appendix~\ref{app:experiments}).

\paragraph{Weight Norm.}
The norm panels in Figure~\ref{fig:memory}~(c,f) show the same distinction in
parameter space. Under low-LR SGD, \(\|W_{\taubest}\|_F\) remains coupled to
\(\|W_{\mathrm{init}}\|_F\): larger initial kernels lead to larger selected
kernel norms. Adam, AdamW, and Muon instead move different initial scales toward
a common norm range. We therefore read \(\|W_{\taubest}\|_F\) as a radial memory
diagnostic, not as a measure of total distance traveled.

% \paragraph{Epoch count is not the right timescale.} In these heatmaps [Figure~\ref{fig:heatmap}], increasing $b$ at fixed epoch count changes two things at once: it reduces gradient noise, and it reduces the total number of parameter updates, since $K \approx E\,n_{\mathrm{train}}/b$. The appropriate interpretation is therefore not simply that ``large batch is worse,'' but that under a fixed-epoch budget, larger-batch regimes erase initialization more slowly. 

% Figure~\ref{fig:dynamics} and Tables~\ref{tab:ckpt_low}--\ref{tab:ckpt_high}
% (Appendix~\ref{app:tables}) separates two events that are easy to conflate. The
% interpolation epoch $\tau_{\mathrm{interp}}$ asks when the network fits the
% labels; the repair gap $\Delta_{\mathrm{repair}}$ asks how much it then continues to improve. The two diverge sharply across optimizers. For vanilla SGD at
% $b=128$, $\tau_{\mathrm{interp}}$ grows from $11$ epochs at $\sigw=0.1$ to $79$ epochs at $\sigw=2.5$: large initial norms substantially slow optimization. But once the model interpolates, validation accuracy improves only marginally: the repair gap is small across the sweep. Adam interpolates in roughly the same number of epochs at every $\sigw$, and at a large batch, it continues to gain several points of validation accuracy \emph{after} interpolation.
\vspace{-0.5em}
\paragraph{The message.}
Forgetting initialization is therefore not the event of fitting the labels. Under low-LR SGD, interpolation is reached before radial memory has been erased, and little subsequent repair occurs. The behavior of the Adam-family is batch-dependent: at large batch ($b\ge128$) the best-validation-loss checkpoint is reached \emph{after} interpolation, and the continued post-interpolation movement closes the $\sigw$ gap; at small batches ($b \le 64$) the best-validation-loss checkpoint is reached \emph{before} interpolation, i.e., gap is already closed in the pre-interpolation phase (Figure~\ref{fig:dynamics}, Appendix.~\ref{app:tables}). 

\begin{summarybox}
\textbf{Message 1.}
\textit{Adam(W) and Muon largely reduce initialization memory. SGD generally retains initialization memory.}
\end{summarybox}

\vspace{-0.3em}

% ============================================================
% Drop-in replacement for Section 4.
% Assumes Section 3 already contains Figures 1--3.
% Section 5 should then be the timescale/mechanism section.
% ============================================================
\vspace{-0.55em}
\subsection{Dependence on Hyperparameters}
\label{sec:erasing}

In Figure \ref{fig:memory}, we use a deliberately diagnostic low-learning-rate SGD baseline, not a tuned SGD procedure.  The natural objection is therefore simple: perhaps SGD only needs more time or better hyperparameters. In this section, we analyze how forgetting the initialization depends on the hyperparameters.

Figure~\ref{fig:repair} tests this directly with a targeted ResNet-9 sweep over training length, learning rate, and explicit $L_2$ regularization. See Appendix~\ref{app:ablation} for more details. Precisely, we see that:

\begin{figure}[t]
\centering
\includegraphics[width=0.8\linewidth]{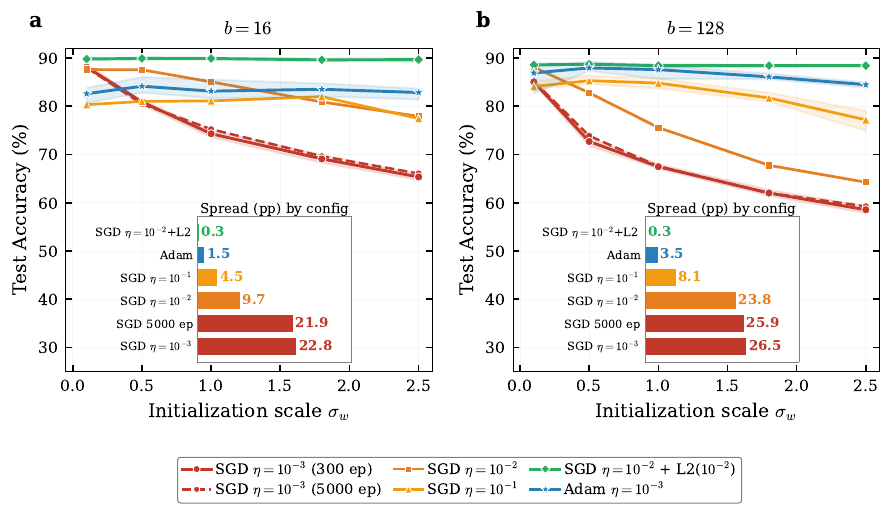}
\caption{%
\textbf{What helps SGD forget initialization?}
Test accuracy vs.\ $\sigma_w$ for (a) $b{=}16$, and (b) $b{=}128$; inset shows the spread (pp) per configuration, sorted ascending. Long training alone leaves the curves nearly unchanged; larger learning rates and especially moderate LR with explicit $L_2$ sharply reduce the spread. Spreads use the $5$-point grid $\sigma_w\!\in\!\{0.1,0.5,1.0,1.8,2.5\}$ [Table~\ref{tab:lr_l2_sweep}]; these agree with the $25$-point spreads of \S\ref{sec:memory} to within $\sim 1$\,pp.}
\label{fig:repair}
\vspace{-1.0em}
\end{figure}
\vspace{-0.3em}

%\pier{for all these claims, you should reference a figure as "see Figure 3" and/or to an appendix.}

\begin{itemize}[leftmargin=0.8em,itemsep=0.25em]
\item \textbf{Weight decay and regularization.} Explicit regularization is most effective once the learning rate is large enough to move the iterate substantially. With $\eta_0=10^{-2}$ and cosine decay, adding $\lambda_{L_2}=10^{-2}$ reduces the spread to $0.3$ pp at both $b=16$ and $b=128$ (Table~\ref{tab:lr_l2_sweep}).
\item \textbf{Learning rate.} Increasing the SGD learning rate helps, especially at a small batch. At $b=16$, raising $\eta_0$ from $10^{-3}$ to $10^{-2}$ reduces the spread from $22.8$ to $9.7$ pp, and $\eta_0=10^{-1}$ reduces it to $4.5$ pp.  At $b=128$, the same intervention is weaker: $\eta_0=10^{-2}$ still leaves $23.8$ pp, and $\eta_0=10^{-1}$ leaves $8.1$ pp.  Large batches need stronger forgetting mechanisms.
\item \textbf{Batch size.} Figure~\ref{fig:heatmap} extends the comparison to the full $\sigw \times b$ grid. We find that \textit{large batch slows forgetting under a fixed-epoch budget.} For SGD, initialization sensitivity remains large across the entire range of batch sizes, with spreads of $22.8$ pp at $b=16$, $26.5$ pp at $b=128$, and $25.7$ pp at $b=256$; as $b$ grows, the failure pattern also becomes more orderly and the absolute test-accuracy degradation more severe. Adaptive methods with the same low-LR training procedure still erase initialization memory more effectively than SGD, but their robustness weakens as the training regime becomes larger-batch and deeper. Adam, indeed, shows the same directional trend, but much more weakly: its spread increases from $1.6$ pp at $b=16$ to $5.3$ pp at $b=256$. 

\item \textbf{Normalization and augmentation.} Replacing BatchNorm with LayerNorm or adding standard data augmentation reduces the SGD spread but does not eliminate it; Adam remains robust under all these configurations (Appendix~\ref{app:norm}).

\end{itemize}

\paragraph{The standard recipe fully erases memory.}
Under a well-tuned SGD recipe (lr$=0.1$, momentum$=0.9$, weight decay$=5\!\times\!10^{-4}$, augmentation, 200 epochs), the $\sigma_w$ spread collapses to zero at $b{=}128$ (\(|\Delta|=0.0\) pp) with $94.2\%$ test accuracy (Table~\ref{tab:best_recipe}, Appendix~\ref{app:best_recipe}). In this setting, the regularizers that produce strong generalization also erase initialization memory.
\vspace{-0.6em}
\paragraph{Depth changes the failure mode.}
Figure~\ref{fig:depth_stress} extends the initialization sweep to ResNet-56, ResNet-110, and the pooling-matched R9-AvgPool control. The main effect is not a monotonic increase in spread with depth. At $b=128$, the SGD spread is $26.5$ pp for ResNet-9, $24.2$ pp for ResNet-56, and $20.4$ pp for ResNet-110.
What changes more decisively is the absolute failure mode: ResNet-9 typically still interpolates and then generalizes poorly, whereas deeper ResNets increasingly fail to train well at large $\sigma_w$. Full numerical results (test and train accuracy, interpolation epochs) and the corresponding train accuracy figure [Figure~\ref{fig:depth_train}] are provided in  Appendix~\ref{app:depth}. The regime, therefore, shifts from memorization without good generalization to degraded trainability itself.

\textbf{SGD can forget, but only when the training procedure makes it forget.} The repair map changes the interpretation of the optimizer comparison.  The message is not that Adam is intrinsically capable of forgetting while SGD is not.  Rather, Adam forgets automatically in this grid, while SGD forgets only when its training pipeline supplies both movement and radial control.  Initialization robustness is therefore a property of the full training pipeline, not of the optimizer name alone.

% \subsection{Depth stress test: the cost of remembering}
% \label{subsec:depth_stress}

% \begin{summarybox}
% {\textbf{Message 2.}
% \textit{Memory of initialization is a property of the full training pipeline. SGD forgets only when the procedure supplies both a sufficient step size and explicit norm control.}}
% \end{summarybox}

% Mine was more descriptive
\begin{summarybox}
\textbf{Message 2.}
\textit{Memory of initialization is, in general, a property of the whole training procedure, not a precise subset. Larger depth and batch sizes, or smaller regularization or learning rate, increase the memory of initialization.}
\end{summarybox}

\begin{figure}[t]
\centering
\includegraphics[width=1.0\linewidth]{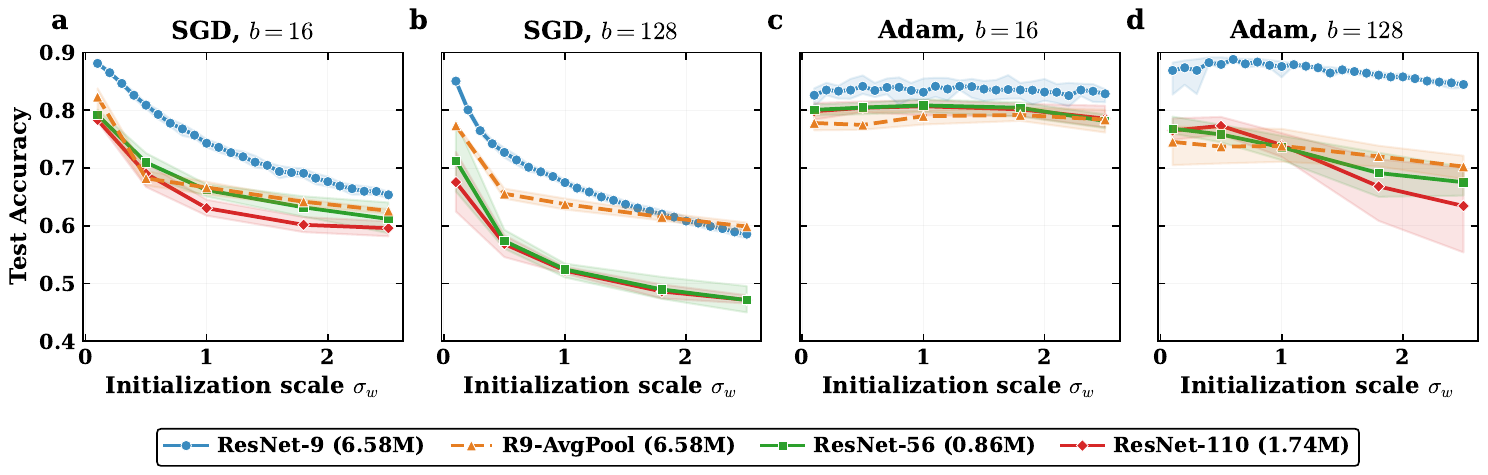}
\caption{\textbf{Depth makes poor forgetting dynamics more damaging; pooling is a partial confound.}
Test accuracy versus $\sigw$ for ResNet-9, R9-AvgPool, ResNet-56, and ResNet-110.  Switching pooling strategy reduces ResNet-9 performance, but the 9-layer R9 remains more robust than ResNet-56, suggesting that depth and optimization difficulty contribute beyond pooling alone.}
\label{fig:depth_stress}
\end{figure}
\vspace{-0.5em}
\section{The Time Scale of Forgetting}
\label{sec:timescales}
Sections~\ref{sec:memory} showed \emph{what} is remembered
and forgotten; this section asks \emph{why}.  The key idea is that epoch count
is not the natural unit of forgetting: the relevant clock is the accumulated
strength of the regularization---implicit or explicit---that acts along the
trajectory.
\vspace{-0.5em}
\subsection{The time scale is regularization}
\paragraph{Backward error view.}
Backward error analysis interprets a discrete optimizer as follows, to
leading order, a nearby continuous dynamics
\citep{griffiths1986scope,hairer2006geometric}.  In ML, this viewpoint writes
the discrete update as a gradient flow on a modified objective
\[
    \mathcal L_{\mathrm{mod}}
    \;=\;
    \mathcal L
    \;+\;
    \rho_{\mathcal A}(\eta,b,\lambda,\ldots)\,
    \mathcal R_{\mathcal A}
    \;+\;
    \text{higher-order terms},
\]
where $\mathcal R_{\mathcal A}$ is the implicit or explicit regularizer
associated with algorithm $\mathcal A$: implicit gradient regularization for
finite-step GD~\citep{barrett2021implicit}, covariance/Fisher-type
regularization for minibatch
SGD~\citep{smith2021origin,beneventano2023trajectories},
momentum-amplified regularization~\citep{ghosh2023implicit}, and
geometry-dependent implicit bias for adaptive
methods~\citep{cattaneo2024implicit}.  What matters is not
$\rho_{\mathcal A}$ itself, but its accumulated strength along the trajectory: the events that erase initialization-dependent quantities are exactly the
perturbation terms of $\mathcal L_{\mathrm{mod}}$ accumulating over many
steps.

\paragraph{Forgetting timescales.}
For the mechanisms our experiments isolate, the leading cumulative scales
along a schedule $(\eta_k)_{k<K}$ are
\[
    \mathcal T_{\mathrm{SGD}}
    =
    \frac{1}{b}\sum_{k<K}\eta_k^2,
    \qquad
    \mathcal T_{L_2}
    =
    \lambda\sum_{k<K}\eta_k,
    \qquad
    \mathcal T_{\mathrm{adapt}}
    =
    \sum_{k<K}\eta_k.
\]
The first scale corresponds to the minibatch finite-step/covariance effect: the $\eta^2/b$ scaling is the cumulative strength of the SGD modified-loss correction~\citep{smith2021origin,beneventano2023trajectories}.  The second
is explicit radial norm decay, which contracts initialization-dependent norm
at rate $\lambda\eta$ per step.  The third is the first-order adaptive
transport scale: adaptive preconditioners do not preserve the symmetries that
make the gradient-flow imbalance conserved, so they can move
initialization-dependent quantities at order $\eta$ per
step~\citep{cattaneo2024implicit}.  For constant learning rate these reduce
to $K\eta^2/b$, $K\eta\lambda$, and $K\eta$.  We use these expressions as
an organizing principle for the experiments, not as a complete theorem for
nonlinear BatchNorm ResNets.

\begin{figure}[t]
\centering
\includegraphics[width=0.95\textwidth]{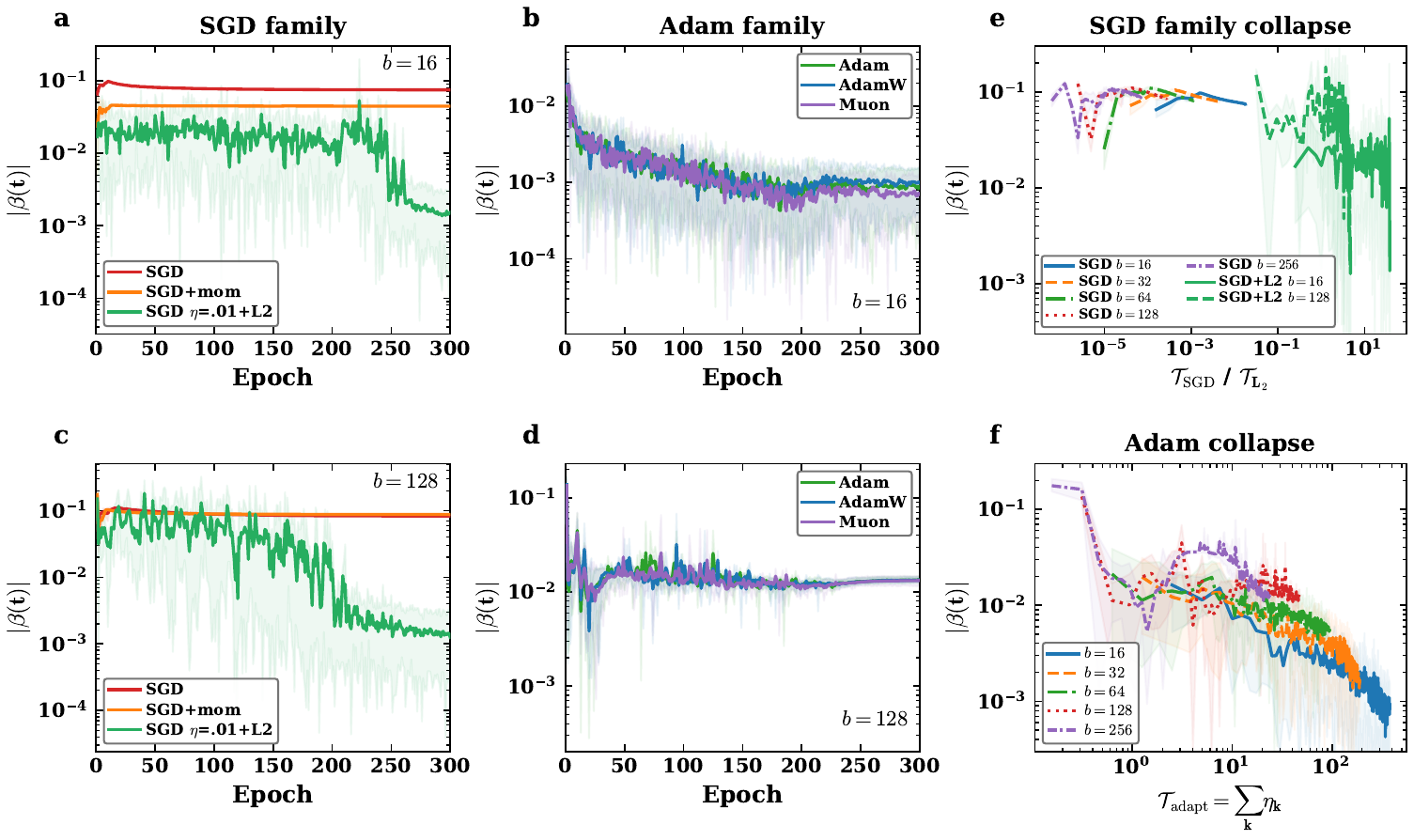}
\caption{\textbf{Initialization sensitivity decays on regularization
timescales, not epoch count.}
(a--d)~$|\beta(t)|$ versus epoch for the SGD family (left) and
adaptive methods (right) at $b{=}16$ (top) and $b{=}128$ (bottom).
Vanilla SGD (red) and SGD+momentum (orange) stay flat; Adam/AdamW/Muon
decay steadily; SGD with $\eta{=}10^{-2}$, $L_2{=}10^{-2}$ (green)
reaches the adaptive noise floor.
(e)~Vanilla SGD $|\beta|$ versus $\mathcal T_{\mathrm{SGD}}{=}(1/b)\sum_k\eta_k^2$
across all five batch sizes (colored lines); SGD+$L_2$ versus
$\mathcal T_{L_2}{=}\lambda\sum_k\eta_k$ at $b{=}16,128$ (green).
(f)~Adam $|\beta|$ versus $\mathcal T_{\mathrm{adapt}}{=}\sum_k\eta_k$
across all five batch sizes.  Curves approximately collapse under the
proposed rescaling.}
\label{fig:forgetting}
\vspace{-0.8em}
\end{figure}
\vspace{-0.5em}
\subsection{From timescales to data}
\paragraph{A linear-network sanity check.}
These timescales are visible in the smallest nontrivial homogeneous network:
the two-parameter scalar problem $(abx-y)^2$ with $a,b\in\R$. (Appendix~\ref{app:toy_model} proves the conservation, finite-step, $L_2$,
and adaptive-preconditioning statements below for two-factor and deep
linear networks).  Gradient flow preserves the imbalance
$D=a^2-b^2$ exactly, so the norm at convergence depends on initialization.
Explicit $L_2$ decay kills $D$ at rate $e^{-2\lambda t}$, giving a
timescale $O(\lambda\eta K)$.  Discrete minibatch SGD breaks the conservation at second order in $\eta$; the batch-dependent part of the leakage scales as  $O(\eta^2 K/b)$
\citep{arora2019implicit, beneventano2025gradient,beneventano2024support}. Adaptive preconditioning, by contrast, generically breaks this first-order
cancellation; the resulting $O(\eta K)$ clock for adaptive methods is proved in our minimal model (Appendix~\ref{app:toy_model}) and is
suggested by the implicit-bias analyses of~\citep{cattaneo2024implicit,
ghosh2023implicit}. In the regimes studied here, the effective clocks are ordered roughly as
\[
    \mathcal T_{\mathrm{SGD}}
    \ll
    \mathcal T_{L_2}
    \lesssim
    \mathcal T_{\mathrm{adapt}},
\]
once explicit \(L_2\) is large enough to matter. This ordering matches the observed forgetting hierarchy: low-LR SGD remembers, SGD with
sufficient \(L_2\) forgets, and adaptive methods forget fastest under the
diagnostic grid. In other words, the formulas explain the empirical hierarchy without requiring a
universal ordering: low-LR SGD has a tiny
\(\mathcal T_{\mathrm{SGD}}\), explicit \(L_2\) becomes effective when
\(\lambda\sum_k\eta_k\) is order one or larger, and adaptive methods have
a first-order movement clock \(\sum_k\eta_k\).
\paragraph{Numerical check.}
The timescale arithmetic explains the repair map of
Section~\ref{sec:erasing}.  Ignoring cosine decay and using
$K\!\approx\!E\,n_{\mathrm{train}}/b$, low-LR SGD at $b{=}128$ has
$\mathcal T_{\mathrm{SGD}}\!\approx\!7.3\!\cdot\!10^{-4}$ after $300$
epochs and only $\approx 1.2\!\cdot\!10^{-2}$ after $5{,}000$ epochs;
both regimes retain ${\sim}26$\,pp of spread.  At $b{=}16$ and
$\eta{=}10^{-2}$, the same timescale reaches ${\approx}\,4.7$ and the
spread drops sharply.  Explicit $L_2$ shows the same threshold: at
$b{=}128$, $\eta{=}10^{-2}$ with $\lambda{=}10^{-3}$ gives
$\mathcal T_{L_2}\!\approx\!0.94$ and still leaves $17.4$\,pp, whereas
$\lambda{=}5\!\times\!10^{-3}$ gives $\mathcal T_{L_2}\!\approx\!4.7$
and collapses the spread below $1$\,pp.  Adaptive methods erase
initialization-scale dependence without the SGD-style sweep, consistent
with the $K\eta$ scaling of $\mathcal T_{\mathrm{adapt}}$.
\paragraph{Empirical $|\beta(t)|$ collapse.}
If these timescales are the right dynamical clocks, then initialization sensitivity at different batch sizes should collapse when plotted
against~$\mathcal T$ rather than epoch count.  We test this with
$|\beta(t)|=|d\,\mathrm{ValAcc}/d\,\sigma_w|$ at each epoch, estimated
by ordinary least squares regression across the $25$-point $\sigma_w$ sweep ($10$~seeds).

Figure~\ref{fig:forgetting}(a--d) shows $|\beta|$ versus epoch: vanilla SGD stays flat near $10^{-1}$; Adam, AdamW, and Muon decay by $1$--$2$ orders of magnitude; the SGD ablation with $\eta{=}10^{-2}$ and
$L_2{=}10^{-2}$ reaches the adaptive noise floor.  Panels~(e--f) replot against the cumulative timescale~$\mathcal T$ for all five batch sizes ($b\in\{16,32,64,128,256\}$): under $\mathcal T_{\mathrm{SGD}}$
the vanilla-SGD curves approximately collapse, and adding SGD with $L_2$ (plotted against $\mathcal T_{L_2}$) shows the same decay trajectory as the adaptive methods.  Panel~(f) confirms that Adam curves at all five batch sizes align under
$\mathcal T_{\mathrm{adapt}}$.  The collapse is not exact---$|\beta|$ is a noisy estimator and the timescale derivations assume linear, scale-invariant dynamics---but the qualitative alignment across a $16{\times}$ range of batch sizes is clear.

\begin{summarybox}
\textbf{Message 3.} \textit{The right unit of forgetting is accumulated regularization, not
epoch count.  Initialization memory gives an empirical readout of
implicit and explicit regularization: the optimizer is not only fitting
the data, but also deciding how quickly the initial condition stops
mattering.}
\end{summarybox}

\section{Conclusion}
\label{sec:conclusion}
\vspace{-0.7em}
\paragraph{An apparent paradox, and its resolution.}
A line of work argues that the parameter--function map of deep networks is
strongly biased toward simple functions, and that this architectural prior
already accounts for
generalization~\citep{valle2019deep,mingard2020neural,mingard2025occam}.
Our results sharpen this picture in a way that initially looks paradoxical:
the training regimes that preserve initialization bias (low-LR SGD) leave generalization fragile to the initialization, while those that erase it (Adam, well-tuned SGD with weight decay) generalize uniformly well across the $\sigma_w$ sweep.  The standard ResNet recipe
($\eta{=}0.1$, momentum, $\lambda{=}5\!\times\!10^{-4}$, augmentation)
collapses initialization memory to $|\Delta|{=}0.0$\,pp at $b{=}128$
while achieving the highest test accuracy in our study, $94.2\%$
(Appendix~\ref{app:best_recipe}).  The Occam's razor that matters in practice is therefore not the one built into the parameter--function map at
initialization; it is the one imposed by the regularizers accumulated along
the trajectory---and the same mechanisms that close the $\sigw$ gap are the
ones that produce good generalization.

\vspace{-0.7em}
\paragraph{Implications, limitations, and future work.}
Our controlled study (6{,}250+ runs across optimizers, batch sizes, depths,
regularizers, normalizations, and training lengths) establishes that
interpolation does not imply forgetting, long, low-LR training does not
erase memory, and that forgetting requires accumulated regularization on
optimizer-dependent timescales ($K\eta^2/b$, $K\eta\lambda$, $K\eta$).
A practical, somewhat counterintuitive consequence is that designing
initialization schemes purely ``for predictor performance'' bring little payoff once the pipeline includes sufficient regularization: any initialization-dependent advantage is erased on the same timescale as the one that delivers
generalization.  The initialization design remains valuable as a
\emph{trainability} device---stabilizing optimization, enabling depth,
supporting shorter training horizons and controlling the signal
propagation---but it should not be expected to leave a persistent fingerprint
under modern, regularization-rich procedures.  Conversely, in gradient-flow-like
regimes (small~$\eta$, large~$b$, no weight decay) initialization-induced
bias does survive, and changing the prior at $t{=}0$ becomes a real lever on
the final predictor.

Our experiments are restricted to BatchNorm ResNets on CIFAR-10, and the
timescale expressions are organizing principles derived from linear and
scale-invariant arguments rather than full nonlinear
theorems---limitations shared with the implicit-bias literature we build
on~\citep{barrett2021implicit,smith2021origin,beneventano2023trajectories,
cattaneo2024implicit}.  Our $\sigma_w$ probe is also one-dimensional,
leaving bias distributions, layer-wise scalings, and $\mu$P
parameterizations unstudied.  The most pressing extensions are to
larger-scale settings (ImageNet, ViTs, LLM pretraining and
fine-tuning), where recent seed-dependence
results---across pretraining~\citep{vanderwal2025polypythias,fehlauer2025convergence,tong2026seedprints,li2026transformersborn}
and fine-tuning~\citep{dodge2020finetuning}---suggest initialization memory
may be both measurable and consequential, and to tighten the backward-error
framework into a quantitative theory of forgetting time for homogeneous and
approximately scale-invariant networks.

%\pier{Implications, Limitations, and Future work.}

%\pier{ideally, we list some limitations clearly, we motivate why they are the case, and we say that also the relevant literature has them (when possible).
%Then, in future work, I would say that we want to find workarounds to the limitations and understand also the effects of the other ingredients of the training pipeline.
%In implication, we say that:
%It makes no sense to find a better initialization procedure, unless their implicit bias can be kept through the training.
%So, new initialization procedures to stabilize or shorten training make sense, but not "for performance" if we run with regularization. This is unintuitive from a dynamical systems perspective. Something like that.
%}

% \newpage

\bibliographystyle{unsrt}
\bibliography{Draft/references_2}

\begin{thebibliography}{100}

\bibitem{zhang2021understanding}
Chiyuan Zhang, Samy Bengio, Moritz Hardt, Benjamin Recht, and Oriol Vinyals.
\newblock Understanding deep learning (still) requires rethinking generalization.
\newblock {\em Communications of the ACM}, 64(3):107--115, 2021.

\bibitem{strogatz2015nonlinear}
Steven~H. Strogatz.
\newblock {\em Nonlinear Dynamics and Chaos: With Applications to Physics, Biology, Chemistry, and Engineering}.
\newblock Westview Press, 2 edition, 2015.

\bibitem{hirsch2013differential}
Morris~W. Hirsch, Stephen Smale, and Robert~L. Devaney.
\newblock {\em Differential Equations, Dynamical Systems, and an Introduction to Chaos}.
\newblock Academic Press, 3 edition, 2013.

\bibitem{min2021explicit}
Hancheng Min, Salma Tarmoun, Ren{\'e} Vidal, and Enrique Mallada.
\newblock On the explicit role of initialization on the convergence and implicit bias of overparametrized linear networks.
\newblock In {\em Proceedings of the 38th International Conference on Machine Learning}, volume 139 of {\em Proceedings of Machine Learning Research}, pages 7760--7768. PMLR, 2021.

\bibitem{gruber2024role}
Oria Gruber and Haim Avron.
\newblock On the role of initialization on the implicit bias in deep linear networks, 2024.

\bibitem{valle2019deep}
Guillermo Valle-P{\'e}rez, Chico~Q. Camargo, and Ard~A. Louis.
\newblock Deep learning generalizes because the parameter--function map is biased towards simple functions.
\newblock In {\em International Conference on Learning Representations}, 2019.

\bibitem{mingard2025occam}
Chris Mingard, Henry Rees, Guillermo Valle-P{\'e}rez, and Ard~A. Louis.
\newblock Deep neural networks have an inbuilt occam's razor.
\newblock {\em Nature Communications}, 16:220, 2025.

\bibitem{fink2026deep}
Thomas Fink.
\newblock Deep-layered machines have a built-in occam's razor.
\newblock {\em arXiv preprint arXiv:2603.01217}, 2026.

\bibitem{glorot2010understanding}
Xavier Glorot and Yoshua Bengio.
\newblock Understanding the difficulty of training deep feedforward neural networks.
\newblock In {\em Proceedings of the Thirteenth International Conference on Artificial Intelligence and Statistics}, volume~9 of {\em Proceedings of Machine Learning Research}, pages 249--256. PMLR, 2010.

\bibitem{he2015delving}
Kaiming He, Xiangyu Zhang, Shaoqing Ren, and Jian Sun.
\newblock Delving deep into rectifiers: Surpassing human-level performance on imagenet classification.
\newblock In {\em Proceedings of the IEEE International Conference on Computer Vision}, pages 1026--1034, 2015.

\bibitem{poole2016exponential}
Ben Poole, Subhaneil Lahiri, Maithra Raghu, Jascha Sohl-Dickstein, and Surya Ganguli.
\newblock Exponential expressivity in deep neural networks through transient chaos.
\newblock In {\em Advances in Neural Information Processing Systems}, volume~29, 2016.

\bibitem{schoenholz2017deep}
Samuel~S. Schoenholz, Justin Gilmer, Surya Ganguli, and Jascha Sohl-Dickstein.
\newblock Deep information propagation.
\newblock In {\em International Conference on Learning Representations}, 2017.

\bibitem{pennington2017resurrecting}
Jeffrey Pennington, Samuel~S. Schoenholz, and Surya Ganguli.
\newblock Resurrecting the sigmoid in deep learning through dynamical isometry: Theory and practice.
\newblock In {\em Advances in Neural Information Processing Systems}, volume~30, 2017.

\bibitem{hanin2018start}
Boris Hanin and David Rolnick.
\newblock How to start training: The effect of initialization and architecture.
\newblock In {\em Advances in Neural Information Processing Systems}, volume~31, 2018.

\bibitem{hanin2018gradients}
Boris Hanin.
\newblock Which neural net architectures give rise to exploding and vanishing gradients?
\newblock In {\em Advances in Neural Information Processing Systems}, volume~31, 2018.

\bibitem{xiao2018dynamical}
Lechao Xiao, Yasaman Bahri, Jascha Sohl-Dickstein, Samuel~S. Schoenholz, and Jeffrey Pennington.
\newblock Dynamical isometry and a mean field theory of {CNN}s: How to train 10,000-layer vanilla convolutional neural networks.
\newblock In {\em Proceedings of the 35th International Conference on Machine Learning}, volume~80 of {\em Proceedings of Machine Learning Research}, pages 5393--5402. PMLR, 2018.

\bibitem{chen2018dynamical}
Minmin Chen, Jeffrey Pennington, and Samuel~S. Schoenholz.
\newblock Dynamical isometry and a mean field theory of {RNN}s: Gating enables signal propagation in recurrent neural networks.
\newblock In {\em Proceedings of the 35th International Conference on Machine Learning}, volume~80 of {\em Proceedings of Machine Learning Research}, pages 873--882. PMLR, 2018.

\bibitem{yang2021feature}
Greg Yang and Edward~J. Hu.
\newblock Tensor programs {IV}: Feature learning in infinite-width neural networks.
\newblock In {\em Proceedings of the 38th International Conference on Machine Learning}, volume 139 of {\em Proceedings of Machine Learning Research}, pages 11727--11737. PMLR, 2021.

\bibitem{yang2021tensorv}
Greg Yang, Edward~J. Hu, Igor Babuschkin, Szymon Sidor, Xiaodong Liu, David Farhi, Nick Ryder, Jakub Pachocki, Weizhu Chen, and Jianfeng Gao.
\newblock Tensor programs {V}: Tuning large neural networks via zero-shot hyperparameter transfer.
\newblock In {\em Advances in Neural Information Processing Systems}, volume~34, 2021.

\bibitem{bordelon2023self}
Blake Bordelon and Cengiz Pehlevan.
\newblock Self-consistent dynamical field theory of kernel evolution in wide neural networks.
\newblock {\em Journal of Statistical Mechanics: Theory and Experiment}, 2023(11):114009, 2023.

\bibitem{bordelon2024finite}
Blake Bordelon and Cengiz Pehlevan.
\newblock Dynamics of finite width kernel and prediction fluctuations in mean field neural networks.
\newblock {\em Journal of Statistical Mechanics: Theory and Experiment}, 2024(10):104021, 2024.

\bibitem{bordelon2025deep}
Blake Bordelon and Cengiz Pehlevan.
\newblock Deep linear network training dynamics from random initialization: Data, width, depth, and hyperparameter transfer.
\newblock In {\em Proceedings of the 42nd International Conference on Machine Learning}, volume 267 of {\em Proceedings of Machine Learning Research}, pages 4968--4997. PMLR, 2025.

\bibitem{lauditi2025adaptive}
Clarissa Lauditi, Blake Bordelon, and Cengiz Pehlevan.
\newblock Adaptive kernel predictors from feature-learning infinite limits of neural networks.
\newblock In {\em Proceedings of the 42nd International Conference on Machine Learning}, volume 267 of {\em Proceedings of Machine Learning Research}, pages 32617--32648. PMLR, 2025.

\bibitem{dodge2020finetuning}
Jesse Dodge, Gabriel Ilharco, Roy Schwartz, Ali Farhadi, Hannaneh Hajishirzi, and Noah~A. Smith.
\newblock Fine-tuning pretrained language models: Weight initializations, data orders, and early stopping.
\newblock {\em CoRR}, abs/2002.06305, 2020.

\bibitem{vanderwal2025polypythias}
Oskar van~der Wal, Pietro Lesci, Max M{\"u}ller-Eberstein, Naomi Saphra, Hailey Schoelkopf, Willem~H. Zuidema, and Stella~R. Biderman.
\newblock {PolyPythias}: Stability and outliers across fifty language model pre-training runs.
\newblock In {\em International Conference on Learning Representations}, 2025.

\bibitem{fehlauer2025convergence}
Finlay Fehlauer, Kyle Mahowald, and Tiago Pimentel.
\newblock Convergence and divergence of language models under different random seeds.
\newblock In {\em Proceedings of the 2025 Conference on Empirical Methods in Natural Language Processing}, pages 32982--32991, Suzhou, China, 2025. Association for Computational Linguistics.

\bibitem{tong2026seedprints}
Yao Tong, Haonan Wang, Siquan Li, Kenji Kawaguchi, and Tianyang Hu.
\newblock {SeedPrints}: Fingerprints can even tell which seed your large language model was trained from.
\newblock In {\em International Conference on Learning Representations}, 2026.
\newblock Poster.

\bibitem{li2026transformersborn}
Siquan Li, Yao Tong, Haonan Wang, and Tianyang Hu.
\newblock Transformers are born biased: Structural inductive biases at random initialization and their practical consequences, 2026.

\bibitem{allenzhu2024physics31}
Zeyuan Allen-Zhu and Yuanzhi Li.
\newblock Physics of language models: Part 3.1, knowledge storage and extraction.
\newblock In {\em Proceedings of the 41st International Conference on Machine Learning}, volume 235 of {\em Proceedings of Machine Learning Research}, pages 1067--1077. PMLR, 2024.

\bibitem{allenzhu2025physics41}
Zeyuan Allen-Zhu.
\newblock Physics of language models: Part 4.1, architecture design and the magic of canon layers.
\newblock In {\em Proceedings of the 39th Conference on Neural Information Processing Systems}, NeurIPS~'25, 2025.
\newblock Full version available at \url{https://ssrn.com/abstract=5240330}.

\bibitem{li2020reconciling}
Zhiyuan Li, Kaifeng Lyu, and Sanjeev Arora.
\newblock Reconciling modern deep learning with traditional optimization analyses: The intrinsic learning rate.
\newblock In {\em Advances in Neural Information Processing Systems}, volume~33, 2020.

\bibitem{barrett2021implicit}
David G.~T. Barrett and Benoit Dherin.
\newblock Implicit gradient regularization.
\newblock In {\em International Conference on Learning Representations}, 2021.

\bibitem{smith2021origin}
Samuel~L. Smith, Benoit Dherin, David G.~T. Barrett, and Soham De.
\newblock On the origin of implicit regularization in stochastic gradient descent.
\newblock In {\em International Conference on Learning Representations}, 2021.

\bibitem{beneventano2023trajectories}
Pierfrancesco Beneventano.
\newblock On the trajectories of sgd without replacement.
\newblock {\em arXiv preprint arXiv:2312.16143}, 2023.

\bibitem{beneventano2024support}
Pierfrancesco Beneventano, Andrea Pinto, and Tomaso Poggio.
\newblock How neural networks learn the support is an implicit regularization effect of {SGD}.
\newblock 2024.

\bibitem{griffiths1986scope}
David~F. Griffiths and J.~M. Sanz-Serna.
\newblock On the scope of the method of modified equations.
\newblock {\em SIAM Journal on Scientific and Statistical Computing}, 7(3):994--1008, 1986.

\bibitem{hairer2006geometric}
Ernst Hairer, Christian Lubich, and Gerhard Wanner.
\newblock {\em Geometric Numerical Integration: Structure-Preserving Algorithms for Ordinary Differential Equations}, volume~31 of {\em Springer Series in Computational Mathematics}.
\newblock Springer, Berlin, Heidelberg, 2 edition, 2006.

\bibitem{ghosh2023implicit}
Avrajit Ghosh, He~Lyu, Xitong Zhang, and Rongrong Wang.
\newblock Implicit regularization in heavy-ball momentum accelerated stochastic gradient descent.
\newblock {\em arXiv preprint arXiv:2302.00849}, 2023.

\bibitem{cattaneo2024implicit}
Matias~D. Cattaneo, Jason~Matthew Klusowski, and Boris Shigida.
\newblock On the implicit bias of {A}dam.
\newblock In {\em Proceedings of the 41st International Conference on Machine Learning}, volume 235 of {\em Proceedings of Machine Learning Research}, pages 5862--5906. PMLR, 2024.

\bibitem{arora2019implicit}
Sanjeev Arora, Nadav Cohen, Wei Hu, and Yuping Luo.
\newblock Implicit regularization in deep matrix factorization.
\newblock In {\em Advances in Neural Information Processing Systems}, volume~32, 2019.

\bibitem{beneventano2025gradient}
Pierfrancesco Beneventano and Blake Woodworth.
\newblock Gradient descent converges linearly to flatter minima than gradient flow in shallow linear networks.
\newblock {\em arXiv preprint arXiv:2501.09137}, 2025.

\bibitem{mingard2020neural}
Chris Mingard, Joar Skalse, Guillermo Valle-P{\'e}rez, David Mart{\'i}nez-Rubio, Vladimir Mikulik, and Ard~A. Louis.
\newblock Neural networks are a priori biased towards boolean functions with low entropy, 2020.

\bibitem{vapnik1971uniform}
Vladimir~N. Vapnik and Alexey~Ya. Chervonenkis.
\newblock On the uniform convergence of relative frequencies of events to their probabilities.
\newblock {\em Theory of Probability and Its Applications}, 16(2):264--280, 1971.

\bibitem{vapnik1998statistical}
Vladimir~N. Vapnik.
\newblock {\em Statistical Learning Theory}.
\newblock Wiley, 1998.

\bibitem{bartlett1998sample}
Peter~L. Bartlett.
\newblock The sample complexity of pattern classification with neural networks: The size of the weights is more important than the size of the network.
\newblock {\em IEEE Transactions on Information Theory}, 44(2):525--536, 1998.

\bibitem{bartlett2002rademacher}
Peter~L. Bartlett and Shahar Mendelson.
\newblock Rademacher and gaussian complexities: Risk bounds and structural results.
\newblock {\em Journal of Machine Learning Research}, 3:463--482, 2002.

\bibitem{neyshabur2015norm}
Behnam Neyshabur, Ryota Tomioka, and Nathan Srebro.
\newblock Norm-based capacity control in neural networks.
\newblock In {\em Proceedings of The 28th Conference on Learning Theory}, volume~40 of {\em Proceedings of Machine Learning Research}, pages 1376--1401. PMLR, 2015.

\bibitem{bartlett2017spectrally}
Peter~L. Bartlett, Dylan~J. Foster, and Matus~J. Telgarsky.
\newblock Spectrally-normalized margin bounds for neural networks.
\newblock In {\em Advances in Neural Information Processing Systems}, volume~30, 2017.

\bibitem{zhang2017understanding}
Chiyuan Zhang, Samy Bengio, Moritz Hardt, Benjamin Recht, and Oriol Vinyals.
\newblock Understanding deep learning requires rethinking generalization.
\newblock In {\em International Conference on Learning Representations}, 2017.

\bibitem{arpit2017closer}
Devansh Arpit, Stanislaw Jastrzebski, Nicolas Ballas, David Krueger, Emmanuel Bengio, Maxinder~S. Kanwal, Tegan Maharaj, Asja Fischer, Aaron Courville, Yoshua Bengio, and Simon Lacoste-Julien.
\newblock A closer look at memorization in deep networks.
\newblock In {\em Proceedings of the 34th International Conference on Machine Learning}, volume~70 of {\em Proceedings of Machine Learning Research}, pages 233--242. PMLR, 2017.

\bibitem{nakkiran2019sgd}
Preetum Nakkiran, Gal Kaplun, Dimitris Kalimeris, Tristan Yang, Benjamin~L. Edelman, Fred Zhang, and Boaz Barak.
\newblock {SGD} on neural networks learns functions of increasing complexity.
\newblock In {\em Advances in Neural Information Processing Systems}, volume~32, 2019.

\bibitem{hardt2016train}
Moritz Hardt, Benjamin Recht, and Yoram Singer.
\newblock Train faster, generalize better: Stability of stochastic gradient descent.
\newblock In {\em Proceedings of the 33rd International Conference on Machine Learning}, volume~48 of {\em Proceedings of Machine Learning Research}, pages 1225--1234. PMLR, 2016.

\bibitem{neyshabur2017exploring}
Behnam Neyshabur, Srinadh Bhojanapalli, David McAllester, and Nathan Srebro.
\newblock Exploring generalization in deep learning.
\newblock In {\em Advances in Neural Information Processing Systems}, volume~30, 2017.

\bibitem{neyshabur2018pacbayes}
Behnam Neyshabur, Srinadh Bhojanapalli, David McAllester, and Nathan Srebro.
\newblock A {PAC}-bayesian approach to spectrally-normalized margin bounds for neural networks.
\newblock In {\em International Conference on Learning Representations}, 2018.

\bibitem{jiang2019predicting}
Yiding Jiang, Dilip Krishnan, Hossein Mobahi, and Samy Bengio.
\newblock Predicting the generalization gap in deep networks with margin distributions.
\newblock In {\em International Conference on Learning Representations}, 2019.

\bibitem{jiang2020fantastic}
Yiding Jiang, Behnam Neyshabur, Hossein Mobahi, Dilip Krishnan, and Samy Bengio.
\newblock Fantastic generalization measures and where to find them.
\newblock In {\em International Conference on Learning Representations}, 2020.

\bibitem{hochreiter1997flat}
Sepp Hochreiter and J{\"u}rgen Schmidhuber.
\newblock Flat minima.
\newblock {\em Neural Computation}, 9(1):1--42, 1997.

\bibitem{keskar2017large}
Nitish~Shirish Keskar, Dheevatsa Mudigere, Jorge Nocedal, Mikhail Smelyanskiy, and Ping Tak~Peter Tang.
\newblock On large-batch training for deep learning: Generalization gap and sharp minima.
\newblock In {\em International Conference on Learning Representations}, 2017.

\bibitem{dinh2017sharp}
Laurent Dinh, Razvan Pascanu, Samy Bengio, and Yoshua Bengio.
\newblock Sharp minima can generalize for deep nets.
\newblock In {\em Proceedings of the 34th International Conference on Machine Learning}, volume~70 of {\em Proceedings of Machine Learning Research}, pages 1019--1028. PMLR, 2017.

\bibitem{foret2021sam}
Pierre Foret, Ariel Kleiner, Hossein Mobahi, and Behnam Neyshabur.
\newblock Sharpness-aware minimization for efficiently improving generalization.
\newblock In {\em International Conference on Learning Representations}, 2021.

\bibitem{mcallester1999pac}
David~A. McAllester.
\newblock {PAC}-bayesian model averaging.
\newblock In {\em Proceedings of the Twelfth Annual Conference on Computational Learning Theory}, pages 164--170, 1999.

\bibitem{dziugaite2017computing}
Gintare~Karolina Dziugaite and Daniel~M. Roy.
\newblock Computing nonvacuous generalization bounds for deep (stochastic) neural networks with many more parameters than training data.
\newblock In {\em Proceedings of the Thirty-Third Conference on Uncertainty in Artificial Intelligence}, 2017.

\bibitem{zhou2019nonvacuous}
Wenda Zhou, Victor Veitch, Morgane Austern, Ryan~P. Adams, and Peter Orbanz.
\newblock Non-vacuous generalization bounds at the {ImageNet} scale: A {PAC}-bayesian compression approach.
\newblock In {\em International Conference on Learning Representations}, 2019.

\bibitem{arora2018stronger}
Sanjeev Arora, Rong Ge, Behnam Neyshabur, and Yi~Zhang.
\newblock Stronger generalization bounds for deep nets via a compression approach.
\newblock In {\em International Conference on Learning Representations}, 2018.

\bibitem{belkin2019reconciling}
Mikhail Belkin, Daniel Hsu, Siyuan Ma, and Soumik Mandal.
\newblock Reconciling modern machine-learning practice and the classical bias--variance trade-off.
\newblock {\em Proceedings of the National Academy of Sciences}, 116(32):15849--15854, 2019.

\bibitem{nakkiran2020deep}
Preetum Nakkiran, Gal Kaplun, Yamini Bansal, Tristan Yang, Boaz Barak, and Ilya Sutskever.
\newblock Deep double descent: Where bigger models and more data hurt.
\newblock In {\em International Conference on Learning Representations}, 2020.

\bibitem{bartlett2020benign}
Peter~L. Bartlett, Philip~M. Long, G{\'a}bor Lugosi, and Alexander Tsigler.
\newblock Benign overfitting in linear regression.
\newblock {\em Proceedings of the National Academy of Sciences}, 117(48):30063--30070, 2020.

\bibitem{neal1996bayesian}
Radford~M. Neal.
\newblock {\em Bayesian Learning for Neural Networks}, volume 118 of {\em Lecture Notes in Statistics}.
\newblock Springer, 1996.

\bibitem{williams1996computing}
Christopher K.~I. Williams.
\newblock Computing with infinite networks.
\newblock In {\em Advances in Neural Information Processing Systems}, volume~9, 1996.

\bibitem{lee2018deep}
Jaehoon Lee, Yasaman Bahri, Roman Novak, Samuel~S. Schoenholz, Jeffrey Pennington, and Jascha Sohl-Dickstein.
\newblock Deep neural networks as gaussian processes.
\newblock In {\em International Conference on Learning Representations}, 2018.

\bibitem{jacot2018neural}
Arthur Jacot, Franck Gabriel, and Cl{\'e}ment Hongler.
\newblock Neural tangent kernel: Convergence and generalization in neural networks.
\newblock In {\em Advances in Neural Information Processing Systems}, volume~31, 2018.

\bibitem{dingle2018input}
Kamaludin Dingle, Chico~Q. Camargo, and Ard~A. Louis.
\newblock Input--output maps are strongly biased towards simple outputs.
\newblock {\em Nature Communications}, 9:761, 2018.

\bibitem{depalma2019random}
Giacomo De~Palma, Bobak Kiani, and Seth Lloyd.
\newblock Random deep neural networks are biased towards simple functions.
\newblock In {\em Advances in Neural Information Processing Systems}, volume~32, 2019.

\bibitem{lempel1976complexity}
Abraham Lempel and Jacob Ziv.
\newblock On the complexity of finite sequences.
\newblock {\em IEEE Transactions on Information Theory}, 22(1):75--81, 1976.

\bibitem{ziv1977universal}
Jacob Ziv and Abraham Lempel.
\newblock A universal algorithm for sequential data compression.
\newblock {\em IEEE Transactions on Information Theory}, 23(3):337--343, 1977.

\bibitem{li2008kolmogorov}
Ming Li and Paul M.~B. Vit{\'a}nyi.
\newblock {\em An Introduction to Kolmogorov Complexity and Its Applications}.
\newblock Springer, 3 edition, 2008.

\bibitem{bhattamishra2023simplicity}
Satwik Bhattamishra, Arkil Patel, Varun Kanade, and Phil Blunsom.
\newblock Simplicity bias in transformers and their ability to learn sparse {B}oolean functions.
\newblock In {\em Proceedings of the 61st Annual Meeting of the Association for Computational Linguistics (Volume 1: Long Papers)}, pages 5767--5791, Toronto, Canada, 2023. Association for Computational Linguistics.

\bibitem{hahn2024sensitive}
Michael Hahn and Mark Rofin.
\newblock Why are sensitive functions hard for transformers?
\newblock In {\em Proceedings of the 62nd Annual Meeting of the Association for Computational Linguistics (Volume 1: Long Papers)}, pages 14973--15008, Bangkok, Thailand, 2024. Association for Computational Linguistics.

\bibitem{vasudeva2025transformers}
Bhavya Vasudeva, Deqing Fu, Tianyi Zhou, Elliott Kau, Youqi Huang, and Vatsal Sharan.
\newblock Transformers learn low sensitivity functions: Investigations and implications.
\newblock In {\em International Conference on Learning Representations}, 2025.

\bibitem{rahaman2019spectral}
Nasim Rahaman, Aristide Baratin, Devansh Arpit, Felix Draxler, Min Lin, Fred~A. Hamprecht, Yoshua Bengio, and Aaron Courville.
\newblock On the spectral bias of neural networks.
\newblock In {\em Proceedings of the 36th International Conference on Machine Learning}, volume~97 of {\em Proceedings of Machine Learning Research}, pages 5301--5310. PMLR, 2019.

\bibitem{novak2018sensitivity}
Roman Novak, Yasaman Bahri, Daniel~A. Abolafia, Jeffrey Pennington, and Jascha Sohl-Dickstein.
\newblock Sensitivity and generalization in neural networks: An empirical study.
\newblock In {\em International Conference on Learning Representations}, 2018.

\bibitem{hanin2019complexity}
Boris Hanin and David Rolnick.
\newblock Complexity of linear regions in deep networks.
\newblock In {\em Proceedings of the 36th International Conference on Machine Learning}, volume~97 of {\em Proceedings of Machine Learning Research}, pages 2596--2604. PMLR, 2019.

\bibitem{hanin2019activation}
Boris Hanin and David Rolnick.
\newblock Deep {ReLU} networks have surprisingly few activation patterns.
\newblock In {\em Advances in Neural Information Processing Systems}, volume~32, pages 359--368, 2019.

\bibitem{dherin2022geometric}
Benoit Dherin, Michael Munn, Mihaela Rosca, and David G.~T. Barrett.
\newblock Why neural networks find simple solutions: The many regularizers of geometric complexity.
\newblock In {\em Advances in Neural Information Processing Systems}, volume~35, 2022.

\bibitem{refinetti2023neural}
Maria Refinetti, Alessandro Ingrosso, and Sebastian Goldt.
\newblock Neural networks trained with {SGD} learn distributions of increasing complexity.
\newblock In {\em Proceedings of the 40th International Conference on Machine Learning}, volume 202 of {\em Proceedings of Machine Learning Research}, pages 28843--28863. PMLR, 2023.

\bibitem{boursier2024simplicity}
Etienne Boursier and Nicolas Flammarion.
\newblock Simplicity bias and optimization threshold in two-layer {ReLU} networks, 2024.

\bibitem{zhang2026saddle}
Yedi Zhang, Andrew~M. Saxe, and Peter~E. Latham.
\newblock Saddle-to-saddle dynamics explains a simplicity bias across neural network architectures.
\newblock In {\em International Conference on Learning Representations}, 2026.
\newblock Poster.

\bibitem{saxe2014exact}
Andrew~M. Saxe, James~L. McClelland, and Surya Ganguli.
\newblock Exact solutions to the nonlinear dynamics of learning in deep linear neural networks.
\newblock In {\em International Conference on Learning Representations}, 2014.

\bibitem{blumenfeld2020beyond}
Yaniv Blumenfeld, Dar Gilboa, and Daniel Soudry.
\newblock Beyond signal propagation: Is feature diversity necessary in deep neural network initialization?
\newblock In {\em Proceedings of the 37th International Conference on Machine Learning}, volume 119 of {\em Proceedings of Machine Learning Research}, pages 960--969. PMLR, 2020.

\bibitem{yang2020tensor}
Greg Yang.
\newblock Tensor programs {I}: Wide feedforward or recurrent neural networks of any architecture are gaussian processes, 2020.

\bibitem{ioffe2015batch}
Sergey Ioffe and Christian Szegedy.
\newblock Batch normalization: Accelerating deep network training by reducing internal covariate shift.
\newblock In {\em Proceedings of the 32nd International Conference on Machine Learning}, volume~37 of {\em Proceedings of Machine Learning Research}, pages 448--456. PMLR, 2015.

\bibitem{arora2019auto}
Sanjeev Arora, Kaifeng Lyu, and Zhiyuan Li.
\newblock Theoretical analysis of auto rate-tuning by batch normalization.
\newblock In {\em International Conference on Learning Representations}, 2019.

\bibitem{li2020intrinsic}
Zhiyuan Li, Kaifeng Lyu, and Sanjeev Arora.
\newblock Reconciling modern deep learning with traditional optimization analyses: The intrinsic learning rate.
\newblock In {\em Advances in Neural Information Processing Systems}, volume~33, 2020.

\bibitem{wilkinson1963rounding}
James~Hardy Wilkinson.
\newblock {\em Rounding Errors in Algebraic Processes}.
\newblock Number~32 in Notes on Applied Science. Her Majesty's Stationery Office, London, 1963.

\bibitem{wilkinson1965algebraic}
James~Hardy Wilkinson.
\newblock {\em The Algebraic Eigenvalue Problem}.
\newblock Monographs on Numerical Analysis. Clarendon Press, Oxford, 1965.

\bibitem{higham2002accuracy}
Nicholas~J. Higham.
\newblock {\em Accuracy and Stability of Numerical Algorithms}.
\newblock Society for Industrial and Applied Mathematics, Philadelphia, PA, 2 edition, 2002.

\bibitem{calvo1994modified}
M.~P. Calvo, Ander Murua, and J.~M. Sanz-Serna.
\newblock Modified equations for {ODEs}.
\newblock In Peter~E. Kloeden and Kenneth~J. Palmer, editors, {\em Chaotic Numerics}, volume 172 of {\em Contemporary Mathematics}, pages 63--74. American Mathematical Society, Providence, RI, 1994.

\bibitem{shardlow2006modified}
Tony Shardlow.
\newblock Modified equations for stochastic differential equations.
\newblock {\em BIT Numerical Mathematics}, 46(1):111--125, 2006.

\bibitem{zygalakis2011existence}
Konstantinos~C. Zygalakis.
\newblock On the existence and the applications of modified equations for stochastic differential equations.
\newblock {\em SIAM Journal on Scientific Computing}, 33(1):102--130, 2011.

\bibitem{debussche2012weak}
Arnaud Debussche and Erwan Faou.
\newblock Weak backward error analysis for {SDEs}.
\newblock {\em SIAM Journal on Numerical Analysis}, 50(3):1735--1752, 2012.

\bibitem{feng2020uniform}
Yuanyuan Feng, Tingran Gao, Lei Li, Jian-Guo Liu, and Yulong Lu.
\newblock Uniform-in-time weak error analysis for stochastic gradient descent algorithms via diffusion approximation.
\newblock {\em Communications in Mathematical Sciences}, 18(1):163--188, 2020.

\bibitem{li2017stochastic}
Qianxiao Li, Cheng Tai, and Weinan {E}.
\newblock Stochastic modified equations and adaptive stochastic gradient algorithms.
\newblock In {\em Proceedings of the 34th International Conference on Machine Learning}, volume~70 of {\em Proceedings of Machine Learning Research}, pages 2101--2110. PMLR, 2017.

\bibitem{li2019stochastic}
Qianxiao Li, Cheng Tai, and Weinan {E}.
\newblock Stochastic modified equations and dynamics of stochastic gradient algorithms {I}: Mathematical foundations.
\newblock {\em Journal of Machine Learning Research}, 20(40):1--47, 2019.

\bibitem{miyagawa2022toward}
Taiki Miyagawa.
\newblock Toward equation of motion for deep neural networks: Continuous-time gradient descent and discretization error analysis.
\newblock In {\em Advances in Neural Information Processing Systems}, volume~35, pages 37778--37791, 2022.

\bibitem{rosca2023continuous}
Mihaela Rosca, Yan Wu, Chongli Qin, and Benoit Dherin.
\newblock On a continuous time model of gradient descent dynamics and instability in deep learning.
\newblock {\em arXiv preprint arXiv:2302.01952}, 2023.

\bibitem{cattaneo2025modified}
Matias~D Cattaneo and Boris Shigida.
\newblock Modified loss of momentum gradient descent: Fine-grained analysis.
\newblock {\em arXiv preprint arXiv:2509.08483}, 2025.

\bibitem{digiovacchino2024backward}
Stefano Di~Giovacchino, Desmond~J. Higham, and Konstantinos~C. Zygalakis.
\newblock Backward error analysis and the qualitative behaviour of stochastic optimization algorithms: Application to stochastic coordinate descent.
\newblock {\em Journal of Computational Dynamics}, 11(4):453--467, 2024.

\bibitem{cattaneo2026memory}
Matias Cattaneo and Boris Shigida.
\newblock How memory in optimization algorithms implicitly modifies the loss.
\newblock {\em Advances in Neural Information Processing Systems}, 38:156059--156096, 2026.

\bibitem{cattaneo2026effect}
Matias~D Cattaneo and Boris Shigida.
\newblock The effect of mini-batch noise on the implicit bias of adam.
\newblock {\em arXiv preprint arXiv:2602.01642}, 2026.

\bibitem{soudry2018implicit}
Daniel Soudry, Elad Hoffer, Mor~Shpigel Nacson, Suriya Gunasekar, and Nathan Srebro.
\newblock The implicit bias of gradient descent on separable data.
\newblock {\em Journal of Machine Learning Research}, 19(70):1--57, 2018.

\bibitem{gunasekar2018implicit}
Suriya Gunasekar, Jason~D. Lee, Daniel Soudry, and Nathan Srebro.
\newblock Implicit bias of gradient descent on linear convolutional networks.
\newblock In {\em Advances in Neural Information Processing Systems}, volume~31, 2018.

\bibitem{lyu2020gradient}
Kaifeng Lyu and Jian Li.
\newblock Gradient descent maximizes the margin of homogeneous neural networks.
\newblock In {\em International Conference on Learning Representations}, 2020.

\bibitem{hoffer2017train}
Elad Hoffer, Itay Hubara, and Daniel Soudry.
\newblock Train longer, generalize better: Closing the generalization gap in large batch training of neural networks.
\newblock In {\em Advances in Neural Information Processing Systems}, volume~30, 2017.

\bibitem{mandt2017sgd}
Stephan Mandt, Matthew~D. Hoffman, and David~M. Blei.
\newblock Stochastic gradient descent as approximate bayesian inference.
\newblock {\em Journal of Machine Learning Research}, 18(134):1--35, 2017.

\bibitem{wilson2017marginal}
Ashia~C. Wilson, Rebecca Roelofs, Mitchell Stern, Nathan Srebro, and Benjamin Recht.
\newblock The marginal value of adaptive gradient methods in machine learning.
\newblock In {\em Advances in Neural Information Processing Systems}, volume~30, 2017.

\end{thebibliography}

\appendix

% ============================================================
% Updated Further Related Work Appendix
% Drop this file after \appendix in the paper.
% It is intended to replace the main-body Related Work section.
% Requires natbib-style citations, e.g. \usepackage{natbib}.
% The paper/template should also load \usepackage{url} or hyperref.
% ============================================================
\clearpage
\section{Experimental Details}
\label{app:experiments}

This appendix records the setups for all experiments. All runs are
implemented in Keras 3 / TensorFlow on a single NVIDIA GPU per run
(H100, H200, or L40S). A summary is given in
Table~\ref{tab:opt_specs}; per-experiment paragraphs follow.

\paragraph{Common ingredients (all experiments).}
\begin{itemize}[leftmargin=1.2em,itemsep=0.05em]
\item \emph{Data.} CIFAR-10 with the standard $50{,}000$ training and
$10{,}000$ test images. We hold out a fixed $10{,}000$-image validation
set from the training partition using permutation seed $42$, giving a
$40{,}000$/$10{,}000$/$10{,}000$ train/validation/test split that is
identical across every experiment. Inputs are scaled to $[0,1]$.
\item \emph{Initialization.} Convolutional and dense kernels are drawn
from $W_{ij}\sim\mathcal N\!\big(0,\sigma_w^{2}/\mathrm{fan}_{\mathrm{in}}\big)$,
so $\sigma_w=1$ corresponds to a fan-in normal baseline. Biases
are drawn from $\mathcal N(0,0.2^{2})$ and are not rescaled by $\sigma_w$.
\item \emph{Loss.} Sparse categorical cross-entropy from logits.
\item \emph{Checkpoint selection.} For each run we save and report
metrics at the validation-loss-minimizing epoch $\taubest$, and record
$\tauinterp=\min\{t:\TrainAcc_t\ge 99.5\%\}$ and the kernel Frobenius
norm $\|W\|_F$.
\item \emph{Reproducibility.} Each run sets Python, NumPy, and
TensorFlow random seeds to the same integer.
\end{itemize}

\paragraph{Main ResNet-9 grid (\S\ref{sec:memory}, App.~\ref{app:tables}).}
ResNet-9 with BatchNorm, no augmentation, $300$ epochs. Learning rate
follows the cosine schedule
$\eta_k = \eta_0\big[\alpha + (1-\alpha)\tfrac{1}{2}(1+\cos(\pi k/K))\big]$
with $\eta_0=10^{-3}$, $\alpha=0.01$, and total updates
$K=300\,\lceil 40{,}000/b\rceil$. The grid varies three axes:
$\sigma_w\in\{0.10,0.20,\ldots,2.50\}$ ($25$ values), optimizer in
\{SGD, SGD-momentum, Adam, AdamW, Muon\} (Table~\ref{tab:opt_specs}),
and batch size $b\in\{16,32,64,128,256\}$. Each of the
$25\times 5\times 5=625$ configurations is run for $10$ seeds
$\{0,1,42,123,456,789,999,2024,2025,2026\}$, for a total of $6{,}250$
trained models.

\paragraph{$5{,}000$-epoch SGD (\S\ref{sec:memory}, \S\ref{sec:timescales}).}
Vanilla SGD ($\eta=10^{-3}$, momentum~$0$) trained for $5{,}000$ epochs
\emph{with constant learning rate} (no cosine decay), no augmentation,
no weight decay or $L_2$. Five scales $\sigma_w\in\{0.1,0.5,1.0,1.8,2.5\}$,
batch sizes $b\in\{16,128\}$, single seed experiment.

\paragraph{LR$\times L_2$ sweep for SGD (\S\ref{sec:erasing},
Table~\ref{tab:lr_l2_sweep}).}
Vanilla SGD (no momentum), $300$ epochs, cosine schedule. Twelve
configurations $\eta\in\{10^{-3},10^{-2},10^{-1}\}$ paired with explicit
kernel $L_2$ regularization
$\lambda\in\{0,10^{-3},5\!\times\!10^{-3},10^{-2}\}$, at five scales
$\sigma_w\in\{0.1,0.5,1.0,1.8,2.5\}$ and batch sizes $b\in\{16,128\}$.

\paragraph{Norm and augmentation (App.~\ref{app:norm},
Fig.~\ref{fig:norm_comparison}).}
Two normalization variants -- BatchNorm, LayerNorm --
under standard data augmentation (random horizontal flip +
$4$-pixel reflective padding + random $32\!\times\!32$ crop). $300$
epochs cosine, $\eta=10^{-3}$, no weight decay. Two optimizers (SGD
without momentum, Adam), $\sigma_w\in\{0.1,0.5,1.0,1.8,2.5\}$,
$b\in\{16,128\}$, $5$ seeds.

\paragraph{Best-procedure comparison (App.~\ref{app:best_recipe},
Table~\ref{tab:best_recipe}).}
\textbf{$200$ epochs}, cosine schedule, with augmentation.
Two procedures: SGD ($\eta=0.1$, momentum $0.9$, weight decay
$5\!\times\!10^{-4}$) and Adam ($\eta=10^{-3}$, no weight decay), at the
two extreme scales $\sigma_w\in\{0.1,2.5\}$ and batch sizes
$b\in\{16,128,256\}$, $10$ seeds.

\paragraph{Depth (App.~\ref{app:depth}, Fig.~\ref{fig:depth_stress}).}
ResNet-56, ResNet-110 (CIFAR-style architectures, $6n+2$ layers with
$n=9$ and $n=18$ respectively), and an R9-AvgPool control that replaces
ResNet-9's global max-pool with global average-pool. All use $300$
epochs cosine, $\eta_0=10^{-3}$, the five main-grid optimizers,
$\sigma_w\in\{0.1,0.5,1.0,1.8,2.5\}$, and $b\in\{16,128\}$, with $10$
seeds. ResNet-56 and ResNet-110 use early stopping with patience $30$
(without altering the cosine schedule); R9-AvgPool does not.

\paragraph{Optimizer settings.}
Table~\ref{tab:opt_specs} lists the per-optimizer settings used in the
main grid. All five optimizers share $\eta_0=10^{-3}$ with cosine decay.
The SGD family is run with zero weight decay; AdamW and Muon retain
their default decoupled weight decay $\lambda=10^{-4}$ as part of the
optimizer definition. Adam, which has \emph{no} weight decay, exhibits
the same small spreads as AdamW and Muon, so the forgetting effect
attributed to adaptive methods is not driven by AdamW's or Muon's
$\lambda=10^{-4}$.

\begin{table}[h]
\centering\small
\caption{\textbf{Main-grid optimizer settings.}
All optimizers use $\eta_0=10^{-3}$ with cosine decay
($\alpha=0.01$) over $300$ epochs. Adam-family $\beta_1=0.9$
(momentum-like) and $\beta_2=0.999$ (variance) are the Keras defaults.}
\label{tab:opt_specs}
\begin{tabular}{lcccl}
\toprule
Optimizer & $\eta_0$ & Momentum & Weight decay $\lambda$ & Notes \\
\midrule
SGD              & $10^{-3}$ & $0$    & $0$       & --- \\
SGD with momentum & $10^{-3}$ & $0.9$  & $0$       & --- \\
Adam             & $10^{-3}$ & $\beta_1{=}0.9$ & $0$ & $\beta_2{=}0.999$ (Keras default) \\
AdamW            & $10^{-3}$ & $\beta_1{=}0.9$ & $10^{-4}$ & Decoupled weight decay (default) \\
Muon             & $10^{-3}$ & ---    & $10^{-4}$ & Output dense and embeddings excluded \\
\bottomrule
\end{tabular}
\end{table}

\paragraph{Reproducibility.}
For each run, Python, NumPy, and TensorFlow random seeds are set to a
single integer, so initialization, minibatch order, and all stochastic
operations are jointly determined by it. 

\section{Supplementary Tables}\label{app:tables}

Tables~\ref{tab:ckpt_low} and~\ref{tab:ckpt_high} report validation accuracy at three training checkpoints for the two extreme initialization
scales, $\sigma_w = 0.1$  and $\sigma_w = 2.5$.
The \emph{repair gap}
$\Delta_{\mathrm{repair}} \coloneqq
\mathrm{ValAcc}_{\tau_{\mathrm{best}}} -
\mathrm{ValAcc}_{\tau_{\mathrm{interp}}}$
quantifies how much validation accuracy changes between the
interpolation epoch and the best-validation-loss epoch.
Its sign, however, is largely determined by the \emph{ordering} of
$\tau_{\mathrm{best}}$ and $\tau_{\mathrm{interp}}$: because validation
accuracy generally increases during training, configurations where
$\tau_{\mathrm{best}} < \tau_{\mathrm{interp}}$ (i.e.\ the lowest
validation loss occurs before the model memorizes the training set)
yield a mechanically negative $\Delta_{\mathrm{repair}}$.
This is the dominant regime for adaptive optimizers at small batch sizes,
where the model achieves its best-calibrated predictions early
($\tau_{\mathrm{best}} \approx 5$--$7$) but does not interpolate until
epoch~30--40; accuracy continues to improve even as cross-entropy loss
rises.
In contrast, at large batch sizes ($b \geq 128$), adaptive optimizers
have $\tau_{\mathrm{best}} \gg \tau_{\mathrm{interp}}$, and the
positive $\Delta_{\mathrm{repair}}$ reflects genuine post-interpolation
restructuring that benefits generalization.
SGD exhibits near-zero $\Delta_{\mathrm{repair}}$ at both extremes:
its validation accuracy is essentially flat between $\taubest$ and
$\tauinterp$, even when both grow with $\sigw$, consistent with the
stopping-time stagnation discussed in \S\ref{sec:memory}.
% ---------------------------------------------------------------------------
% Table A1: sigma_w = 0.1 (near Kaiming)
% ---------------------------------------------------------------------------
\begin{table}[h]
\centering
\caption{Validation accuracy at three checkpoints for $\sigma_w = 0.1$. Values are mean $\pm$ std over $n=10$ seeds.}
\label{tab:ckpt_low}
\resizebox{\textwidth}{!}{
\begin{tabular}{cl|cc|ccc|c}
\toprule
$b$ & Optimizer & $\tau_{\mathrm{interp}}$ & $\tau_{\mathrm{best}}$ & Val Acc$_{\tau_{\mathrm{interp}}}$ & Val Acc$_{\tau_{\mathrm{best}}}$ & Val Acc$_{300}$ & $\Delta_{\mathrm{repair}}$ \\
\midrule
16 & SGD & $8.8 \pm 0.4$ & $10.5 \pm 0.7$ & $0.862 \pm 0.019$ & $0.886 \pm 0.003$ & $0.890 \pm 0.002$ & $+0.024 \pm 0.019$ \\
16 & SGD$~(\beta = 0.9)$ & $12.0 \pm 0.0$ & $14.4 \pm 2.0$ & $0.870 \pm 0.008$ & $0.881 \pm 0.004$ & $0.889 \pm 0.002$ & $+0.012 \pm 0.009$ \\
16 & Adam & $31.8 \pm 2.0$ & $6.0 \pm 1.3$ & $0.851 \pm 0.009$ & $0.831 \pm 0.015$ & $0.879 \pm 0.002$ & $-0.020 \pm 0.017$ \\
16 & AdamW & $32.1 \pm 1.7$ & $5.2 \pm 1.0$ & $0.844 \pm 0.017$ & $0.825 \pm 0.012$ & $0.880 \pm 0.002$ & $-0.018 \pm 0.021$ \\
16 & Muon & $32.6 \pm 1.9$ & $6.6 \pm 1.2$ & $0.849 \pm 0.008$ & $0.836 \pm 0.009$ & $0.879 \pm 0.003$ & $-0.013 \pm 0.011$ \\
\midrule
32 & SGD & $9.0 \pm 0.0$ & $11.4 \pm 1.2$ & $0.868 \pm 0.007$ & $0.882 \pm 0.003$ & $0.883 \pm 0.002$ & $+0.013 \pm 0.008$ \\
32 & SGD$~(\beta = 0.9)$ & $10.0 \pm 0.0$ & $11.4 \pm 1.1$ & $0.874 \pm 0.005$ & $0.882 \pm 0.004$ & $0.889 \pm 0.002$ & $+0.009 \pm 0.008$ \\
32 & Adam & $28.9 \pm 2.7$ & $5.8 \pm 1.0$ & $0.849 \pm 0.008$ & $0.825 \pm 0.012$ & $0.883 \pm 0.001$ & $-0.024 \pm 0.017$ \\
32 & AdamW & $28.5 \pm 2.6$ & $5.4 \pm 1.0$ & $0.849 \pm 0.010$ & $0.819 \pm 0.018$ & $0.882 \pm 0.002$ & $-0.030 \pm 0.023$ \\
32 & Muon & $29.8 \pm 1.6$ & $5.7 \pm 1.3$ & $0.850 \pm 0.012$ & $0.827 \pm 0.011$ & $0.883 \pm 0.003$ & $-0.023 \pm 0.016$ \\
\midrule
64 & SGD & $9.1 \pm 0.3$ & $11.7 \pm 1.3$ & $0.815 \pm 0.024$ & $0.869 \pm 0.003$ & $0.869 \pm 0.003$ & $+0.055 \pm 0.025$ \\
64 & SGD$~(\beta = 0.9)$ & $9.0 \pm 0.0$ & $10.9 \pm 0.9$ & $0.868 \pm 0.005$ & $0.885 \pm 0.002$ & $0.888 \pm 0.001$ & $+0.016 \pm 0.004$ \\
64 & Adam & $23.3 \pm 3.1$ & $5.8 \pm 1.6$ & $0.843 \pm 0.013$ & $0.821 \pm 0.018$ & $0.884 \pm 0.002$ & $-0.022 \pm 0.018$ \\
64 & AdamW & $25.4 \pm 4.7$ & $6.4 \pm 3.7$ & $0.851 \pm 0.010$ & $0.817 \pm 0.025$ & $0.884 \pm 0.002$ & $-0.034 \pm 0.024$ \\
64 & Muon & $24.0 \pm 3.9$ & $7.7 \pm 4.6$ & $0.830 \pm 0.025$ & $0.825 \pm 0.023$ & $0.883 \pm 0.002$ & $-0.005 \pm 0.015$ \\
\midrule
128 & SGD & $11.0 \pm 0.0$ & $14.5 \pm 2.0$ & $0.817 \pm 0.036$ & $0.856 \pm 0.004$ & $0.856 \pm 0.002$ & $+0.039 \pm 0.037$ \\
128 & SGD$~(\beta = 0.9)$ & $8.0 \pm 0.0$ & $10.0 \pm 0.5$ & $0.847 \pm 0.018$ & $0.884 \pm 0.002$ & $0.886 \pm 0.003$ & $+0.037 \pm 0.018$ \\
128 & Adam & $16.7 \pm 2.7$ & $34.6 \pm 22.1$ & $0.833 \pm 0.041$ & $0.872 \pm 0.022$ & $0.885 \pm 0.003$ & $+0.038 \pm 0.035$ \\
128 & AdamW & $17.9 \pm 2.6$ & $34.6 \pm 23.8$ & $0.848 \pm 0.026$ & $0.874 \pm 0.021$ & $0.886 \pm 0.002$ & $+0.026 \pm 0.037$ \\
128 & Muon & $17.1 \pm 3.7$ & $32.6 \pm 23.0$ & $0.844 \pm 0.020$ & $0.867 \pm 0.022$ & $0.886 \pm 0.003$ & $+0.023 \pm 0.021$ \\
\midrule
256 & SGD & $14.1 \pm 0.3$ & $21.1 \pm 2.9$ & $0.744 \pm 0.064$ & $0.841 \pm 0.003$ & $0.841 \pm 0.003$ & $+0.097 \pm 0.063$ \\
256 & SGD$~(\beta = 0.9)$ & $8.4 \pm 0.5$ & $10.7 \pm 0.5$ & $0.844 \pm 0.018$ & $0.878 \pm 0.001$ & $0.880 \pm 0.003$ & $+0.034 \pm 0.018$ \\
256 & Adam & $11.5 \pm 0.7$ & $40.5 \pm 29.5$ & $0.846 \pm 0.014$ & $0.882 \pm 0.009$ & $0.885 \pm 0.002$ & $+0.036 \pm 0.015$ \\
256 & AdamW & $11.5 \pm 1.3$ & $53.5 \pm 30.3$ & $0.843 \pm 0.020$ & $0.878 \pm 0.007$ & $0.884 \pm 0.001$ & $+0.034 \pm 0.023$ \\
256 & Muon & $11.6 \pm 1.0$ & $56.1 \pm 32.9$ & $0.837 \pm 0.023$ & $0.882 \pm 0.005$ & $0.885 \pm 0.003$ & $+0.045 \pm 0.023$ \\
\bottomrule
\end{tabular}}
\end{table}

% ---------------------------------------------------------------------------
% Table A2: sigma_w = 2.5 (large initialization)
% ---------------------------------------------------------------------------
\begin{table}[h]
\centering
\caption{Validation accuracy at three checkpoints for $\sigma_w = 2.5$ (large initialization). At this extreme, SGD's test accuracy collapses to ${\sim}58\%$ at $b=128$ while adaptive optimizers retain ${\sim}85\%$; the contrast with Table~\ref{tab:ckpt_low} directly quantifies initialization bias. Values are mean $\pm$ std over $n=10$ seeds.}
\label{tab:ckpt_high}
\resizebox{\textwidth}{!}{
\begin{tabular}{cl|cc|ccc|c}
\toprule
$b$ & Optimizer & $\tau_{\mathrm{interp}}$ & $\tau_{\mathrm{best}}$ & Val Acc$_{\tau_{\mathrm{interp}}}$ & Val Acc$_{\tau_{\mathrm{best}}}$ & Val Acc$_{300}$ & $\Delta_{\mathrm{repair}}$ \\
\midrule
16 & SGD & $33.9 \pm 1.3$ & $15.0 \pm 2.3$ & $0.679 \pm 0.003$ & $0.660 \pm 0.005$ & $0.699 \pm 0.003$ & $-0.019 \pm 0.005$ \\
16 & SGD$~(\beta = 0.9)$ & $10.9 \pm 0.3$ & $12.0 \pm 2.5$ & $0.779 \pm 0.004$ & $0.781 \pm 0.011$ & $0.794 \pm 0.002$ & $+0.003 \pm 0.010$ \\
16 & Adam & $39.0 \pm 2.1$ & $6.4 \pm 1.9$ & $0.848 \pm 0.012$ & $0.832 \pm 0.011$ & $0.886 \pm 0.001$ & $-0.016 \pm 0.018$ \\
16 & AdamW & $39.2 \pm 1.4$ & $6.5 \pm 1.9$ & $0.853 \pm 0.011$ & $0.832 \pm 0.016$ & $0.886 \pm 0.002$ & $-0.021 \pm 0.019$ \\
16 & Muon & $39.8 \pm 2.5$ & $6.4 \pm 1.3$ & $0.853 \pm 0.010$ & $0.830 \pm 0.015$ & $0.886 \pm 0.002$ & $-0.023 \pm 0.022$ \\
\midrule
32 & SGD & $39.0 \pm 1.2$ & $20.8 \pm 3.0$ & $0.644 \pm 0.005$ & $0.628 \pm 0.007$ & $0.665 \pm 0.005$ & $-0.016 \pm 0.004$ \\
32 & SGD$~(\beta = 0.9)$ & $10.1 \pm 0.3$ & $9.4 \pm 1.6$ & $0.729 \pm 0.007$ & $0.727 \pm 0.008$ & $0.749 \pm 0.003$ & $-0.002 \pm 0.010$ \\
32 & Adam & $40.8 \pm 2.9$ & $7.2 \pm 3.2$ & $0.837 \pm 0.014$ & $0.812 \pm 0.021$ & $0.883 \pm 0.002$ & $-0.025 \pm 0.031$ \\
32 & AdamW & $37.7 \pm 3.5$ & $7.2 \pm 2.3$ & $0.843 \pm 0.011$ & $0.818 \pm 0.018$ & $0.883 \pm 0.003$ & $-0.025 \pm 0.027$ \\
32 & Muon & $40.0 \pm 3.4$ & $6.7 \pm 2.1$ & $0.846 \pm 0.013$ & $0.811 \pm 0.018$ & $0.883 \pm 0.002$ & $-0.035 \pm 0.021$ \\
\midrule
64 & SGD & $51.2 \pm 1.5$ & $35.3 \pm 3.3$ & $0.614 \pm 0.005$ & $0.605 \pm 0.006$ & $0.637 \pm 0.006$ & $-0.009 \pm 0.003$ \\
64 & SGD$~(\beta = 0.9)$ & $10.2 \pm 0.4$ & $8.3 \pm 1.6$ & $0.680 \pm 0.008$ & $0.676 \pm 0.009$ & $0.706 \pm 0.004$ & $-0.004 \pm 0.009$ \\
64 & Adam & $31.5 \pm 4.4$ & $7.9 \pm 3.4$ & $0.833 \pm 0.012$ & $0.799 \pm 0.021$ & $0.876 \pm 0.004$ & $-0.034 \pm 0.025$ \\
64 & AdamW & $32.3 \pm 4.6$ & $6.9 \pm 3.3$ & $0.826 \pm 0.016$ & $0.795 \pm 0.023$ & $0.877 \pm 0.002$ & $-0.031 \pm 0.030$ \\
64 & Muon & $30.9 \pm 5.7$ & $7.4 \pm 3.4$ & $0.828 \pm 0.014$ & $0.795 \pm 0.022$ & $0.876 \pm 0.002$ & $-0.033 \pm 0.030$ \\
\midrule
128 & SGD & $78.6 \pm 3.3$ & $59.6 \pm 4.9$ & $0.595 \pm 0.006$ & $0.587 \pm 0.006$ & $0.612 \pm 0.006$ & $-0.007 \pm 0.003$ \\
128 & SGD$~(\beta = 0.9)$ & $12.1 \pm 0.3$ & $8.3 \pm 1.6$ & $0.640 \pm 0.004$ & $0.630 \pm 0.009$ & $0.671 \pm 0.005$ & $-0.011 \pm 0.005$ \\
128 & Adam & $15.3 \pm 3.1$ & $55.6 \pm 24.2$ & $0.799 \pm 0.024$ & $0.851 \pm 0.004$ & $0.863 \pm 0.004$ & $+0.051 \pm 0.023$ \\
128 & AdamW & $14.6 \pm 1.8$ & $68.6 \pm 30.2$ & $0.808 \pm 0.023$ & $0.852 \pm 0.004$ & $0.860 \pm 0.008$ & $+0.045 \pm 0.025$ \\
128 & Muon & $17.1 \pm 2.8$ & $69.5 \pm 30.8$ & $0.812 \pm 0.028$ & $0.850 \pm 0.003$ & $0.859 \pm 0.005$ & $+0.038 \pm 0.029$ \\
\midrule
256 & SGD & $153.3 \pm 8.7$ & $124.8 \pm 15.0$ & $0.582 \pm 0.004$ & $0.578 \pm 0.006$ & $0.587 \pm 0.004$ & $-0.004 \pm 0.003$ \\
256 & SGD$~(\beta = 0.9)$ & $16.5 \pm 0.5$ & $12.9 \pm 2.1$ & $0.608 \pm 0.005$ & $0.601 \pm 0.007$ & $0.642 \pm 0.005$ & $-0.007 \pm 0.007$ \\
256 & Adam & $8.0 \pm 0.0$ & $80.1 \pm 21.8$ & $0.787 \pm 0.007$ & $0.840 \pm 0.010$ & $0.861 \pm 0.006$ & $+0.053 \pm 0.009$ \\
256 & AdamW & $8.1 \pm 0.3$ & $94.0 \pm 22.4$ & $0.783 \pm 0.011$ & $0.844 \pm 0.008$ & $0.861 \pm 0.007$ & $+0.061 \pm 0.012$ \\
256 & Muon & $8.0 \pm 0.0$ & $89.3 \pm 13.5$ & $0.786 \pm 0.008$ & $0.848 \pm 0.010$ & $0.865 \pm 0.004$ & $+0.062 \pm 0.008$ \\
\bottomrule
\end{tabular}}
\end{table}

\clearpage
\section{Learning-Rate Ablation}
\begin{table}[h]
\centering
\caption{%
  \textbf{Full LR $\times$ L2 sweep for SGD on ResNet-9} (no momentum).
  Test accuracy (\%) at the best-validation-loss epoch for each
  $(\eta, \lambda_{\mathrm{L2}}, \sigma_w)$ configuration.
  The ``Spread'' column reports
  $\max_{\sigma_w}\mathrm{acc} - \min_{\sigma_w}\mathrm{acc}$
  (in percentage points);
  ``Train'' is the mean training accuracy across all $\sigma_w$ values.
  Entries with $\pm$ show mean $\pm$ std over $n=10$ seeds;
  entries marked $^\dagger$ are from a single representative seed (seed~42).
  \textbf{Bottom rows}: Adam baseline and SGD long training
  (5000 ep, constant LR) for comparison.
}
\label{tab:lr_l2_sweep}
\footnotesize
\setlength{\tabcolsep}{2.5pt}

\vspace{4pt}
\resizebox{\textwidth}{!}{
\begin{tabular}{lll rrrrr rr}
\toprule
 & & & \multicolumn{5}{c}{Test Accuracy (\%) at $\sigma_w =$} & & \\
\cmidrule(lr){4-8}
$\eta$ & $\lambda_{\mathrm{L2}}$ & Sched. & $0.1$ & $0.5$ & $1.0$ & $1.8$ & $2.5$ & Spread & Train \\
\midrule
\multicolumn{10}{c}{\textbf{Batch size $b=16$}} \\
\midrule
  $10^{-3}$ & $0$ & cos & $88.1{\pm}0.3$ & $80.9{\pm}0.4$ & $74.3{\pm}0.6$ & $69.1{\pm}0.9$ & $65.4{\pm}0.8$ & 22.8 & 99.6 \\
  $10^{-3}$ & $10^{-3}$ & cos & 88.1 & 82.5 & 77.9 & 74.3 & 72.1 & 16.0 & 100.0$^\dagger$ \\
  $10^{-3}$ & $5{\times}10^{-3}$ & cos & 88.7 & 86.4 & 85.5 & 82.7 & 82.1 & 6.6 & 100.0$^\dagger$ \\
  $10^{-3}$ & $10^{-2}$ & cos & 88.4 & 87.1 & 86.1 & 85.6 & 86.2 & 2.8 & 100.0$^\dagger$ \\
\addlinespace[2pt]
  $10^{-2}$ & $0$ & cos & 87.6 & 87.6 & 85.0 & 80.9 & 77.9 & 9.7 & 100.0$^\dagger$ \\
  $10^{-2}$ & $10^{-3}$ & cos & 90.6 & 90.7 & 90.5 & 90.5 & 90.3 & 0.4 & 100.0$^\dagger$ \\
  $10^{-2}$ & $5{\times}10^{-3}$ & cos & 90.4 & 90.6 & 90.6 & 90.3 & 90.1 & 0.5 & 100.0$^\dagger$ \\
  $10^{-2}$ & $10^{-2}$ & cos & $89.8{\pm}0.2$ & $89.9{\pm}0.2$ & $89.9{\pm}0.2$ & $89.6{\pm}0.2$ & $89.7{\pm}0.4$ & 0.3 & 100.0 \\
\addlinespace[2pt]
  $10^{-1}$ & $0$ & cos & 80.3 & 81.0 & 81.1 & 82.0 & 77.5 & 4.5 & 90.8$^\dagger$ \\
  $10^{-1}$ & $10^{-3}$ & cos & 90.4 & 90.7 & 90.5 & 90.8 & 90.8 & 0.5 & 100.0$^\dagger$ \\
  $10^{-1}$ & $5{\times}10^{-3}$ & cos & 89.9 & 89.6 & 89.4 & 90.1 & 89.9 & 0.7 & 100.0$^\dagger$ \\
  $10^{-1}$ & $10^{-2}$ & cos & 88.1 & 88.9 & 88.2 & 88.1 & 88.5 & 0.8 & 100.0$^\dagger$ \\
\midrule
  \multicolumn{2}{l}{Adam $\eta{=}10^{-3}$} & cos & $82.6{\pm}1.5$ & $84.1{\pm}1.5$ & $83.1{\pm}1.6$ & $83.5{\pm}1.4$ & $82.8{\pm}1.0$ & 1.5 & 95.3 \\
  \multicolumn{2}{l}{SGD $\eta{=}10^{-3}$ (5k ep)} & const & 88.0 & 80.5 & 75.2 & 69.6 & 66.1 & 21.9 & 99.9$^\dagger$ \\
\bottomrule
\end{tabular}}
\vspace{6pt}

\resizebox{\textwidth}{!}{
\begin{tabular}{lll rrrrr rr}
\toprule
 & & & \multicolumn{5}{c}{Test Accuracy (\%) at $\sigma_w =$} & & \\
\cmidrule(lr){4-8}
$\eta$ & $\lambda_{\mathrm{L2}}$ & Sched. & $0.1$ & $0.5$ & $1.0$ & $1.8$ & $2.5$ & Spread & Train \\
\midrule
\multicolumn{10}{c}{\textbf{Batch size $b=128$}} \\
\midrule
  $10^{-3}$ & $0$ & cos & $85.0{\pm}0.3$ & $72.7{\pm}0.7$ & $67.5{\pm}0.5$ & $62.0{\pm}0.5$ & $58.6{\pm}0.7$ & 26.5 & 99.8 \\
  $10^{-3}$ & $10^{-3}$ & cos & 85.8 & 73.7 & 69.9 & 65.0 & 62.0 & 23.8 & 100.0$^\dagger$ \\
  $10^{-3}$ & $5{\times}10^{-3}$ & cos & 85.1 & 74.9 & 71.4 & 66.9 & 64.0 & 21.1 & 100.0$^\dagger$ \\
  $10^{-3}$ & $10^{-2}$ & cos & 85.3 & 75.0 & 72.5 & 69.2 & 66.8 & 18.5 & 100.0$^\dagger$ \\
\addlinespace[2pt]
  $10^{-2}$ & $0$ & cos & 88.1 & 82.8 & 75.6 & 67.8 & 64.3 & 23.8 & 99.6$^\dagger$ \\
  $10^{-2}$ & $10^{-3}$ & cos & 88.4 & 85.0 & 78.5 & 73.5 & 71.1 & 17.4 & 100.0$^\dagger$ \\
  $10^{-2}$ & $5{\times}10^{-3}$ & cos & 89.1 & 88.8 & 88.2 & 88.2 & 88.2 & 0.9 & 100.0$^\dagger$ \\
  $10^{-2}$ & $10^{-2}$ & cos & $88.6{\pm}0.3$ & $88.7{\pm}0.1$ & $88.4{\pm}0.3$ & $88.4{\pm}0.3$ & $88.4{\pm}0.1$ & 0.3 & 100.0 \\
\addlinespace[2pt]
  $10^{-1}$ & $0$ & cos & $84.1{\pm}1.9$ & $85.3{\pm}1.2$ & $84.8{\pm}0.8$ & $81.7{\pm}1.4$ & $77.2{\pm}2.1$ & 8.1 & 98.8 \\
  $10^{-1}$ & $10^{-3}$ & cos & 89.0 & 89.3 & 89.5 & 89.5 & 88.6 & 0.9 & 100.0$^\dagger$ \\
  $10^{-1}$ & $5{\times}10^{-3}$ & cos & 89.1 & 89.0 & 89.1 & 89.3 & 89.1 & 0.3 & 100.0$^\dagger$ \\
  $10^{-1}$ & $10^{-2}$ & cos & 88.5 & 88.6 & 88.5 & 88.8 & 88.3 & 0.6 & 100.0$^\dagger$ \\
\midrule
  \multicolumn{2}{l}{Adam $\eta{=}10^{-3}$} & cos & $86.9{\pm}2.3$ & $87.9{\pm}2.4$ & $87.6{\pm}1.4$ & $86.1{\pm}0.5$ & $84.5{\pm}0.5$ & 3.5 & 99.7 \\
  \multicolumn{2}{l}{SGD $\eta{=}10^{-3}$ (5k ep)} & const & 85.1 & 73.9 & 67.6 & 62.0 & 59.3 & 25.9 & 99.9$^\dagger$ \\
\bottomrule
\end{tabular}}

\end{table}
\label{app:ablation}

\clearpage

\newpage\section{Depth Comparison: Full Results}\label{app:depth}

Figures~\ref{fig:depth_stress} and~\ref{fig:depth_train} display test and
train accuracy as a function of initialization scale~$\sigma_w$ for four architectures: ResNet-9 (6.58M parameters), R9-AvgPool (6.58M), ResNet-56 (0.86M), and ResNet-110 (1.74M).  Each panel fixes one optimizer (SGD or Adam) and one batch size ($b=16$ or $b=128$); lines show the mean over seeds and shaded regions indicate the 10th--90th percentile range.

Table~\ref{tab:depth_comparison} reports the numerical values plotted in Figure~\ref{fig:depth_stress}, together with train accuracy at the final epoch and the spread (difference between the best and worst mean test
accuracy across~$\sigma_w$ values).  The lower part of the table records the interpolation epoch~$\tau_\mathrm{interp}$ (first epoch where
train accuracy $\geq 99.5\%$) and the best-validation-loss epoch for each configuration.

\begin{figure*}[h]
\centering
\includegraphics[width=\textwidth]{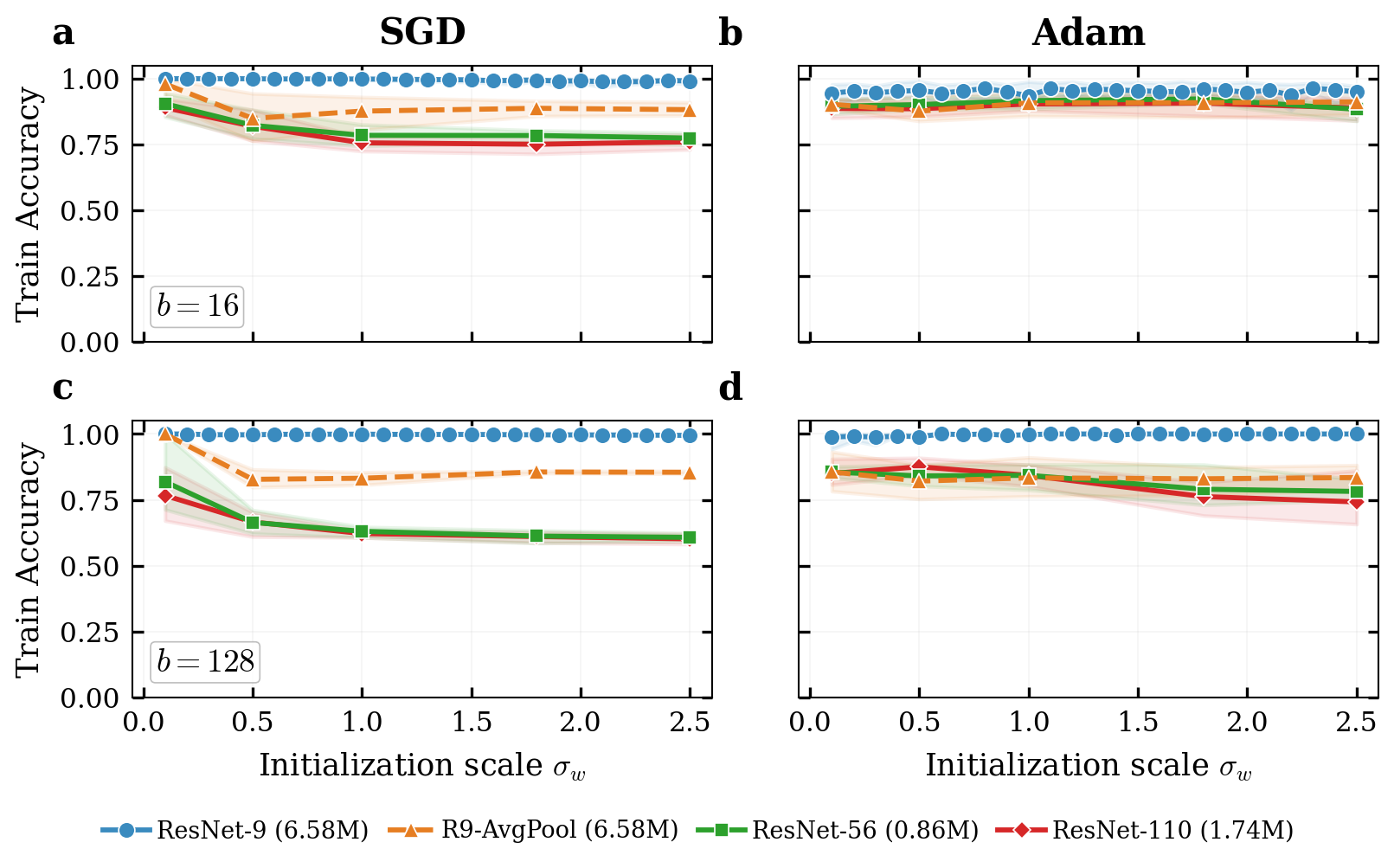}
\caption{Train accuracy vs.\ initialization scale~$\sigma_w$ (same
layout as Figure~\ref{fig:depth_stress}).  Under SGD at $b=128$,
ResNet-56 and ResNet-110 fail to interpolate at large~$\sigma_w$,
while ResNet-9 maintains $>99\%$ train accuracy throughout.}
\label{fig:depth_train}
\end{figure*}

\begin{table*}[t]
\centering
\caption{Depth comparison: training and generalization metrics across architectures. All runs use $\eta{=}10^{-3}$ with cosine decay, no weight decay, 300 epochs. Values are mean $\pm$ std over 10 seeds. $\tau_\mathrm{interp}$: first epoch where train accuracy $\geq 99.5\%$; ``---'' indicates interpolation not reached. Spread $= \max_{\sigma_w} \overline{\mathrm{test\_acc}} - \min_{\sigma_w} \overline{\mathrm{test\_acc}}$.}
\label{tab:depth_comparison}
\footnotesize
\setlength{\tabcolsep}{2.5pt}
\vspace{4pt}
\resizebox{\textwidth}{!}{%
\begin{tabular}{ll rr rr rr rr rr  r}
\toprule
\multicolumn{2}{l}{\textbf{Batch size $b = 16$}} & \multicolumn{2}{c}{$\sigma_w{=}0.1$} & \multicolumn{2}{c}{$\sigma_w{=}0.5$} & \multicolumn{2}{c}{$\sigma_w{=}1.0$} & \multicolumn{2}{c}{$\sigma_w{=}1.8$} & \multicolumn{2}{c}{$\sigma_w{=}2.5$} & \\
Architecture & Opt. & Test & Train & Test & Train & Test & Train & Test & Train & Test & Train & Spread \\
\midrule
  ResNet-9 & SGD & $88.1 \pm 0.3$ & $100.0 \pm 0.0$ & $80.9 \pm 0.4$ & $100.0 \pm 0.0$ & $74.3 \pm 0.6$ & $99.9 \pm 0.2$ & $69.1 \pm 0.9$ & $99.5 \pm 1.2$ & $65.4 \pm 0.8$ & $99.1 \pm 0.8$ & 22.8 \\
   & Adam & $82.6 \pm 1.5$ & $94.5 \pm 3.5$ & $84.1 \pm 1.5$ & $95.6 \pm 3.0$ & $83.1 \pm 1.6$ & $93.5 \pm 3.8$ & $83.5 \pm 1.4$ & $96.2 \pm 2.5$ & $82.8 \pm 1.0$ & $94.9 \pm 2.8$ & 1.5 \\
\addlinespace[2pt]
  R9-AvgPool & SGD & $82.3 \pm 1.9$ & $98.0 \pm 4.2$ & $68.2 \pm 1.1$ & $84.9 \pm 7.5$ & $66.6 \pm 1.0$ & $87.7 \pm 5.4$ & $64.2 \pm 0.6$ & $88.8 \pm 2.3$ & $62.6 \pm 0.3$ & $88.2 \pm 2.1$ & 19.8 \\
   & Adam & $77.8 \pm 1.4$ & $90.3 \pm 2.3$ & $77.5 \pm 1.8$ & $87.7 \pm 4.8$ & $79.0 \pm 1.0$ & $90.8 \pm 3.8$ & $79.2 \pm 1.3$ & $90.9 \pm 4.2$ & $78.4 \pm 1.7$ & $91.2 \pm 4.5$ & 1.7 \\
\addlinespace[2pt]
  ResNet-56 & SGD & $79.2 \pm 1.1$ & $90.5 \pm 3.8$ & $71.0 \pm 1.7$ & $82.3 \pm 4.6$ & $66.2 \pm 1.1$ & $78.5 \pm 3.1$ & $63.2 \pm 1.6$ & $78.4 \pm 2.1$ & $61.2 \pm 2.1$ & $77.4 \pm 1.9$ & 18.0 \\
   & Adam & $80.1 \pm 0.9$ & $89.5 \pm 2.6$ & $80.5 \pm 0.8$ & $90.2 \pm 2.2$ & $80.8 \pm 1.0$ & $91.7 \pm 2.0$ & $80.4 \pm 1.0$ & $92.3 \pm 2.1$ & $78.1 \pm 1.2$ & $88.4 \pm 3.1$ & 2.7 \\
\addlinespace[2pt]
  ResNet-110 & SGD & $78.3 \pm 1.0$ & $89.0 \pm 3.0$ & $69.0 \pm 1.8$ & $81.9 \pm 4.5$ & $63.1 \pm 1.2$ & $75.6 \pm 2.2$ & $60.2 \pm 1.2$ & $75.1 \pm 3.2$ & $59.6 \pm 1.2$ & $76.0 \pm 2.4$ & 18.7 \\
   & Adam & $79.8 \pm 1.4$ & $88.7 \pm 3.4$ & $80.4 \pm 1.0$ & $88.6 \pm 2.5$ & $80.7 \pm 1.1$ & $90.4 \pm 2.5$ & $80.2 \pm 1.2$ & $90.7 \pm 3.4$ & $78.6 \pm 2.2$ & $88.9 \pm 4.0$ & 2.1 \\
\addlinespace[2pt]
\bottomrule
\end{tabular}}
\vspace{4pt}
\resizebox{\textwidth}{!}{%
\begin{tabular}{ll rr rr rr rr rr  r}
\toprule
\multicolumn{2}{l}{\textbf{Batch size $b = 128$}} & \multicolumn{2}{c}{$\sigma_w{=}0.1$} & \multicolumn{2}{c}{$\sigma_w{=}0.5$} & \multicolumn{2}{c}{$\sigma_w{=}1.0$} & \multicolumn{2}{c}{$\sigma_w{=}1.8$} & \multicolumn{2}{c}{$\sigma_w{=}2.5$} & \\
Architecture & Opt. & Test & Train & Test & Train & Test & Train & Test & Train & Test & Train & Spread \\
\midrule
  ResNet-9 & SGD & $85.0 \pm 0.3$ & $100.0 \pm 0.0$ & $72.7 \pm 0.7$ & $99.7 \pm 0.5$ & $67.5 \pm 0.5$ & $99.9 \pm 0.1$ & $62.0 \pm 0.5$ & $99.7 \pm 0.2$ & $58.6 \pm 0.7$ & $99.5 \pm 0.2$ & 26.5 \\
   & Adam & $86.9 \pm 2.3$ & $98.8 \pm 2.6$ & $87.9 \pm 2.4$ & $99.0 \pm 3.3$ & $87.6 \pm 1.4$ & $99.7 \pm 0.7$ & $86.1 \pm 0.5$ & $100.0 \pm 0.0$ & $84.5 \pm 0.5$ & $100.0 \pm 0.0$ & 3.5 \\
\addlinespace[2pt]
  R9-AvgPool & SGD & $77.4 \pm 0.4$ & $100.0 \pm 0.0$ & $65.5 \pm 0.8$ & $82.8 \pm 3.1$ & $63.8 \pm 0.8$ & $83.2 \pm 2.5$ & $61.5 \pm 0.6$ & $85.6 \pm 0.6$ & $59.9 \pm 0.6$ & $85.5 \pm 0.5$ & 17.5 \\
   & Adam & $74.5 \pm 3.2$ & $85.6 \pm 6.6$ & $73.7 \pm 2.6$ & $82.1 \pm 5.9$ & $73.8 \pm 2.5$ & $83.3 \pm 6.2$ & $72.0 \pm 2.5$ & $83.0 \pm 5.7$ & $70.3 \pm 1.7$ & $83.5 \pm 5.5$ & 4.3 \\
\addlinespace[2pt]
  ResNet-56 & SGD & $71.3 \pm 4.9$ & $81.9 \pm 11.0$ & $57.4 \pm 1.5$ & $66.4 \pm 3.9$ & $52.4 \pm 1.0$ & $63.0 \pm 1.8$ & $49.0 \pm 1.6$ & $61.2 \pm 2.3$ & $47.1 \pm 1.9$ & $60.7 \pm 1.3$ & 24.2 \\
   & Adam & $76.8 \pm 1.2$ & $85.5 \pm 2.6$ & $75.8 \pm 1.4$ & $84.1 \pm 3.5$ & $73.6 \pm 2.1$ & $84.3 \pm 4.4$ & $69.1 \pm 3.5$ & $79.0 \pm 6.5$ & $67.5 \pm 2.3$ & $78.2 \pm 3.8$ & 9.3 \\
\addlinespace[2pt]
  ResNet-110 & SGD & $67.5 \pm 4.0$ & $76.6 \pm 8.7$ & $56.8 \pm 1.9$ & $66.6 \pm 4.8$ & $52.2 \pm 0.6$ & $62.2 \pm 1.3$ & $48.6 \pm 1.0$ & $61.0 \pm 1.7$ & $47.2 \pm 0.7$ & $60.2 \pm 1.6$ & 20.4 \\
   & Adam & $76.5 \pm 1.6$ & $85.0 \pm 4.3$ & $77.3 \pm 1.3$ & $87.5 \pm 3.2$ & $74.0 \pm 2.0$ & $84.3 \pm 3.4$ & $66.8 \pm 4.7$ & $76.2 \pm 5.5$ & $63.4 \pm 5.9$ & $74.2 \pm 8.3$ & 13.9 \\
\addlinespace[2pt]
\bottomrule
\end{tabular}}

\vspace{8pt}
\resizebox{\textwidth}{!}{%
\begin{tabular}{ll rr rr rr rr rr}
\toprule
& & \multicolumn{2}{c}{$\sigma_w{=}0.1$} & \multicolumn{2}{c}{$\sigma_w{=}0.5$} & \multicolumn{2}{c}{$\sigma_w{=}1.0$} & \multicolumn{2}{c}{$\sigma_w{=}1.8$} & \multicolumn{2}{c}{$\sigma_w{=}2.5$} \\
Architecture & Opt. & $\tau_\mathrm{interp}$ & Best ep. & $\tau_\mathrm{interp}$ & Best ep. & $\tau_\mathrm{interp}$ & Best ep. & $\tau_\mathrm{interp}$ & Best ep. & $\tau_\mathrm{interp}$ & Best ep. \\
\midrule
\multicolumn{12}{l}{\textbf{$b = 16$}} \\
\addlinespace[1pt]
  ResNet-9 & SGD & $11.0 \pm 0.7$ & $10.5 \pm 0.7$ & $13.8 \pm 1.4$ & $10.5 \pm 1.4$ & $32.2 \pm 3.2$ & $10.7 \pm 1.3$ & $93.5 \pm 10.2$ & $14.2 \pm 2.4$ & $203.6 \pm 30.4$ & $15.0 \pm 2.3$ \\
   & Adam & $155.2 \pm 4.8$ & $6.0 \pm 1.3$ & $154.6 \pm 9.4$ & $6.6 \pm 2.3$ & $153.1 \pm 10.2$ & $5.4 \pm 2.1$ & $157.6 \pm 10.0$ & $7.3 \pm 1.9$ & $161.5 \pm 7.2$ & $6.4 \pm 1.9$ \\
  R9-AvgPool & SGD & $32.1 \pm 3.5$ & $23.8 \pm 11.2$ & $67.2 \pm 6.6$ & $7.2 \pm 2.1$ & $161.4 \pm 11.5$ & $11.6 \pm 2.1$ & --- & $20.2 \pm 1.5$ & --- & $27.3 \pm 1.9$ \\
   & Adam & $173.2 \pm 7.2$ & $5.8 \pm 0.4$ & $168.6 \pm 10.9$ & $5.1 \pm 1.3$ & $166.4 \pm 4.5$ & $5.0 \pm 1.1$ & $167.5 \pm 6.8$ & $4.6 \pm 1.1$ & $166.2 \pm 9.4$ & $4.7 \pm 1.1$ \\
  ResNet-56 & SGD & $48.1 \pm 3.2$ & $13.3 \pm 4.5$ & $67.2 \pm 1.5$ & $12.1 \pm 2.8$ & $115.7 \pm 3.7$ & $20.3 \pm 2.4$ & --- & $41.7 \pm 4.7$ & --- & $66.5 \pm 8.4$ \\
   & Adam & $78.1 \pm 3.6$ & $10.0 \pm 1.8$ & $78.7 \pm 3.4$ & $10.1 \pm 1.9$ & $77.4 \pm 4.2$ & $10.5 \pm 1.7$ & $78.4 \pm 6.3$ & $10.8 \pm 1.5$ & $81.0 \pm 5.2$ & $8.8 \pm 2.3$ \\
  ResNet-110 & SGD & $43.7 \pm 1.8$ & $10.9 \pm 2.2$ & $63.5 \pm 2.8$ & $12.3 \pm 2.1$ & $110.3 \pm 4.3$ & $18.5 \pm 1.6$ & $252.0 \pm 32.5$ & $38.9 \pm 5.0$ & --- & $66.8 \pm 10.5$ \\
   & Adam & $74.2 \pm 3.4$ & $10.1 \pm 2.2$ & $73.6 \pm 2.8$ & $9.3 \pm 1.8$ & $73.5 \pm 4.2$ & $10.2 \pm 2.0$ & $74.7 \pm 3.9$ & $10.2 \pm 2.1$ & $75.9 \pm 3.6$ & $9.6 \pm 1.7$ \\
\addlinespace[3pt]
\multicolumn{12}{l}{\textbf{$b = 128$}} \\
\addlinespace[1pt]
  ResNet-9 & SGD & $12.8 \pm 0.4$ & $14.5 \pm 2.0$ & $32.9 \pm 1.3$ & $22.0 \pm 3.2$ & $61.9 \pm 2.8$ & $39.3 \pm 2.0$ & $117.4 \pm 9.6$ & $51.8 \pm 5.1$ & $222.1 \pm 23.2$ & $59.6 \pm 4.9$ \\
   & Adam & $25.7 \pm 8.4$ & $34.6 \pm 22.1$ & $22.8 \pm 5.4$ & $35.7 \pm 24.0$ & $22.1 \pm 4.3$ & $46.3 \pm 30.2$ & $19.8 \pm 1.8$ & $45.1 \pm 26.0$ & $19.1 \pm 4.2$ & $55.6 \pm 24.2$ \\
  R9-AvgPool & SGD & $18.9 \pm 0.6$ & $20.0 \pm 1.4$ & $62.8 \pm 3.2$ & $26.7 \pm 2.3$ & $169.8 \pm 4.4$ & $54.0 \pm 4.5$ & --- & $118.4 \pm 2.6$ & --- & $224.2 \pm 17.7$ \\
   & Adam & $103.2 \pm 15.2$ & $7.2 \pm 2.1$ & $81.3 \pm 17.8$ & $5.0 \pm 1.6$ & $75.3 \pm 14.0$ & $4.2 \pm 1.5$ & $63.9 \pm 10.4$ & $3.6 \pm 1.0$ & $62.4 \pm 10.2$ & $3.5 \pm 1.4$ \\
  ResNet-56 & SGD & $51.6 \pm 4.8$ & $24.9 \pm 25.7$ & $114.6 \pm 7.2$ & $25.0 \pm 4.3$ & --- & $45.4 \pm 5.0$ & --- & $93.9 \pm 18.7$ & --- & $177.3 \pm 57.8$ \\
   & Adam & $69.7 \pm 3.9$ & $12.1 \pm 1.9$ & $66.0 \pm 3.9$ & $9.9 \pm 2.2$ & $71.2 \pm 7.1$ & $10.9 \pm 3.3$ & $79.0 \pm 6.4$ & $8.2 \pm 2.2$ & $86.5 \pm 8.9$ & $8.1 \pm 2.1$ \\
  ResNet-110 & SGD & $48.2 \pm 2.6$ & $12.4 \pm 5.5$ & $109.0 \pm 7.1$ & $24.3 \pm 3.8$ & $236.7 \pm 9.1$ & $47.4 \pm 6.5$ & --- & $98.5 \pm 9.5$ & --- & $194.0 \pm 46.7$ \\
   & Adam & $70.4 \pm 2.7$ & $13.3 \pm 3.1$ & $69.6 \pm 8.0$ & $13.5 \pm 3.0$ & $74.3 \pm 4.8$ & $10.6 \pm 2.4$ & $83.3 \pm 5.2$ & $8.5 \pm 2.1$ & $94.2 \pm 4.9$ & $9.0 \pm 2.9$ \\
\addlinespace[3pt]
\bottomrule
\end{tabular}}
\end{table*}

\clearpage
\section{Normalization and Data Augmentation}\label{app:norm}

The main experiments use BatchNorm without data augmentation to
isolate the effect of initialization scale from other regularization
mechanisms.  A natural question is whether switching the normalization
layer or adding standard data augmentation eliminates initialization
memory.  To test this, we train ResNet-9 on CIFAR-10 under three
configurations: (i)~BatchNorm with augmentation (random horizontal
flip + random crop with 4-pixel padding), (ii)~LayerNorm with the
same augmentation, and (iii)~the original BatchNorm baseline without
augmentation.  All other hyperparameters are identical to the main
grid ($\eta = 10^{-3}$, cosine decay, no weight decay, 300 epochs,
$\sigma_w \in \{0.1, 0.5, 1.0, 1.8, 2.5\}$).  Results are averaged
over 5~seeds.

Figure~\ref{fig:norm_comparison} and Table~\ref{tab:norm_comparison}
show the results.  Data augmentation substantially reduces initialization
memory under SGD (spread drops from 22.8 to 8.3~pp at $b=16$, and from
26.5 to 18.9~pp at $b=128$ for BatchNorm), but does not eliminate it.
LayerNorm behaves similarly to BatchNorm: the SGD spread is 5.1~pp at
$b=16$ and 15.2~pp at $b=128$.  Under Adam, all three configurations
show small spreads ($\leq 3.5$~pp), confirming that the forgetting property of Adam is robust to the choice of normalization and
augmentation.

These results reinforce the main finding: neither the specific
normalization scheme nor data augmentation is sufficient to erase the initialization memory
under low-learning-rate SGD, though both reduce its magnitude.

\begin{figure*}[b]
\centering
\includegraphics[width=\textwidth]{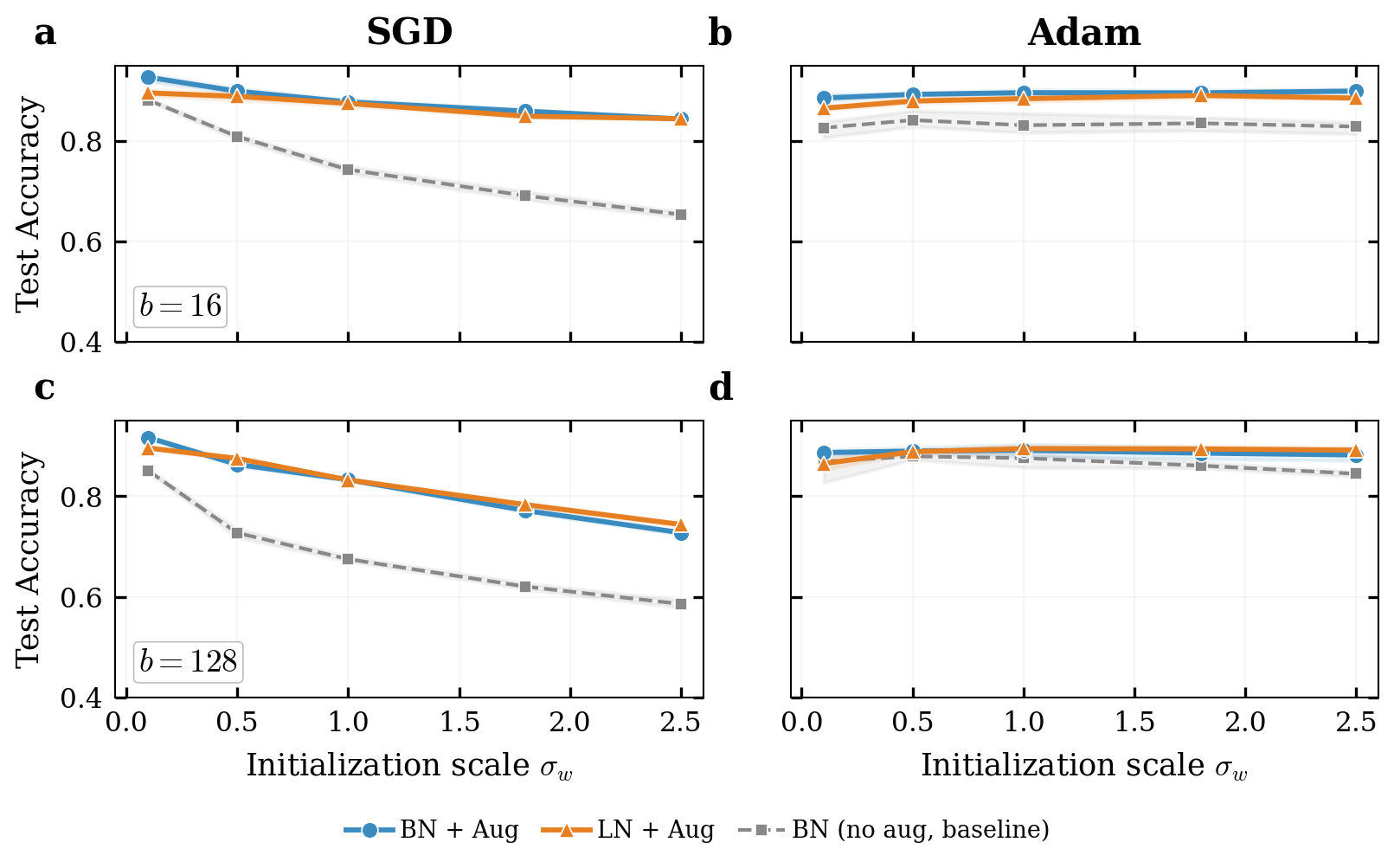}
\caption{Test accuracy vs.\ initialization scale~$\sigma_w$ for
ResNet-9 under BatchNorm + augmentation (blue), LayerNorm +
augmentation (orange), and the BatchNorm baseline without augmentation
(grey dashed).  Lines: mean over seeds; shaded bands: 10th--90th
percentile.}
\label{fig:norm_comparison}
\end{figure*}

\begin{table*}[h]
\centering
\caption{Normalization comparison on ResNet-9 CIFAR-10. ``BN + Aug'' and ``LN + Aug'' use data augmentation (random flip + crop) with BatchNorm and LayerNorm respectively; ``BN (no aug)'' is the main-grid baseline without augmentation. All use $\eta{=}10^{-3}$, cosine decay, no weight decay, 300 epochs. Values: mean $\pm$ std over 5 seeds (aug) or 10 seeds (baseline). Spread $= \max_{\sigma_w} \overline{\mathrm{test\_acc}} - \min_{\sigma_w} \overline{\mathrm{test\_acc}}$ (pp).}
\label{tab:norm_comparison}
\footnotesize
\setlength{\tabcolsep}{2.5pt}
\vspace{4pt}
\resizebox{\textwidth}{!}{%
\begin{tabular}{ll rr rr rr rr rr r}
\toprule
\multicolumn{2}{l}{\textbf{Batch size $b = 16$}} & \multicolumn{2}{c}{$\sigma_w{=}0.1$} & \multicolumn{2}{c}{$\sigma_w{=}0.5$} & \multicolumn{2}{c}{$\sigma_w{=}1.0$} & \multicolumn{2}{c}{$\sigma_w{=}1.8$} & \multicolumn{2}{c}{$\sigma_w{=}2.5$} & \\
Configuration & Opt. & Test & Train & Test & Train & Test & Train & Test & Train & Test & Train & Spread \\
\midrule
  BN + Aug & SGD & $92.7 \pm 0.8$ & $99.7 \pm 0.6$ & $89.9 \pm 0.6$ & $99.2 \pm 0.5$ & $87.8 \pm 0.6$ & $98.3 \pm 0.7$ & $85.9 \pm 0.3$ & $97.6 \pm 0.6$ & $84.4 \pm 0.2$ & $95.5 \pm 0.3$ & 8.3 \\
   & Adam & $88.5 \pm 0.9$ & $94.7 \pm 1.5$ & $89.3 \pm 0.3$ & $94.9 \pm 0.4$ & $89.6 \pm 0.6$ & $95.8 \pm 1.2$ & $89.6 \pm 0.5$ & $95.7 \pm 1.0$ & $89.9 \pm 0.5$ & $96.7 \pm 1.1$ & 1.4 \\
\addlinespace[2pt]
  LN + Aug & SGD & $89.6 \pm 0.7$ & $98.5 \pm 0.8$ & $88.9 \pm 0.7$ & $98.0 \pm 1.2$ & $87.5 \pm 0.5$ & $97.6 \pm 1.3$ & $84.9 \pm 0.4$ & $94.9 \pm 0.9$ & $84.4 \pm 0.1$ & $94.5 \pm 0.5$ & 5.1 \\
   & Adam & $86.5 \pm 0.4$ & $94.6 \pm 0.9$ & $87.9 \pm 0.6$ & $95.2 \pm 1.1$ & $88.4 \pm 0.6$ & $95.5 \pm 1.5$ & $89.0 \pm 0.5$ & $95.9 \pm 1.3$ & $88.5 \pm 0.2$ & $95.0 \pm 1.1$ & 2.5 \\
\addlinespace[2pt]
  BN (no aug) & SGD & $88.1 \pm 0.3$ & $100.0 \pm 0.0$ & $80.9 \pm 0.4$ & $100.0 \pm 0.0$ & $74.3 \pm 0.6$ & $99.9 \pm 0.2$ & $69.1 \pm 0.9$ & $99.5 \pm 1.2$ & $65.4 \pm 0.8$ & $99.1 \pm 0.8$ & 22.8 \\
   & Adam & $82.6 \pm 1.5$ & $94.5 \pm 3.5$ & $84.1 \pm 1.5$ & $95.6 \pm 3.0$ & $83.1 \pm 1.6$ & $93.5 \pm 3.8$ & $83.5 \pm 1.4$ & $96.2 \pm 2.5$ & $82.8 \pm 1.0$ & $94.9 \pm 2.8$ & 1.5 \\
\addlinespace[2pt]
\bottomrule
\end{tabular}}
\vspace{4pt}
\resizebox{\textwidth}{!}{%
\begin{tabular}{ll rr rr rr rr rr r}
\toprule
\multicolumn{2}{l}{\textbf{Batch size $b = 128$}} & \multicolumn{2}{c}{$\sigma_w{=}0.1$} & \multicolumn{2}{c}{$\sigma_w{=}0.5$} & \multicolumn{2}{c}{$\sigma_w{=}1.0$} & \multicolumn{2}{c}{$\sigma_w{=}1.8$} & \multicolumn{2}{c}{$\sigma_w{=}2.5$} & \\
Configuration & Opt. & Test & Train & Test & Train & Test & Train & Test & Train & Test & Train & Spread \\
\midrule
  BN + Aug & SGD & $91.7 \pm 0.2$ & $100.0 \pm 0.0$ & $86.3 \pm 0.4$ & $96.2 \pm 1.0$ & $83.3 \pm 0.3$ & $92.3 \pm 0.1$ & $77.1 \pm 0.4$ & $82.1 \pm 0.3$ & $72.7 \pm 0.4$ & $76.2 \pm 0.4$ & 18.9 \\
   & Adam & $88.7 \pm 0.5$ & $95.2 \pm 0.9$ & $89.0 \pm 0.8$ & $95.2 \pm 1.5$ & $89.1 \pm 1.3$ & $94.9 \pm 2.7$ & $88.6 \pm 1.2$ & $95.1 \pm 2.1$ & $88.2 \pm 1.2$ & $95.2 \pm 1.8$ & 0.9 \\
\addlinespace[2pt]
  LN + Aug & SGD & $89.6 \pm 0.3$ & $99.7 \pm 0.1$ & $87.5 \pm 0.2$ & $97.7 \pm 0.1$ & $83.3 \pm 0.2$ & $89.6 \pm 0.2$ & $78.3 \pm 0.3$ & $80.5 \pm 0.4$ & $74.4 \pm 0.4$ & $74.3 \pm 0.6$ & 15.2 \\
   & Adam & $86.5 \pm 1.1$ & $93.4 \pm 1.6$ & $88.9 \pm 0.3$ & $96.5 \pm 0.8$ & $89.5 \pm 0.6$ & $97.2 \pm 1.1$ & $89.4 \pm 0.6$ & $96.9 \pm 1.3$ & $89.2 \pm 0.8$ & $97.0 \pm 1.4$ & 3.0 \\
\addlinespace[2pt]
  BN (no aug) & SGD & $85.0 \pm 0.3$ & $100.0 \pm 0.0$ & $72.7 \pm 0.7$ & $99.7 \pm 0.5$ & $67.5 \pm 0.5$ & $99.9 \pm 0.1$ & $62.0 \pm 0.5$ & $99.7 \pm 0.2$ & $58.6 \pm 0.7$ & $99.5 \pm 0.2$ & 26.5 \\
   & Adam & $86.9 \pm 2.3$ & $98.8 \pm 2.6$ & $87.9 \pm 2.4$ & $99.0 \pm 3.3$ & $87.6 \pm 1.4$ & $99.7 \pm 0.7$ & $86.1 \pm 0.5$ & $100.0 \pm 0.0$ & $84.5 \pm 0.5$ & $100.0 \pm 0.0$ & 3.5 \\
\addlinespace[2pt]
\bottomrule
\end{tabular}}
\end{table*}
\clearpage

\section{Best-Training Pipeline Comparison}\label{app:best_recipe}

The preceding experiments deliberately use a minimal training
configuration ($\eta{=}10^{-3}$, no momentum, no weight decay, no
augmentation) to isolate the role of initialization scale.  A natural
concern is whether initialization memory persists under a standard,
well-tuned training procedure.  Table~\ref{tab:best_recipe} addresses
this by comparing two configurations at the extreme initialization
scales $\sigma_w \in \{0.1, 2.5\}$: (i)~SGD with lr${=}0.1$,
momentum${=}0.9$, weight decay $=5\times 10^{-4}$, and data
augmentation---the standard ResNet procedure---and (ii)~Adam with
lr${=}10^{-3}$, no weight decay, and the same augmentation.

At $b{=}128$, SGD (best) achieves 94.2\% test accuracy at both
$\sigma_w{=}0.1$ and $2.5$ respectively ($|\Delta|{=}0.0$~pp),
confirming that the standard training procedure erases initialization memory
completely.  Adam also shows a small gap (0.6~pp).  At $b{=}16$, the
SGD procedure exhibits high seed-to-seed variance: with $\eta{=}0.1$ and
only 16 samples per step, the effective per-sample step size is large
enough that some seeds experience a near-divergence in the first epoch
(loss $>50$ vs.\ $\sim$15 for other seeds) and fail to recover fully
within 200~epochs.  This is a training-stability issue---addressable
by learning-rate warmup---not an initialization-memory effect; the gap
between $\sigma_w$ values remains small even for the affected seeds.

\begin{table}[h]
\centering
\caption{\textbf{Best-recipe comparison.}
SGD (standard recipe: $\eta{=}0.1$, momentum${=}0.9$, $L_2$ kernel reg. $\lambda=5\times10^{-4}$)
vs.\ Adam ($\eta{=}10^{-3}$, no WD), both with data augmentation and
cosine decay, 200~epochs on ResNet-9 / CIFAR-10.
Values: mean $\pm$ std over $n{=}10$ seeds.
$\tau_{\mathrm{interp}}$: first epoch with train accuracy $\geq 99.5\%$
(``---'' if not reached within 200 epochs);
format is $\sigma_w{=}0.1$ / $\sigma_w{=}2.5$.
$\tau_{\mathrm{best}}$: best-validation-loss epoch.
$|\Delta|$: absolute difference between $\sigma_w{=}0.1$ and $\sigma_w{=}2.5$ test accuracy.}
\label{tab:best_recipe}
\footnotesize
\setlength{\tabcolsep}{3pt}
\resizebox{\textwidth}{!}{%
\begin{tabular}{ll cc cc cc c}
\toprule
& & \multicolumn{2}{c}{Test Acc.\ (\%)} & \multicolumn{2}{c}{Train Acc.\ (\%)} & \multicolumn{2}{c}{Epochs} & \\
\cmidrule(lr){3-4} \cmidrule(lr){5-6} \cmidrule(lr){7-8}
Optimizer & $b$ & $\sigma_w{=}0.1$ & $\sigma_w{=}2.5$ & $\sigma_w{=}0.1$ & $\sigma_w{=}2.5$ & $\tau_{\mathrm{interp}}$ & $\tau_{\mathrm{best}}$ & $\mathbf{|\Delta|}$ \\
\midrule
SGD (best) & 16
  & $86.5{\pm}6.4$ & $86.7{\pm}3.0$
  & $90.2{\pm}7.7$ & $90.2{\pm}4.2$
  & --- / ---
  & $198{\pm}2$ / $198{\pm}1$
  & $0.2$ \\
 & \textbf{128}
  & $\mathbf{94.2{\pm}0.1}$ & $\mathbf{94.2{\pm}0.2}$
  & $100.0{\pm}0.0$ & $100.0{\pm}0.0$
  & $166{\pm}1$ / $167{\pm}2$
  & $200{\pm}1$ / $200{\pm}1$
  & $\mathbf{0.0}$ \\
 & 256
  & $93.8{\pm}0.5$ & $93.9{\pm}0.3$
  & $100.0{\pm}0.0$ & $100.0{\pm}0.0$
  & $154{\pm}3$ / $153{\pm}2$
  & $198{\pm}2$ / $199{\pm}1$
  & $0.1$ \\
\addlinespace[3pt]
Adam & 16
  & $89.5{\pm}0.7$ & $90.1{\pm}0.5$
  & $100.0{\pm}0.0$ & $100.0{\pm}0.0$
  & $90{\pm}2$ / $90{\pm}3$
  & $17{\pm}3$ / $23{\pm}4$
  & $0.6$ \\
 & 128
  & $88.5{\pm}0.9$ & $89.0{\pm}0.7$
  & $100.0{\pm}0.0$ & $100.0{\pm}0.0$
  & $72{\pm}3$ / $84{\pm}3$
  & $20{\pm}5$ / $32{\pm}8$
  & $0.5$ \\
 & 256
  & $89.1{\pm}1.0$ & $88.6{\pm}0.8$
  & $100.0{\pm}0.0$ & $100.0{\pm}0.0$
  & $68{\pm}4$ / $83{\pm}3$
  & $28{\pm}9$ / $38{\pm}10$
  & $0.5$ \\
\bottomrule
\end{tabular}%
}
\end{table}

\clearpage
\section{Minimal Linear Model for Initialization Memory}
\label{app:toy_model}

This appendix gives the calculations behind the timescale discussion in
Section~\ref{sec:timescales}.  The goal is not a complete theory of
BatchNorm ResNets, but a minimal homogeneous model showing why
gradient-flow-like dynamics can preserve initialization-dependent
quantities and why explicit norm decay, finite steps, stochasticity, and
adaptive preconditioning act on different clocks.

Throughout this appendix, we use \(\phi\) for the loss as a function of
the product represented by the linear network. This avoids overloading
the layer index.  All losses are assumed continuously differentiable, and
all gradients are evaluated at the current product.  In the two-factor
case the product is \(W=UV\).  In the deep-linear case the product is
\[
    F(W_1,\ldots,W_L)=W_LW_{L-1}\cdots W_1 .
\]
All discrete updates below are simultaneous updates of all factors.

For stochastic statements, \(\mathbb E_k[\cdot]\) denotes conditional
expectation given the current iterate.  We write a product-space
minibatch gradient estimate as
\[
    G_k=\bar G_k+\xi_k,
    \qquad
    \mathbb E_k[\xi_k]=0,
    \qquad
    \mathbb E_k\|\xi_k\|_F^2=O(1/b),
\]
where \(\bar G_k\) is the full product-space gradient and \(b\) is the
minibatch size.  In the deep-linear stochastic statements, the layerwise
gradient estimates are induced from the same product-space estimate
\(G_k\) by the chain rule.  This shared \(G_k\) assumption is essential:
the first-order cancellations below need not hold for arbitrary
independent layerwise perturbations.

\paragraph{Gradient flow preserves imbalance.}
Consider a two-layer linear network \(W=UV\), with
\[
    U\in\R^{d_{\mathrm{out}}\times r},
    \qquad
    V\in\R^{r\times d_{\mathrm{in}}},
    \qquad
    \Loss(U,V)=\phi(UV).
\]
Let
\[
    G=\nabla_W\phi(W).
\]
Then
\[
    \nabla_U\Loss=GV^\top,
    \qquad
    \nabla_V\Loss=U^\top G,
\]
so Euclidean gradient flow is
\[
    \dot U=-GV^\top,
    \qquad
    \dot V=-U^\top G.
\]
Define the imbalance
\[
    D(U,V)
    =
    U^\top U - VV^\top
    \in \R^{r\times r}.
\]

\begin{proposition}[Two-factor gradient flow preserves imbalance]
Along Euclidean gradient flow for the two-factor linear network,
\[
    \frac{d}{dt}D(U(t),V(t))=0.
\]
\end{proposition}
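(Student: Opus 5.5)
The plan is a direct computation: differentiate $D=U^\top U-VV^\top$ along the flow with the product rule and substitute the gradient-flow equations $\dot U=-GV^\top$, $\dot V=-U^\top G$ stated just above the proposition. The only ingredients are the chain-rule formulas $\nabla_U\Loss=GV^\top$ and $\nabla_V\Loss=U^\top G$ with $G=\nabla_W\phi(UV)$, which the paper has already recorded. Since $\phi$ is continuously differentiable, the trajectory $(U(t),V(t))$ is $C^1$, so $D$ is differentiable along it.

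First, I differentiate each term separately:
\[
    \frac{d}{dt}\bigl(U^\top U\bigr)=\dot U^\top U+U^\top \dot U
    =-VG^\top U-U^\top G V^\top,
\]
\[
    \frac{d}{dt}\bigl(VV^\top\bigr)=\dot V V^\top+V\dot V^\top
    =-U^\top G V^\top - V G^\top U.
\]
Here I use $(GV^\top)^\top=VG^\top$ and $(U^\top G)^\top=G^\top U$. Subtracting, the two $r\times r$ matrices $-VG^\top U-U^\top GV^\top$ coincide term by term, so $\dot D=0$.

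The only place to be careful is the transposes. The matrix $U^\top G V^\top$ is in general not symmetric, so the cancellation does not come from symmetry. It comes from the same two cross terms $U^\top GV^\top$ and its transpose $VG^\top U$ appearing in both derivatives. The shared product-space gradient $G$, evaluated at the same current product $W=UV$ for both factors, is exactly what makes the terms match; this is the same structural point the paper flags for the stochastic setting.

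There is no real obstacle here; the bookkeeping of the transposes is the only step to check.
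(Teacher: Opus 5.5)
Your proposal is correct and follows the same route as the paper's proof: differentiate $U^\top U$ and $VV^\top$ with the product rule and substitute $\dot U=-GV^\top$, $\dot V=-U^\top G$. Both derivatives then equal $-VG^\top U-U^\top GV^\top$, so their difference vanishes. Your transpose bookkeeping matches the paper's line by line.
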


\begin{proof}
We compute
\[
\begin{aligned}
    \frac{d}{dt}U^\top U
    &=
    \dot U^\top U+U^\top\dot U  \\
    &=
    (-GV^\top)^\top U+U^\top(-GV^\top) \\
    &=
    -VG^\top U-U^\top G V^\top .
\end{aligned}
\]
Similarly,
\[
\begin{aligned}
    \frac{d}{dt}VV^\top
    &=
    \dot V V^\top+V\dot V^\top  \\
    &=
    (-U^\top G)V^\top+V(-U^\top G)^\top \\
    &=
    -U^\top G V^\top - V G^\top U .
\end{aligned}
\]
The two derivatives are identical, hence their difference is zero:
\[
    \frac{d}{dt}
    \left(
        U^\top U - VV^\top
    \right)
    =
    0.
\]
\end{proof}

Thus gradient flow exactly preserves an initialization-dependent
quantity: \(D(U(t),V(t))=D(U(0),V(0))\).  In this minimal homogeneous
model, initialization memory is a conserved charge.

\paragraph{The conserved imbalance fixes the final norm in the scalar model.}
The preceding invariant is not merely formal: in the scalar case it
directly determines the factor norm of any converged solution.

\begin{proposition}[Scalar two-factor norm is fixed by conserved imbalance]
Consider scalar parameters \(a,b\in\R\) with predictor \(p=ab\).  Suppose
gradient flow converges to a point with product \(p_\star\).  Let
\[
    D_0=a_0^2-b_0^2 .
\]
Since \(D=a^2-b^2\) is conserved, the final squared Euclidean norm
satisfies
\[
    \|(a_\infty,b_\infty)\|_2^2
    =
    a_\infty^2+b_\infty^2
    =
    \sqrt{D_0^2+4p_\star^2}.
\]
Therefore, among solutions with the same product \(p_\star\), the final
factor norm is a monotone function of \(|D_0|\).  The final norm thus
retains initialization memory.
\end{proposition}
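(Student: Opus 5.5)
The plan is to specialize the preceding two-factor proposition to the scalar case and then solve for the norm algebraically. With $U=a$ and $V=b$ both $1\times 1$, the imbalance $D(U,V)=U^\top U-VV^\top$ reduces to $a^2-b^2$. By the two-factor conservation proposition, $a(t)^2-b(t)^2=D_0$ for all $t$.

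The first step is to pass this identity to the limit. Assuming gradient flow converges to $(a_\infty,b_\infty)$ with $a_\infty b_\infty=p_\star$, continuity gives $a_\infty^2-b_\infty^2=D_0$.

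The second step is the key algebraic identity
\[
(a^2+b^2)^2=(a^2-b^2)^2+4a^2b^2 .
\]
Evaluating it at the limit gives $(a_\infty^2+b_\infty^2)^2=D_0^2+4p_\star^2$. Since $a_\infty^2+b_\infty^2\ge 0$, we may take the nonnegative square root, which yields $\sqrt{D_0^2+4p_\star^2}$.

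For the monotonicity claim, fix $p_\star$. The map $|D_0|\mapsto\sqrt{D_0^2+4p_\star^2}$ is strictly increasing on $[0,\infty)$. So the final norm is determined by, and strictly increasing in, the conserved initial imbalance. This is the sense in which the final norm retains initialization memory.

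There is no real obstacle; the only point needing care is the limit step. Convergence is part of the hypothesis, and the statement uses $p_\star$ as the limiting product, so continuity of $(a,b)\mapsto a^2-b^2$ and of $(a,b)\mapsto ab$ is all that is required. No claim is needed about which $p_\star$ is reached.
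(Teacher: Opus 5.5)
Your proof is correct and follows the same route as the paper: conservation of $a^2-b^2$, then the identity $(x+y)^2=(x-y)^2+4xy$ with $x=a_\infty^2$, $y=b_\infty^2$, then taking the nonnegative square root. Your explicit continuity argument for passing the conserved imbalance and the product to the limit is a small step the paper leaves implicit.
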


\begin{proof}
Let
\[
    x=a_\infty^2,
    \qquad
    y=b_\infty^2.
\]
Conservation of imbalance gives
\[
    x-y=D_0,
\]
and convergence to product \(p_\star\) gives
\[
    xy=(a_\infty b_\infty)^2=p_\star^2.
\]
Therefore
\[
    (x+y)^2
    =
    (x-y)^2+4xy
    =
    D_0^2+4p_\star^2.
\]
Since \(x+y\ge0\), we obtain
\[
    x+y=\sqrt{D_0^2+4p_\star^2}.
\]
Substituting back \(x=a_\infty^2\) and \(y=b_\infty^2\) gives the claim.
\end{proof}

\paragraph{Continuous-time \(L_2\) decay contracts imbalance.}
Now add coupled continuous-time weight decay:
\[
    \dot U=-GV^\top-\lambda U,
    \qquad
    \dot V=-U^\top G-\lambda V,
    \qquad
    \lambda\ge0.
\]

\begin{proposition}[Continuous-time \(L_2\) decay erases imbalance]
Under the dynamics above,
\[
    \frac{d}{dt}D(t)=-2\lambda D(t),
    \qquad
    D(t)=e^{-2\lambda t}D(0).
\]
\end{proposition}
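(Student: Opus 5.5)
The plan is to repeat the computation from the proof of the two-factor gradient-flow proposition, now keeping track of the extra $-\lambda U$ and $-\lambda V$ terms. Differentiating $U^\top U$ along the modified dynamics gives
\[
    \frac{d}{dt}U^\top U
    =
    \dot U^\top U + U^\top \dot U
    =
    -VG^\top U - U^\top G V^\top - 2\lambda U^\top U ,
\]
because $(-\lambda U)^\top U + U^\top(-\lambda U) = -2\lambda U^\top U$. In the same way,
\[
    \frac{d}{dt}VV^\top
    =
    -U^\top G V^\top - V G^\top U - 2\lambda VV^\top .
\]

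Next I subtract the two expressions. The gradient-driven terms are exactly the ones that matched in the earlier proposition, so they cancel identically. Only the decay contributions remain, which gives $\dot D = -2\lambda(U^\top U - VV^\top) = -2\lambda D$.

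This is a linear matrix ODE with a constant scalar coefficient, so it acts entrywise on $D$. Each entry satisfies $\dot D_{ij} = -2\lambda D_{ij}$. Its unique solution is $D_{ij}(t) = e^{-2\lambda t} D_{ij}(0)$, and uniqueness holds for any $C^1$ trajectory.

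There is no real obstacle here. The one point to state explicitly is that the cancellation uses the same $G = \nabla_W\phi(UV)$ in both factor updates, as in the gradient-flow proposition, and that the weight decay is coupled, meaning the same $\lambda$ multiplies both $U$ and $V$. If the two factors had different decay rates $\lambda_U \neq \lambda_V$, then $\dot D$ would be $-2\lambda_U U^\top U + 2\lambda_V VV^\top$, which is not a multiple of $D$, and the clean exponential law would fail.

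Finally, I would note that the decay rate $2\lambda$ in continuous time corresponds to the timescale $\lambda\sum_k \eta_k$ once time is identified with $t = \sum_k \eta_k$.
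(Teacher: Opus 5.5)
Your proposal is correct and follows the paper's proof: the gradient-driven terms cancel exactly as in the conservation proposition, the coupled decay terms contribute $-2\lambda U^\top U$ and $-2\lambda VV^\top$, and solving the linear ODE $\dot D=-2\lambda D$ gives $D(t)=e^{-2\lambda t}D(0)$. Your added observation that the exponential law needs the same $\lambda$ on both factors is correct; the paper leaves it implicit in its choice of dynamics.
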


\begin{proof}
The gradient-dependent terms are the same as in the previous
calculation and still cancel in the difference.  The decay terms give
\[
\begin{aligned}
    \frac{d}{dt}U^\top U
    &=
    \text{gradient terms}
    -2\lambda U^\top U,
    \\
    \frac{d}{dt}VV^\top
    &=
    \text{same gradient terms}
    -2\lambda VV^\top .
\end{aligned}
\]
Subtracting yields
\[
    \dot D(t)
    =
    -2\lambda
    \left(
        U^\top U - VV^\top
    \right)
    =
    -2\lambda D(t).
\]
Solving this linear differential equation gives
\[
    D(t)=e^{-2\lambda t}D(0).
\]
\end{proof}

Thus explicit norm control contracts the imbalance on the continuous-time
clock \(\lambda t\).  In discrete training, the gradient-flow time
corresponding to a learning-rate schedule \((\eta_k)\) is
approximately \(\sum_k \eta_k\), so the natural \(L_2\) clock is
\[
    \mathcal T_{L_2}
    =
    \lambda \sum_{k<K}\eta_k
    =
    K\eta\lambda
    \quad\text{for constant \(\eta\)}.
\]

\paragraph{Coupled discrete weight decay.}
The corresponding discrete calculation makes the same clock explicit.
Consider the two-factor update
\[
    U_{k+1}
    =
    (1-\eta_k\lambda)U_k-\eta_kG_kV_k^\top,
    \qquad
    V_{k+1}
    =
    (1-\eta_k\lambda)V_k-\eta_kU_k^\top G_k.
\]
Let
\[
    c_k=1-\eta_k\lambda.
\]

\begin{proposition}[Discrete weight decay contracts imbalance up to finite-step leakage]
For the coupled discrete update above,
\[
    D_{k+1}
    =
    c_k^2D_k
    +
    \eta_k^2
    \left(
        V_kG_k^\top G_kV_k^\top
        -
        U_k^\top G_kG_k^\top U_k
    \right).
\]
\end{proposition}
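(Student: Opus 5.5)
We prove the identity by direct expansion of the two Gram matrices after one step, exactly parallel to the continuous-time computation in the previous propositions. Write the update as $U_{k+1}=c_kU_k-\eta_kG_kV_k^\top$ and $V_{k+1}=c_kV_k-\eta_kU_k^\top G_k$, with $G_k$ the (shared) product-space gradient or gradient estimate evaluated at $W_k=U_kV_k$. No property of $G_k$ beyond its being the same matrix in both updates is needed, so the identity holds verbatim for the stochastic estimate as well.

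First, expand $U_{k+1}^\top U_{k+1}$ into three groups:
\[
    U_{k+1}^\top U_{k+1}
    =
    c_k^2U_k^\top U_k
    -c_k\eta_k\left(V_kG_k^\top U_k+U_k^\top G_kV_k^\top\right)
    +\eta_k^2V_kG_k^\top G_kV_k^\top .
\]
Second, expand $V_{k+1}V_{k+1}^\top$ in the same way:
\[
    V_{k+1}V_{k+1}^\top
    =
    c_k^2V_kV_k^\top
    -c_k\eta_k\left(U_k^\top G_kV_k^\top+V_kG_k^\top U_k\right)
    +\eta_k^2U_k^\top G_kG_k^\top U_k .
\]
Third, subtract. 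The cross terms of order $c_k\eta_k$ coincide as matrices; this is the same cancellation as in the gradient-flow proposition. It relies on the shared $G_k$ and on $(G_kV_k^\top)^\top=V_kG_k^\top$. The difference therefore leaves $c_k^2D_k$ plus the $\eta_k^2$ remainder, which is the claimed formula.

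The only point needing care is the transpose bookkeeping in the cross terms and in the quadratic terms. The quadratic term from $U_{k+1}$ is $(G_kV_k^\top)^\top(G_kV_k^\top)=V_kG_k^\top G_kV_k^\top$. The one from $V_{k+1}$ is $(U_k^\top G_k)(U_k^\top G_k)^\top=U_k^\top G_kG_k^\top U_k$. Both are $r\times r$, as required. The step is routine, but the sign and ordering of these quadratic terms is where an error would most likely creep in, so I would verify it against the scalar case $r=d_{\mathrm{in}}=d_{\mathrm{out}}=1$, where the remainder reduces to $\eta_k^2G_k^2(b^2-a^2)$.
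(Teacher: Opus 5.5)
Your proposal is correct and matches the paper's proof: expand $U_{k+1}^\top U_{k+1}$ and $V_{k+1}V_{k+1}^\top$, note that the $c_k\eta_k$ cross terms coincide and cancel, and read off $c_k^2D_k$ plus the $\eta_k^2$ remainder. Your scalar check $\eta_k^2G_k^2(b^2-a^2)$ is consistent with the stated formula.
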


\begin{proof}
Expand
\[
    U_{k+1}^\top U_{k+1}
    =
    c_k^2U_k^\top U_k
    -
    c_k\eta_k
    \left(
        V_kG_k^\top U_k
        +
        U_k^\top G_kV_k^\top
    \right)
    +
    \eta_k^2V_kG_k^\top G_kV_k^\top .
\]
Similarly,
\[
    V_{k+1}V_{k+1}^\top
    =
    c_k^2V_kV_k^\top
    -
    c_k\eta_k
    \left(
        U_k^\top G_kV_k^\top
        +
        V_kG_k^\top U_k
    \right)
    +
    \eta_k^2U_k^\top G_kG_k^\top U_k .
\]
The first-order terms are identical and cancel after subtraction.  Hence
\[
\begin{aligned}
    D_{k+1}
    &=
    U_{k+1}^\top U_{k+1}
    -
    V_{k+1}V_{k+1}^\top \\
    &=
    c_k^2
    \left(
        U_k^\top U_k-V_kV_k^\top
    \right)
    +
    \eta_k^2
    \left(
        V_kG_k^\top G_kV_k^\top
        -
        U_k^\top G_kG_k^\top U_k
    \right),
\end{aligned}
\]
which is the stated identity.
\end{proof}

Ignoring the second-order finite-step leakage, the multiplicative decay
over \(K\) updates is
\[
    \prod_{k<K}(1-\eta_k\lambda)^2.
\]
When \(\eta_k\lambda\ll1\), this is approximated by
\[
    \prod_{k<K}(1-\eta_k\lambda)^2
    \approx
    \exp\!\left(
        -2\lambda\sum_{k<K}\eta_k
    \right).
\]
This is the discrete counterpart of the continuous-time clock
\(\mathcal T_{L_2}=\lambda\sum_k\eta_k\).

\paragraph{Finite-step Euclidean updates leak imbalance at second order.}
Without explicit \(L_2\) decay, simultaneous Euclidean gradient updates
do not preserve imbalance exactly.  However, the failure of conservation
starts only at second order in the step size.

\begin{proposition}[Two-factor finite-step leakage is second order]
For the simultaneous update
\[
    U_{k+1}=U_k-\eta_k G_kV_k^\top,
    \qquad
    V_{k+1}=V_k-\eta_k U_k^\top G_k,
\]
one has the exact identity
\[
    D_{k+1}-D_k
    =
    \eta_k^2
    \left(
        V_kG_k^\top G_kV_k^\top
        -
        U_k^\top G_kG_k^\top U_k
    \right).
\]
In particular, all first-order terms in \(\eta_k\) cancel exactly.
\end{proposition}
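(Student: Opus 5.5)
The identity is the $\lambda=0$ specialization of the preceding proposition on discrete weight decay. With $c_k=1-\eta_k\lambda=1$, that proposition gives exactly $D_{k+1}=D_k+\eta_k^2(V_kG_k^\top G_kV_k^\top-U_k^\top G_kG_k^\top U_k)$. Even so, I would write out the direct expansion, since it makes the first-order cancellation visible and does not depend on the earlier bookkeeping.

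I would proceed in three steps, all at the fixed iterate $(U_k,V_k)$ with the fixed gradient $G_k$. Because the update is simultaneous, both factors use the same $G_k$ evaluated at the old product.

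First, I would expand $U_{k+1}^\top U_{k+1}=(U_k-\eta_kG_kV_k^\top)^\top(U_k-\eta_kG_kV_k^\top)$ into
\[
U_k^\top U_k-\eta_k\bigl(V_kG_k^\top U_k+U_k^\top G_kV_k^\top\bigr)+\eta_k^2V_kG_k^\top G_kV_k^\top .
\]

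Second, I would expand $V_{k+1}V_{k+1}^\top=(V_k-\eta_kU_k^\top G_k)(V_k-\eta_kU_k^\top G_k)^\top$ into
\[
V_kV_k^\top-\eta_k\bigl(U_k^\top G_kV_k^\top+V_kG_k^\top U_k\bigr)+\eta_k^2U_k^\top G_kG_k^\top U_k .
\]

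Third, I would subtract the two. The zeroth-order terms give $D_k$. The first-order brackets contain the same two matrices, $V_kG_k^\top U_k$ and its transpose $U_k^\top G_kV_k^\top$, so they cancel exactly. This is the discrete analogue of the cancellation in the gradient-flow proposition. What remains is the stated $\eta_k^2$ term, and the expansion has no higher-order terms, so the identity is exact.

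There is no real obstacle here. The only points needing care are transposing correctly, $(G_kV_k^\top)^\top=V_kG_k^\top$, and noting explicitly that the cancellation requires both factors to use the same $G_k$. For stochastic $G_k=\bar G_k+\xi_k$ drawn from a shared product-space estimate, the identity holds pathwise. This justifies the remark that leakage begins at order $\eta_k^2$.
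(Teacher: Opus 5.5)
Your proposal is correct and matches the paper's proof essentially step for step. The paper likewise notes that this is the $\lambda=0$ (i.e.\ $c_k=1$) case of the discrete weight-decay proposition, then expands $U_{k+1}^\top U_{k+1}$ and $V_{k+1}V_{k+1}^\top$ directly and observes that the identical first-order brackets cancel on subtraction; your expansions and transposes are right, and your remark that the cancellation needs both factors to use the same $G_k$ agrees with the paper's shared-$G_k$ assumption.
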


\begin{proof}
This is the previous proposition with \(\lambda=0\), equivalently
\(c_k=1\).  Expanding directly,
\[
\begin{aligned}
    U_{k+1}^\top U_{k+1}
    &=
    U_k^\top U_k
    -
    \eta_k
    \left(
        V_kG_k^\top U_k
        +
        U_k^\top G_kV_k^\top
    \right)
    +
    \eta_k^2V_kG_k^\top G_kV_k^\top ,
\end{aligned}
\]
and
\[
\begin{aligned}
    V_{k+1}V_{k+1}^\top
    &=
    V_kV_k^\top
    -
    \eta_k
    \left(
        U_k^\top G_kV_k^\top
        +
        V_kG_k^\top U_k
    \right)
    +
    \eta_k^2U_k^\top G_kG_k^\top U_k .
\end{aligned}
\]
The first-order terms are identical.  Subtracting leaves exactly the
second-order expression.
\end{proof}

This calculation proves an order statement, not a monotonicity statement:
finite steps can move the conserved quantity at order \(\eta_k^2\), but
the sign of the movement depends on the current factors and gradient.

\paragraph{Deep linear gradient flow preserves all adjacent imbalances.}
The same conservation law holds at every hidden layer of a deep linear
network.  Let
\[
    F(W_1,\ldots,W_L)=W_LW_{L-1}\cdots W_1,
    \qquad
    \Loss(W_1,\ldots,W_L)=\phi(F),
\]
where
\[
    W_j\in\R^{d_j\times d_{j-1}},
    \qquad
    j=1,\ldots,L.
\]
For \(j=1,\ldots,L-1\), define the adjacent-layer imbalance
\[
    D_j
    =
    W_jW_j^\top
    -
    W_{j+1}^\top W_{j+1}
    \in\R^{d_j\times d_j}.
\]

\begin{proposition}[Deep linear gradient flow preserves adjacent imbalances]
Assume \(\phi\) is differentiable.  Along Euclidean gradient flow
\[
    \dot W_j=-\nabla_{W_j}\Loss,
    \qquad
    j=1,\ldots,L,
\]
one has
\[
    \frac{d}{dt}D_j(t)=0,
    \qquad
    j=1,\ldots,L-1.
\]
With coupled continuous-time weight decay,
\[
    \dot W_j=-\nabla_{W_j}\Loss-\lambda W_j,
\]
the imbalances satisfy
\[
    \frac{d}{dt}D_j(t)=-2\lambda D_j(t),
    \qquad
    D_j(t)=e^{-2\lambda t}D_j(0).
\]
\end{proposition}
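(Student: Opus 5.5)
The plan is to reduce the deep case to the two-factor computation already done in the paper, by writing the layerwise gradients through the chain rule. Let \(G=\nabla_F\phi(F)\) and, for each \(j\), set the prefix and suffix products
\[
    P_{j}=W_{j-1}\cdots W_1\in\R^{d_{j-1}\times d_0},
    \qquad
    S_{j}=W_L\cdots W_{j+1}\in\R^{d_L\times d_j},
\]
with empty products equal to identities, so that \(F=S_jW_jP_j\). Differentiating \(\phi(S_jW_jP_j)\) in \(W_j\) gives
\[
    \nabla_{W_j}\Loss=S_j^\top G P_j^\top .
\]
The key observation is that \(S_j=S_{j+1}W_{j+1}\) and \(P_{j+1}=W_jP_j\), so
\[
    \nabla_{W_j}\Loss\,W_j^\top = S_j^\top G P_j^\top W_j^\top = S_j^\top G P_{j+1}^\top,
    \qquad
    W_{j+1}^\top\nabla_{W_{j+1}}\Loss = W_{j+1}^\top S_{j+1}^\top G P_{j+1}^\top = S_j^\top G P_{j+1}^\top .
\]
Both expressions equal the same \(d_j\times d_j\) matrix \(M_j:=S_j^\top G P_{j+1}^\top\). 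This is the deep analogue of the identity \(GV^\top\) versus \(U^\top G\) in the two-factor proof.

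With this in hand, I would differentiate \(D_j\) directly along the flow:
\[
    \frac{d}{dt}W_jW_j^\top=\dot W_jW_j^\top+W_j\dot W_j^\top=-(M_j+M_j^\top),
\]
\[
    \frac{d}{dt}W_{j+1}^\top W_{j+1}=\dot W_{j+1}^\top W_{j+1}+W_{j+1}^\top\dot W_{j+1}=-(M_j^\top+M_j).
\]
Subtracting the two lines gives \(\dot D_j=0\) for every \(j=1,\ldots,L-1\).

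For the weight-decay dynamics, the extra terms \(-\lambda W_j\) contribute \(-2\lambda W_jW_j^\top\) and \(-2\lambda W_{j+1}^\top W_{j+1}\) respectively, exactly as in the proposition on continuous-time \(L_2\) decay. The gradient parts still cancel, so \(\dot D_j=-2\lambda D_j\). This linear matrix ODE with constant scalar coefficient integrates entrywise to \(D_j(t)=e^{-2\lambda t}D_j(0)\).

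The only step needing care is the bookkeeping of the chain-rule formula and the factorizations \(S_j=S_{j+1}W_{j+1}\), \(P_{j+1}=W_jP_j\), including the boundary cases \(j=1\) (where \(P_1=I\)) and \(j+1=L\) (where \(S_L=I\)). Once \(M_j\) is identified as the common term, the rest is the same cancellation as in the two-factor case.
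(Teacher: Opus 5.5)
Your proof is correct and follows essentially the same argument as the paper. Your $S_j,P_j$ are the paper's $A_j,B_j$, and your common matrix $M_j$ encodes the paper's identity $H_jW_j^\top=W_{j+1}^\top H_{j+1}$; you get the companion identity $W_jH_j^\top=H_{j+1}^\top W_{j+1}$ by transposition where the paper verifies it separately, and you handle the weight-decay term identically.
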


\begin{proof}
For each layer \(j\), define
\[
    A_j=W_LW_{L-1}\cdots W_{j+1},
    \qquad
    B_j=W_{j-1}\cdots W_1,
\]
with the convention that empty products are identity matrices.  Then
\[
    F=A_jW_jB_j.
\]
Let
\[
    G=\nabla_F\phi(F).
\]
By the chain rule,
\[
    \nabla_{W_j}\Loss
    =
    A_j^\top G B_j^\top.
\]
We use the identities
\[
    A_j=A_{j+1}W_{j+1},
    \qquad
    B_{j+1}=W_jB_j.
\]
Let
\[
    H_j=\nabla_{W_j}\Loss=A_j^\top G B_j^\top.
\]
Then
\[
\begin{aligned}
    H_jW_j^\top
    &=
    A_j^\top G B_j^\top W_j^\top  \\
    &=
    W_{j+1}^\top A_{j+1}^\top G B_{j+1}^\top  \\
    &=
    W_{j+1}^\top H_{j+1},
\end{aligned}
\]
and
\[
\begin{aligned}
    W_jH_j^\top
    &=
    W_jB_jG^\top A_j  \\
    &=
    B_{j+1}G^\top A_{j+1}W_{j+1}  \\
    &=
    H_{j+1}^\top W_{j+1}.
\end{aligned}
\]
Therefore, under gradient flow,
\[
\begin{aligned}
    \frac{d}{dt}(W_jW_j^\top)
    &=
    -H_jW_j^\top-W_jH_j^\top  \\
    &=
    -W_{j+1}^\top H_{j+1}
    -
    H_{j+1}^\top W_{j+1}.
\end{aligned}
\]
On the other hand,
\[
\begin{aligned}
    \frac{d}{dt}(W_{j+1}^\top W_{j+1})
    &=
    -H_{j+1}^\top W_{j+1}
    -
    W_{j+1}^\top H_{j+1}.
\end{aligned}
\]
The two derivatives are identical, so
\[
    \frac{d}{dt}
    \left(
        W_jW_j^\top
        -
        W_{j+1}^\top W_{j+1}
    \right)
    =
    0.
\]
This proves conservation.

With weight decay, the gradient-dependent terms are unchanged and still
cancel.  The additional terms are
\[
    -2\lambda W_jW_j^\top
    \qquad\text{and}\qquad
    -2\lambda W_{j+1}^\top W_{j+1},
\]
so
\[
    \dot D_j
    =
    -2\lambda
    \left(
        W_jW_j^\top
        -
        W_{j+1}^\top W_{j+1}
    \right)
    =
    -2\lambda D_j.
\]
Solving gives \(D_j(t)=e^{-2\lambda t}D_j(0)\).
\end{proof}

\paragraph{Finite-step leakage in deep linear networks.}
The deep-linear finite-step calculation is the discrete analogue of the
previous conservation law.  The key point is that the layerwise gradient
estimates must be induced by the same product-space gradient estimate.

For a product-space gradient estimate \(G_k\), define
\[
    A_{j,k}=W_{L,k}\cdots W_{j+1,k},
    \qquad
    B_{j,k}=W_{j-1,k}\cdots W_{1,k},
\]
and
\[
    H_{j,k}
    =
    A_{j,k}^\top G_k B_{j,k}^\top.
\]
Thus \(H_{j,k}\) is the layer-\(j\) gradient estimate induced by \(G_k\).

\begin{proposition}[Deep finite-step leakage is second order]
Consider simultaneous Euclidean updates
\[
    W_{j,k+1}=W_{j,k}-\eta_k H_{j,k},
    \qquad
    j=1,\ldots,L.
\]
Then, for \(j=1,\ldots,L-1\),
\[
    D_{j,k+1}-D_{j,k}
    =
    \eta_k^2
    \left(
        H_{j,k}H_{j,k}^\top
        -
        H_{j+1,k}^\top H_{j+1,k}
    \right).
\]
In particular, all first-order terms in \(\eta_k\) cancel exactly.
\end{proposition}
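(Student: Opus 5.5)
We will mirror the two-factor finite-step computation, using the same algebraic identities as the deep gradient-flow proof; they are purely algebraic and hold at every iterate $k$ with $G$ replaced by the estimate $G_k$. Fix $j\in\{1,\ldots,L-1\}$ and expand both Gram matrices after one simultaneous step:
\[
    W_{j,k+1}W_{j,k+1}^\top
    =
    W_{j,k}W_{j,k}^\top
    -\eta_k\bigl(H_{j,k}W_{j,k}^\top+W_{j,k}H_{j,k}^\top\bigr)
    +\eta_k^2 H_{j,k}H_{j,k}^\top ,
\]
\[
    W_{j+1,k+1}^\top W_{j+1,k+1}
    =
    W_{j+1,k}^\top W_{j+1,k}
    -\eta_k\bigl(H_{j+1,k}^\top W_{j+1,k}+W_{j+1,k}^\top H_{j+1,k}\bigr)
    +\eta_k^2 H_{j+1,k}^\top H_{j+1,k}.
\]
Subtracting gives $D_{j,k+1}-D_{j,k}$ as a first-order bracket plus exactly the claimed $\eta_k^2$ term. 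The expansion is exact, with no higher-order terms, because each update is affine in $\eta_k$ and each Gram matrix is quadratic in a single factor.

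The remaining step is to show that the first-order brackets coincide. We use the factorizations $A_{j,k}=A_{j+1,k}W_{j+1,k}$ and $B_{j+1,k}=W_{j,k}B_{j,k}$, both evaluated at the \emph{same} iterate $k$. These give
\[
    H_{j,k}W_{j,k}^\top
    =A_{j,k}^\top G_kB_{j,k}^\top W_{j,k}^\top
    =W_{j+1,k}^\top A_{j+1,k}^\top G_k B_{j+1,k}^\top
    =W_{j+1,k}^\top H_{j+1,k},
\]
and, by transposing, $W_{j,k}H_{j,k}^\top=H_{j+1,k}^\top W_{j+1,k}$. The two $\eta_k$ brackets are therefore equal term by term, so they cancel.

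The main point requiring care is the hypothesis noted before the proposition. The cancellation uses that $H_{j,k}$ and $H_{j+1,k}$ are both induced from a single $G_k$, and that all $A$ and $B$ factors are taken at the pre-update iterate, which the simultaneous update guarantees. With independent layerwise noise the identity $H_{j,k}W_{j,k}^\top=W_{j+1,k}^\top H_{j+1,k}$ would fail, and the first-order terms would no longer cancel. Beyond this check the proof is routine algebra, and the case $L=2$ recovers the two-factor proposition with $U=W_2$, $V=W_1$.
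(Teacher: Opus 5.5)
Your proof is correct and follows the paper's argument essentially line for line. You expand both Gram matrices exactly, then use $A_{j,k}=A_{j+1,k}W_{j+1,k}$ and $B_{j+1,k}=W_{j,k}B_{j,k}$ to get $H_{j,k}W_{j,k}^\top=W_{j+1,k}^\top H_{j+1,k}$ and its transpose, so the first-order brackets cancel. One small nit on your closing aside: with $U=W_2$, $V=W_1$ one gets $D_1=-D$, so the case $L=2$ recovers the two-factor identity only up to that sign convention.
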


\begin{proof}
Expanding the first term in \(D_{j,k+1}\),
\[
\begin{aligned}
    W_{j,k+1}W_{j,k+1}^\top
    &=
    W_{j,k}W_{j,k}^\top
    -
    \eta_k
    \left(
        H_{j,k}W_{j,k}^\top
        +
        W_{j,k}H_{j,k}^\top
    \right)  \\
    &\qquad
    +
    \eta_k^2H_{j,k}H_{j,k}^\top .
\end{aligned}
\]
Expanding the second term,
\[
\begin{aligned}
    W_{j+1,k+1}^\top W_{j+1,k+1}
    &=
    W_{j+1,k}^\top W_{j+1,k} \\
    &\quad
    -
    \eta_k
    \left(
        H_{j+1,k}^\top W_{j+1,k}
        +
        W_{j+1,k}^\top H_{j+1,k}
    \right)  \\
    &\quad
    +
    \eta_k^2H_{j+1,k}^\top H_{j+1,k}.
\end{aligned}
\]
It remains to verify that the first-order terms are identical.  Since
\[
    H_{j,k}
    =
    A_{j,k}^\top G_kB_{j,k}^\top,
\]
and
\[
    A_{j,k}=A_{j+1,k}W_{j+1,k},
    \qquad
    B_{j+1,k}=W_{j,k}B_{j,k},
\]
we have
\[
\begin{aligned}
    H_{j,k}W_{j,k}^\top
    &=
    A_{j,k}^\top G_kB_{j,k}^\top W_{j,k}^\top \\
    &=
    W_{j+1,k}^\top A_{j+1,k}^\top G_kB_{j+1,k}^\top \\
    &=
    W_{j+1,k}^\top H_{j+1,k},
\end{aligned}
\]
and
\[
\begin{aligned}
    W_{j,k}H_{j,k}^\top
    &=
    W_{j,k}B_{j,k}G_k^\top A_{j,k} \\
    &=
    B_{j+1,k}G_k^\top A_{j+1,k}W_{j+1,k} \\
    &=
    H_{j+1,k}^\top W_{j+1,k}.
\end{aligned}
\]
Therefore the entire first-order contribution in the expansion of
\(W_{j,k+1}W_{j,k+1}^\top\) is identical to the first-order contribution
in the expansion of \(W_{j+1,k+1}^\top W_{j+1,k+1}\).  Subtracting the
two expansions cancels all first-order terms and leaves exactly
\[
    D_{j,k+1}-D_{j,k}
    =
    \eta_k^2
    \left(
        H_{j,k}H_{j,k}^\top
        -
        H_{j+1,k}^\top H_{j+1,k}
    \right).
\]
\end{proof}

\paragraph{The minibatch-dependent leakage clock.}
The previous proposition is deterministic: it holds for any product-space
gradient estimate \(G_k\).  To isolate the batch-size-dependent part,
write
\[
    G_k=\bar G_k+\xi_k,
    \qquad
    \mathbb E_k[\xi_k]=0,
    \qquad
    \mathbb E_k\|\xi_k\|_F^2=O(1/b).
\]
Define
\[
    \bar H_{j,k}
    =
    A_{j,k}^\top \bar G_k B_{j,k}^\top,
    \qquad
    \Delta_{j,k}
    =
    A_{j,k}^\top \xi_k B_{j,k}^\top.
\]
Then
\[
    H_{j,k}=\bar H_{j,k}+\Delta_{j,k},
    \qquad
    \mathbb E_k[\Delta_{j,k}]=0.
\]
Taking conditional expectation in the finite-step leakage identity gives
\[
\begin{aligned}
    \mathbb E_k[D_{j,k+1}-D_{j,k}]
    &=
    \eta_k^2
    \left(
        \bar H_{j,k}\bar H_{j,k}^\top
        -
        \bar H_{j+1,k}^\top\bar H_{j+1,k}
    \right) \\
    &\quad
    +
    \eta_k^2
    \mathbb E_k
    \left[
        \Delta_{j,k}\Delta_{j,k}^\top
        -
        \Delta_{j+1,k}^\top\Delta_{j+1,k}
    \right].
\end{aligned}
\]
The first line is the deterministic full-batch finite-step leakage.  It
is second order in \(\eta_k\), but it is not batch-size dependent.  The
second line is the minibatch-dependent stochastic correction.

On any portion of the trajectory where the operator norms of the factors
are bounded, the assumption
\(\mathbb E_k\|\xi_k\|_F^2=O(1/b)\) implies
\[
    \mathbb E_k\|\Delta_{j,k}\|_F^2=O(1/b)
\]
for every fixed layer \(j\).  Therefore the batch-dependent part of the
conditional expected leakage is \(O(\eta_k^2/b)\) per update.  Summing
over updates gives the stochastic finite-step clock
\[
    \mathcal T_{\mathrm{SGD}}
    =
    \frac{1}{b}\sum_{k<K}\eta_k^2
    =
    \frac{K\eta^2}{b}
    \quad\text{for constant \(\eta\)}.
\]
This clock measures the cumulative size of the minibatch-dependent
second-order correction.  It is an order statement; it does not assert
that the correction has a fixed sign.

\paragraph{Generic preconditioning creates a first-order imbalance term.}
Adaptive methods apply a preconditioned update rather than the Euclidean
gradient update.  The following calculation shows exactly where the
first-order conservation cancellation can fail.

Let
\[
    Q_{j,k}=P_{j,k}(H_{j,k}),
\]
where \(P_{j,k}\) is a layerwise or coordinatewise preconditioning map,
possibly depending on the current iterate and optimizer state.  Consider
updates
\[
    W_{j,k+1}
    =
    W_{j,k}
    -
    \eta_k Q_{j,k}.
\]

\begin{proposition}[Preconditioning creates a first-order imbalance term]
For the preconditioned update above,
\[
\begin{aligned}
    D_{j,k+1}-D_{j,k}
    &=
    -\eta_k
    \Big[
        Q_{j,k}W_{j,k}^\top
        +
        W_{j,k}Q_{j,k}^\top
        -
        Q_{j+1,k}^\top W_{j+1,k}
        -
        W_{j+1,k}^\top Q_{j+1,k}
    \Big] \\
    &\quad
    +
    \eta_k^2
    \left[
        Q_{j,k}Q_{j,k}^\top
        -
        Q_{j+1,k}^\top Q_{j+1,k}
    \right].
\end{aligned}
\]
For Euclidean gradient descent, \(Q_{j,k}=H_{j,k}\), and the first-order
bracket vanishes by the chain-rule identity proved above.  For a general
adaptive preconditioner, \(Q_{j,k}\) need not satisfy that identity, so
imbalance can change at order \(\eta_k\) per update.
\end{proposition}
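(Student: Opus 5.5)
The identity is purely algebraic: expand both Gram products after one step and subtract, exactly as in the proof of the deep finite-step leakage proposition, but without invoking the chain-rule cancellation. Concretely, I substitute $W_{j,k+1}=W_{j,k}-\eta_kQ_{j,k}$ into $W_{j,k+1}W_{j,k+1}^\top$ and $W_{j+1,k+1}=W_{j+1,k}-\eta_kQ_{j+1,k}$ into $W_{j+1,k+1}^\top W_{j+1,k+1}$. Bilinearity gives
\[
    W_{j,k+1}W_{j,k+1}^\top
    =
    W_{j,k}W_{j,k}^\top
    -\eta_k\left(Q_{j,k}W_{j,k}^\top+W_{j,k}Q_{j,k}^\top\right)
    +\eta_k^2Q_{j,k}Q_{j,k}^\top,
\]
and
\[
    W_{j+1,k+1}^\top W_{j+1,k+1}
    =
    W_{j+1,k}^\top W_{j+1,k}
    -\eta_k\left(Q_{j+1,k}^\top W_{j+1,k}+W_{j+1,k}^\top Q_{j+1,k}\right)
    +\eta_k^2Q_{j+1,k}^\top Q_{j+1,k}.
\]
Subtracting the second expansion from the first, the zeroth-order terms give $D_{j,k}$. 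The first- and second-order terms group into exactly the two brackets of the stated formula. No assumption on $P_{j,k}$ is needed, because the preconditioned directions enter only through these bilinear expansions.

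For the Euclidean specialization, I set $Q_{j,k}=H_{j,k}$. The two identities proved inside the deep finite-step leakage proposition, $H_{j,k}W_{j,k}^\top=W_{j+1,k}^\top H_{j+1,k}$ and $W_{j,k}H_{j,k}^\top=H_{j+1,k}^\top W_{j+1,k}$, show that the first-order bracket vanishes. These identities follow from $A_{j,k}=A_{j+1,k}W_{j+1,k}$ and $B_{j+1,k}=W_{j,k}B_{j,k}$. This recovers that proposition as a special case.

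The only substantive point is the last claim, that for a general preconditioner the bracket need not vanish. Since this is a "need not" statement, one explicit instance suffices. I would use the scalar two-factor case $L=2$ with $W_1=a$ and $W_2=c$, so $H_1=cg$ and $H_2=ag$. Take a diagonal (Adam/sign-like) preconditioner $Q_1=p_1cg$ and $Q_2=p_2ag$ with $p_1\neq p_2$. The bracket then equals $2acg(p_1-p_2)$, which is nonzero whenever $acg\neq0$. Sign-descent, the $\beta\to0$ limit of Adam with $Q=\mathrm{sign}(H)$, gives the bracket $2(|c|-|a|)\,\mathrm{sign}(g)\,\mathrm{sign}(ac)$. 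This is nonzero when $|a|\neq|c|$, so the imbalance changes at order $\eta_k$ per update. Choosing a clean counterexample like this is the main care point; the rest is routine expansion.
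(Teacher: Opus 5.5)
Your proposal is correct and follows the paper's proof: you expand both adjacent Gram matrices bilinearly, subtract, and, for $Q_{j,k}=H_{j,k}$, remove the first-order bracket using the chain-rule identities from the deep finite-step proposition; your $p_1\neq p_2$ instance is the paper's later scalar example with $\alpha_k\neq\beta_k$. One small slip: for sign descent the bracket is $2\,\mathrm{sign}(g)\,\mathrm{sign}(ac)\,(|a|-|c|)$, not $(|c|-|a|)$; this does not change your conclusion that it is nonzero, which holds when $acg\neq0$ and $|a|\neq|c|$.
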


\begin{proof}
Expanding the first adjacent Gram matrix,
\[
\begin{aligned}
    W_{j,k+1}W_{j,k+1}^\top
    &=
    W_{j,k}W_{j,k}^\top
    -
    \eta_k
    \left(
        Q_{j,k}W_{j,k}^\top
        +
        W_{j,k}Q_{j,k}^\top
    \right)  \\
    &\qquad
    +
    \eta_k^2Q_{j,k}Q_{j,k}^\top .
\end{aligned}
\]
Expanding the second adjacent Gram matrix,
\[
\begin{aligned}
    W_{j+1,k+1}^\top W_{j+1,k+1}
    &=
    W_{j+1,k}^\top W_{j+1,k}
    -
    \eta_k
    \left(
        Q_{j+1,k}^\top W_{j+1,k}
        +
        W_{j+1,k}^\top Q_{j+1,k}
    \right)  \\
    &\qquad
    +
    \eta_k^2Q_{j+1,k}^\top Q_{j+1,k}.
\end{aligned}
\]
Subtracting the second expansion from the first gives the stated
identity.  If \(Q_{j,k}=H_{j,k}\), the first-order bracket is zero by
the identities
\[
    H_{j,k}W_{j,k}^\top
    =
    W_{j+1,k}^\top H_{j+1,k},
    \qquad
    W_{j,k}H_{j,k}^\top
    =
    H_{j+1,k}^\top W_{j+1,k}
\]
proved in the finite-step Euclidean case.
\end{proof}

This calculation proves that the order of imbalance movement can change
from second order to first order under preconditioning.  It does not
claim that every adaptive optimizer monotonically contracts imbalance;
rather, it identifies the algebraic cancellation that Euclidean gradient
flow and Euclidean finite-step updates enjoy, and shows that generic
preconditioning need not preserve it.

\begin{figure}[]
\centering
\includegraphics[width=\textwidth]{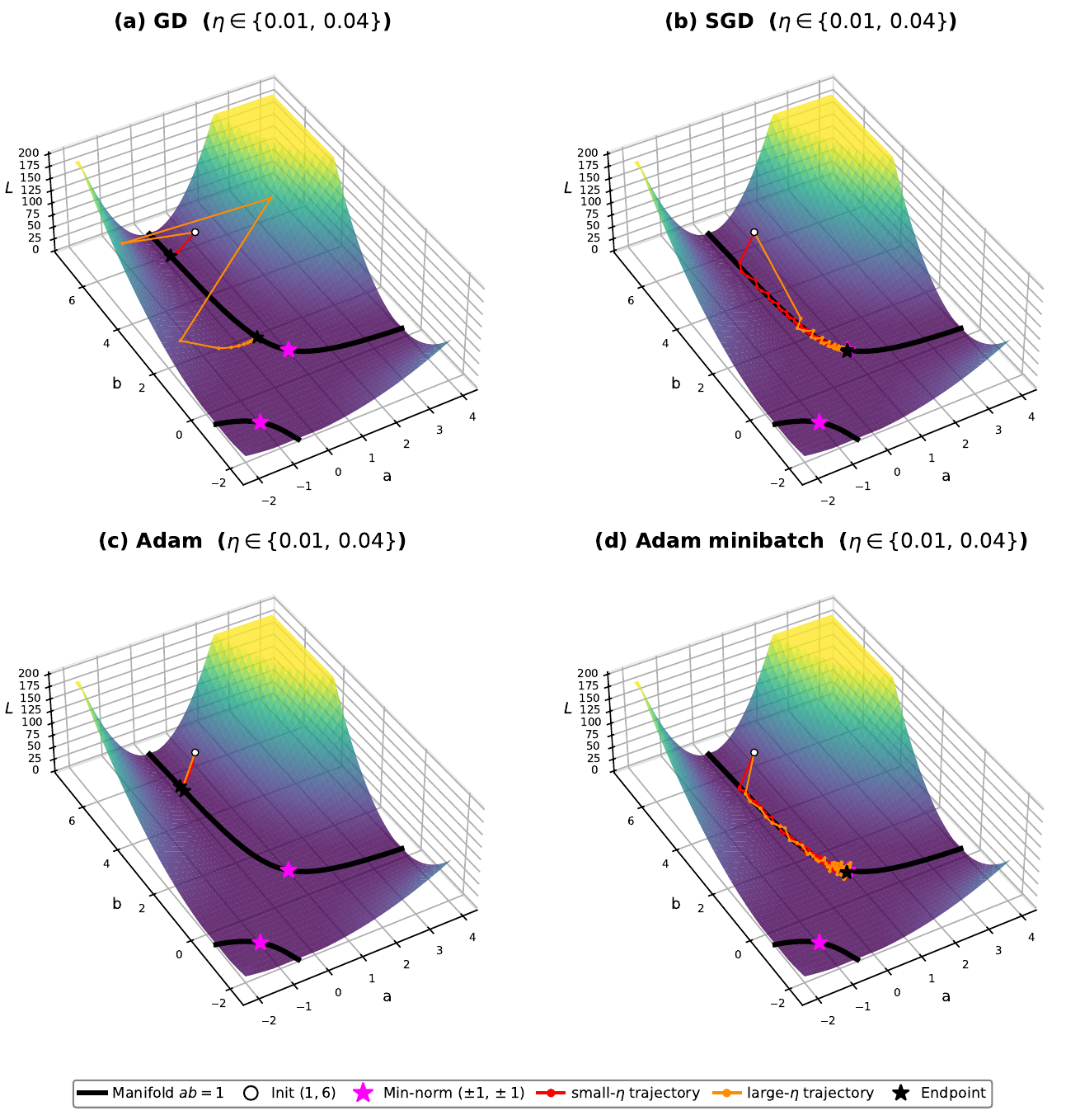}
\caption{\textbf{Optimizer trajectories on the scalar loss
\boldmath$(ab-1)^2$\unboldmath, initialization $(a_0,b_0)=(1,6)$.}
All four panels use the same two learning rates
$\eta\in\{0.01,0.04\}$ (red and orange, respectively); only the
optimizer changes.
The black curve is the manifold of global minima $ab=1$; magenta
stars mark the minimum-norm solutions $(\pm1,\pm1)$.
\textbf{(a)}~Gradient descent (20~steps): the small-$\eta$ trajectory
stays near the initialization ($D\approx-35$, close to gradient-flow
conservation), while the large-$\eta$ trajectory moves toward
lower norm ($D\approx-3$), illustrating the $O(\eta^2K)$ finite-step
leakage.
\textbf{(b)}~SGD with cyclic minibatch targets $\{0,1,2\}$
(up to $50{,}000$~steps): both learning rates converge to the
minimum-norm solution ($D\approx0$), consistent with the
$\mathcal T_{\mathrm{SGD}}=K\eta^2/b$ clock.
\textbf{(c)}~Full-batch Adam ($5{,}000$~steps): the preconditioner
moves $D$ from $-35$ toward $\approx\!-27$, confirming first-order
$O(\eta)$ movement of the imbalance.  However, convergence stalls
because the full-batch gradient $g=2(ab-1)\to0$ at the manifold. This implies that the numerator goes to 0 but the denominator is lowerbounded by $\epsilon>0$.
Thus the $O(\eta)$ coefficient vanishes and $D$ freezes at a nonzero
value.
\textbf{(d)}~Minibatch Adam (up to $200{,}000$~steps): adding
stochasticity restores forgetting---both learning rates reach
$D\approx0$.  Individual minibatch gradients remain nonzero at $ab=1$,
keeping the adaptive clock active.
In real BatchNorm networks, scale invariance prevents the
vanishing-gradient stalling seen in~(c); see the experimental results
in Section~\ref{sec:timescales}.}
\label{fig:toy_landscape}
\end{figure}

\paragraph{Scalar example of first-order breaking.}
The first-order effect is already visible in the scalar two-factor
model. Let \(p=ab\) and \(g=\phi'(ab)\).  The Euclidean gradients are
\[
    h_a=gb,
    \qquad
    h_b=ga.
\]
Consider preconditioned updates
\[
    a_{k+1}=a_k-\eta\alpha_k gb_k,
    \qquad
    b_{k+1}=b_k-\eta\beta_k ga_k,
\]
where \(\alpha_k,\beta_k>0\).  Then
\[
\begin{aligned}
    D_{k+1}-D_k
    &=
    a_{k+1}^2-b_{k+1}^2-(a_k^2-b_k^2) \\
    &=
    -2\eta g a_kb_k(\alpha_k-\beta_k)
    +
    \eta^2g^2
    \left(
        \alpha_k^2 b_k^2-\beta_k^2 a_k^2
    \right).
\end{aligned}
\]
Thus, whenever
\[
    g a_kb_k(\alpha_k-\beta_k)\neq0,
\]
the imbalance changes at first order in \(\eta\).  Euclidean gradient
descent corresponds to \(\alpha_k=\beta_k=1\), in which case the
first-order term vanishes.

This scalar example justifies the adaptive clock
\[
    \mathcal T_{\mathrm{adapt}}
    =
    \sum_{k<K}\eta_k
    =
    K\eta
    \quad\text{for constant \(\eta\)}
\]
as the natural scale on which preconditioned updates can move
initialization-dependent imbalance.

\paragraph{Summary.}
In this minimal homogeneous model, Euclidean gradient flow exactly
conserves initialization-dependent imbalances.  In the scalar two-factor
case, the conserved imbalance directly fixes the final factor norm among
solutions with the same product, giving a precise notion of radial
initialization memory.  Coupled \(L_2\) decay contracts the imbalance on
the clock
\[
    \mathcal T_{L_2}
    =
    \lambda\sum_{k<K}\eta_k.
\]
Euclidean finite-step updates break the conservation law only at second
order; the minibatch-dependent part of the conditional expected leakage
accumulates on the clock
\[
    \mathcal T_{\mathrm{SGD}}
    =
    \frac{1}{b}\sum_{k<K}\eta_k^2.
\]
Generic adaptive preconditioning can break the conservation law at first
order, giving the movement clock
\[
    \mathcal T_{\mathrm{adapt}}
    =
    \sum_{k<K}\eta_k.
\]
These calculations are not a theorem for nonlinear BatchNorm ResNets:
they do not prove monotone forgetting, nor do they determine test
accuracy.  Rather, they identify the conservation law and the
optimizer-dependent orders at which different training mechanisms can
move or contract initialization-dependent quantities.  This mechanism
matches the empirical hierarchy observed in Section~\ref{sec:timescales}:
gradient-flow-like low-LR SGD remembers initialization, increasing the
learning rate or adding explicit \(L_2\) can reduce memory by enlarging
the relevant clocks, and adaptive methods erase initialization-scale
dependence much faster in the diagnostic grid.

\clearpage
\section{Further Related Work}
\label{app:further-related-work}

\paragraph{Purpose of this appendix.}
The main text cites only the papers needed to motivate the central definitions and claims.  This appendix gives the broader historical map.  The question ``why do neural networks not overfit?'' has been approached through several partially overlapping mechanisms: classical capacity control, margins and norms, flatness, PAC-Bayes and compression, interpolation and benign overfitting, function-space priors, spectral and geometric simplicity, stable signal propagation at initialization, and optimizer-dependent implicit bias.  Our paper sits at the intersection of two views.  On one hand, random architectures and initialization schemes define nontrivial priors over functions.  On the other hand, the training pipeline transforms, preserves, or erases those priors.  Initialization memory is our proposed diagnostic for measuring how much of the initial prior survives training.

\paragraph{The common thread.}
The literature can be read as a progression from static explanations to dynamical explanations:
\[
    \text{hypothesis class}
    \quad\to\quad
    \text{trained predictor}
    \quad\to\quad
    \text{training pipeline}
    \quad\to\quad
    \text{memory of initialization}.
\]
Classical theory controls the size of the class; post-hoc complexity measures study the final predictor; implicit-bias work studies the optimizer's selection rule; simplicity-bias work studies the prior at initialization.  Our contribution is to connect the last two: we ask whether the initialization-induced prior remains visible after optimization.

\subsection{From capacity to pipeline-dependent generalization}
\label{app:capacity-pipeline}

\paragraph{Classical complexity control.}
Classical learning theory explains generalization by controlling effective capacity.  VC theory and statistical learning theory formalize uniform convergence in terms of hypothesis-class complexity~\citep{vapnik1971uniform,vapnik1998statistical}.  Later refinements replaced raw parameter counting by data- and norm-dependent quantities, including margins, Rademacher complexities, and weight norms~\citep{bartlett1998sample,bartlett2002rademacher,neyshabur2015norm,bartlett2017spectrally}.  This line is historically essential because it already separates the number of trainable parameters from the actual complexity of the learned predictor.  However, worst-case capacity control alone does not explain why highly overparameterized networks can fit random labels yet generalize well on natural labels.

\paragraph{The random-label challenge.}
The modern crisis point was the observation that standard deep networks can interpolate both natural labels and random labels under essentially the same architecture and optimization machinery~\citep{zhang2017understanding,zhang2021understanding}.  This shifted attention away from the hypothesis class alone and toward the complete pipeline that selects one interpolating solution among many.  Follow-up work showed that deep networks tend to learn simple or structured patterns before memorizing idiosyncratic noise~\citep{arpit2017closer,nakkiran2019sgd}.  The relevant object is therefore not merely the set of functions representable by the architecture, but the algorithmic process by which training selects one of them.

\paragraph{Stability and training time.}
Algorithmic stability gives one route from optimization to generalization.  \citep{hardt2016train} showed that stochastic gradient methods with controlled numbers of steps can be stable, giving a formal sense in which the trajectory itself participates in capacity control.  This perspective is especially relevant for our work because initialization memory is also trajectory-dependent: a fixed architecture may either preserve or erase initialization scale depending on update count, batch size, learning rate, and regularization.

\subsection{Post-hoc explanations: norms, margins, flatness, PAC-Bayes, and compression}
\label{app:posthoc-generalization}

\paragraph{Norm and margin explanations.}
A large body of work attempts to explain generalization by measuring properties of the trained predictor rather than the raw hypothesis class.  Norm-based and margin-based bounds for neural networks include path-norm, spectral-norm, and margin-normalized quantities~\citep{neyshabur2015norm,neyshabur2017exploring,bartlett2017spectrally,neyshabur2018pacbayes,jiang2019predicting}.  Large-scale empirical studies have compared many proposed complexity measures and found that some correlate with generalization better than others, while many fail outside narrow settings~\citep{jiang2020fantastic}.  These works are complementary to ours: they ask how to measure the complexity of the final predictor, whereas we ask how much the final predictor still remembers initialization.

\paragraph{Flatness and sharpness.}
The flat-minima view dates back to \citep{hochreiter1997flat} and became central again in the large-batch generalization debate~\citep{keskar2017large}.  Because sharpness is not invariant to parameter rescaling, later work showed that naive sharpness measures can be misleading in deep networks~\citep{dinh2017sharp}.  Sharpness-aware minimization was subsequently proposed as an explicit algorithmic route to flatter solutions~\citep{foret2021sam}.  In our setting, BatchNorm scale invariance makes this issue particularly relevant: rescaling weights can leave the represented function nearly unchanged while substantially changing the optimization geometry.

\paragraph{PAC-Bayes and compression.}
PAC-Bayes gives another way to turn algorithmic or posterior concentration into generalization statements~\citep{mcallester1999pac,dziugaite2017computing,neyshabur2018pacbayes,zhou2019nonvacuous}.  Compression-based analyses similarly argue that trained networks generalize when they can be compressed without a large loss of performance~\citep{arora2018stronger}.  These approaches support a broad message: overparameterization is not itself fatal if the training procedure selects a small or compressible subset of effective functions.  The initialization-memory viewpoint adds a dynamical question: whether the selected subset is still determined by the initial function prior, or whether training has overwritten it.

\subsection{Interpolation, double descent, and benign overfitting}
\label{app:interpolation}

\paragraph{Interpolation is not necessarily overfitting.}
The double-descent literature recast interpolation as a regime to be understood rather than automatically avoided.  \citep{belkin2019reconciling} proposed that the classical bias--variance curve extends into a second descent beyond the interpolation threshold, and \citep{nakkiran2020deep} demonstrated double descent as a function of model size, data size, and training time in modern deep learning systems.  Benign-overfitting theory shows that, even in simple linear models, interpolating predictors can generalize under appropriate spectral conditions on the data distribution~\citep{bartlett2020benign}.  Our experiments refine this picture for deep networks: interpolation of the training set is not the same as forgetting initialization.  Low-learning-rate SGD can interpolate while retaining large initialization memory.

\subsection{Function-space priors and simplicity bias at initialization}
\label{app:function-priors}

\paragraph{Bayesian and infinite-width priors.}
The idea that random neural networks define a nontrivial function predates the current simplicity-bias literature.  \citep{neal1996bayesian} studied Bayesian neural networks and the connection between random networks and Gaussian processes; \citep{williams1996computing} analyzed computation with infinite neural networks; later work made Gaussian-process and neural-tangent-kernel limits central tools for analyzing wide networks~\citep{lee2018deep,jacot2018neural}.  These function-space limits show that architecture and initialization define a prior over functions before training.

\paragraph{Algorithmic simplicity bias of parameter--function maps.}
A more recent line argues that many parameter--function maps are intrinsically biased toward simple outputs.  \citep{dingle2018input} showed that broad classes of input--output maps can be strongly biased toward low-complexity outputs.  \citep{valle2019deep} applied this idea to neural networks, arguing that the parameter--function map of deep networks is biased toward simple functions and that this can yield PAC-Bayesian explanations of generalization.  \citep{depalma2019random} proved simplicity-bias results for random wide ReLU networks on Boolean inputs.  Related work studies a priori biases toward low-entropy Boolean functions~\citep{mingard2020neural}, and \citep{mingard2025occam} sharpened the claim that deep networks have an inbuilt Occam-like bias.  These works establish initialization and architecture as genuine sources of inductive bias.  They do not, however, determine how much of that bias survives optimization.  That survival question is the focus of our paper.

\paragraph{Algorithmic complexity and Lempel--Ziv measures.}
Several simplicity-bias papers operationalize simplicity through computable proxies for Kolmogorov complexity.  The relevant background includes Lempel--Ziv complexity and universal compression~\citep{lempel1976complexity,ziv1977universal}, as well as the broader theory of Kolmogorov complexity~\citep{li2008kolmogorov}.  In the neural-network setting, these measures are often used not because they are the only possible definition of simplicity, but because they provide a concrete way to compare the output complexity of functions induced by random parameter draws.  Our work is agnostic about which simplicity metric is definitive: the question we isolate is whether any initialization-induced simplicity bias remains visible after training.

\paragraph{Transformer simplicity bias.}
Simplicity bias is not restricted to convolutional or fully connected networks.  Work on Transformers has studied biases toward sparse or low-sensitivity Boolean functions~\citep{bhattamishra2023simplicity,hahn2024sensitive,vasudeva2025transformers}.  Recent work further argues that Transformers already contain structural inductive biases at random initialization~\citep{li2026transformersborn}.  These results are relevant to the broader question of whether large modern architectures prefer simple rules, but they do not by themselves determine whether training preserves the initialization-induced prior.  Initialization memory is intended to measure this survival question directly.

\subsection{Technical notions of simplicity and their training dynamics}
\label{app:notions-of-simplicity}

\paragraph{Frequency, sensitivity, regions, and geometry.}
The literature uses several inequivalent notions of simplicity.  Spectral-bias or frequency-principle results show that neural networks often learn low-frequency components before high-frequency components~\citep{rahaman2019spectral}.  Sensitivity-based work measures input-output Jacobians and local robustness around the data manifold~\citep{novak2018sensitivity}.  Region-counting work studies the number of linear regions or activation patterns in ReLU networks, showing that typical trained or initialized networks use far fewer regions than worst-case expressivity bounds allow~\citep{hanin2019complexity,hanin2019activation}.  Geometric-complexity work measures variation of the learned function through a Dirichlet-energy-like quantity and shows that many regularizers control this geometry~\citep{dherin2022geometric}.  These notions are not identical, but they share a common theme: practical deep networks tend to realize functions much simpler than the worst-case architecture permits.

\paragraph{Simplicity along training.}
Recent work has begun to study not only the simplicity of random networks or final predictors, but also the order in which different complexities are learned.  Neural networks trained with SGD can learn distributions of increasing complexity over time~\citep{refinetti2023neural}.  Two-layer ReLU models exhibit simplicity bias and optimization thresholds that can be analyzed in controlled settings~\citep{boursier2024simplicity}.  Saddle-to-saddle dynamics have also been proposed as a mechanism for simplicity bias across architectures~\citep{zhang2026saddle}.  These works are close in spirit to our paper because they treat simplicity as a dynamical phenomenon rather than a static property of the architecture alone.

\paragraph{Support learning as implicit regularization.}
A particularly relevant recent result is \citep{beneventano2024support}, who study how neural networks learn the support of the target function.  They show that mini-batch SGD can shrink irrelevant input weights in the first layer, while full-batch gradient descent requires explicit regularization to obtain the same effect.  Their mechanism is a second-order implicit regularization effect of SGD that depends on step size and batch size.  This is directly aligned with our perspective: stochastic finite-step effects can erase or restructure parts of the initialization-dependent geometry, whereas gradient-flow-like dynamics may preserve them.

\subsection{Initialization as a dynamical boundary condition}
\label{app:initialization-boundary}

\paragraph{Basins and classical dynamical systems.}
The informal intuition that initialization chooses a basin of attraction comes from classical dynamical systems: the long-time behavior of an autonomous flow can depend strongly on its initial condition and on the basins of attraction of stable invariant sets~\citep{strogatz2015nonlinear,hirsch2013differential}.  In neural-network training, this language is only an analogy: high-dimensional nonconvex losses, stochastic updates, normalization layers, and time-varying learning rates make the dynamics much richer than a simple gradient flow.  Nevertheless, the basin viewpoint motivates the empirical question we study: whether the final trained predictor remains measurably dependent on its initial condition.

\paragraph{Initialization in implicit-bias theory.}
Several theoretical works show that initialization can affect convergence and implicit bias, even in simplified linear or homogeneous models.  \citep{min2021explicit} analyzes the explicit role of initialization in overparameterized linear networks.  \citep{gruber2024role} studies how initialization affects the implicit bias of deep linear networks.  These works complement our minimal linear-network calculation: they show that initialization is not merely a nuisance parameter, but can control the geometry of the solution selected by gradient-based training.

\subsection{Initialization for trainability and signal propagation}
\label{app:init-trainability}

\paragraph{Variance propagation and dynamical isometry.}
A historically distinct literature treats initialization primarily as a trainability device.  Xavier and Kaiming initializations were designed to stabilize forward and backward signal propagation through deep networks~\citep{glorot2010understanding,he2015delving}.  Deep-linear-network analyses showed how initialization affects learning dynamics and training plateaus~\citep{saxe2014exact}.  Mean-field and random-matrix analyses of signal propagation identified ordered and chaotic regimes in random deep networks~\citep{poole2016exponential,schoenholz2017deep}, while dynamical-isometry results emphasized controlling the singular-value distribution of the input-output Jacobian at initialization~\citep{pennington2017resurrecting,chen2018dynamical,xiao2018dynamical}.  Hanin and collaborators studied how architecture and initialization determine trainability, including the onset of exploding and vanishing gradients~\citep{hanin2018start,hanin2018gradients}.  This line explains why initialization can enable training without necessarily claiming that the final predictor should remain close to the initial prior.

\paragraph{Beyond signal propagation.}
Signal propagation is not the only relevant initialization criterion.  \citep{blumenfeld2020beyond} argue that feature diversity at initialization can matter beyond the stability of forward and backward signals.  This is conceptually close to our motivation: initialization can affect not only whether training is numerically stable, but also what information and geometry are available to the optimizer early in training.

\paragraph{Dynamical mean-field and kernel-evolution perspectives.}
Mean-field and dynamical field theories provide another route to analyzing training from random initialization.  Work on kernel evolution in wide neural network models, how predictors move away from their initial random-kernel description during feature learning~\citep{bordelon2023self,bordelon2024finite}.  Recent analyses of deep linear networks from random initialization connect data, width, depth, and hyperparameter transfer~\citep{bordelon2025deep}, while feature-learning infinite limits yield adaptive kernel predictors~\citep{lauditi2025adaptive}.  These works are part of the same broader shift from static initialization priors to the dynamics by which training modifies those priors.

\paragraph{Parameterization and scaling.}
Modern scaling theory further emphasizes parameterization as part of the training pipeline.  Tensor Programs and maximal-update parameterization show that stable feature learning and hyperparameter transfer require carefully chosen width scalings~\citep{yang2020tensor,yang2021feature,yang2021tensorv}.  In this view, initialization is tightly coupled to learning-rate scales and update magnitudes.  Our experiments echo this philosophy in a finite-network setting: initialization scale, learning rate, batch size, and regularization jointly determine whether the initial condition is remembered.

\subsection{Normalization, scale invariance, and effective learning rates}
\label{app:scale-invariance}

\paragraph{BatchNorm and scale-invariant optimization.}
Batch normalization was introduced as a way to accelerate training and reduce sensitivity to initialization~\citep{ioffe2015batch}.  Subsequent theory emphasized that normalized networks contain scale-invariant parameter blocks, for which rescaling weights can leave the represented function nearly unchanged while altering the effective optimization dynamics~\citep{arora2019auto,li2020intrinsic}.  The intrinsic-learning-rate view is particularly close to our mechanism: for scale-invariant parameters, the effective angular step size scales inversely with the squared norm.  This makes initialization scale a useful tracer in BatchNorm networks, because it is partly hidden from the forward map while remaining visible to the optimizer.

\subsection{Backward error analysis and modified equations}
\label{app:bea-modified-equations}

\paragraph{Historical context.}
Backward error analysis is a classical tool in numerical analysis for understanding what a numerical algorithm actually solves.  Its linear-algebra form is closely associated with Wilkinson's work on roundoff error and eigenvalue computations: rather than measuring only the forward error with respect to the original problem, one asks whether the computed output is the exact solution of a nearby perturbed problem~\citep{wilkinson1963rounding,wilkinson1965algebraic,higham2002accuracy}.  For time-stepping methods, the analogous question is whether a discrete integrator is the exact time-$h$ map of a nearby differential equation.  This is the method of modified equations.  Foundational analyses by \citep{griffiths1986scope} and subsequent developments in geometric numerical integration made this viewpoint central for understanding the qualitative behavior of ODE solvers~\citep{calvo1994modified,hairer2006geometric}.  In Hamiltonian and structure-preserving integration, backward error analysis explains why finite-step methods may preserve modified invariants or modified Hamiltonians for long times, even when they do not exactly solve the original continuous system.

\paragraph{Stochastic modified equations.}
A stochastic analogue was developed for numerical SDEs and weak approximation.  Modified-equation methods for SDEs show that a discrete stochastic scheme can be interpreted, in a weak sense, as solving a nearby stochastic dynamics whose drift, diffusion, or invariant measure has been perturbed by the step size~\citep{shardlow2006modified,zygalakis2011existence,debussche2012weak}.  This line is important for modern optimization because constant-step stochastic algorithms are not merely noisy versions of gradient flow; their invariant behavior and local stability can depend on finite-step corrections.  Uniform-in-time weak-error analyses for SGD make this connection explicit by using tools motivated by backward error analysis for stochastic differential equations~\citep{feng2020uniform}.

\paragraph{BEA in optimization and machine learning.}
The ML use of modified-equation ideas began by treating stochastic gradient algorithms as discrete dynamical systems whose continuous-time approximations should include step-size-dependent correction terms.  \citep{li2017stochastic} introduced stochastic modified equations for adaptive stochastic gradient algorithms, and \citep{li2019stochastic} developed the mathematical foundations for SGD, momentum SGD, and stochastic Nesterov methods.  In deep-learning generalization, \citep{barrett2021implicit} used backward error analysis to show that finite learning rates in gradient descent induce an implicit gradient regularization term, while \citep{smith2021origin,beneventano2023trajectories} extended this perspective to SGD and derived minibatch-dependent finite-step regularization.  Subsequent work used related discretization-error or modified-dynamics viewpoints to study deep-network gradient descent~\citep{miyagawa2022toward}, momentum \citep{rosca2023continuous,ghosh2023implicit,cattaneo2025modified}, stochastic coordinate descent~\citep{digiovacchino2024backward}, and the implicit bias of Adam and RMSProp~\citep{cattaneo2024implicit,cattaneo2026memory,cattaneo2026effect}.  These works support the conceptual move made in our paper: raw epoch count is not the natural dynamical unit.  The effects that erase initialization memory are governed by accumulated step-size, stochasticity, preconditioning, and regularization timescales, such as
\[
    \frac{1}{b}\sum_k \eta_k^2,
    \qquad
    \lambda\sum_k \eta_k,
    \qquad
    \sum_k \eta_k .
\]
In this sense, backward error analysis provides the mathematical language for our ``forgetting-time'' interpretation: finite-step optimization follows a nearby dynamics, and the perturbation terms of that nearby dynamics can act as implicit regularizers that overwrite parts of the initialization-induced geometry.

\subsection{Optimization as solution selection and forgetting}
\label{app:optimization-selection}

\paragraph{Implicit bias of gradient descent.}
A large body of literature shows that optimization selects among interpolating solutions.  In separable linear classification, gradient descent on exponential-type losses converges in direction to the max-margin classifier~\citep{soudry2018implicit}.  For linear convolutional networks, the implicit bias depends on architecture and depth~\citep{gunasekar2018implicit}.  Deep matrix factorization exhibits an implicit tendency toward low-rank solutions~\citep{arora2019implicit}.  For homogeneous neural networks, gradient descent can be related to margin maximization~\citep{lyu2020gradient}.  These works support the view that optimization does not merely minimize loss; it chooses a particular geometry among many interpolating predictors.

\paragraph{SGD noise, batch size, and finite-step effects.}
The stochasticity and discretization of SGD also affect the selected solution.  Large-batch training was linked to sharp minima and generalization gaps~\citep{keskar2017large}, while other work argued that the number of updates and high-learning-rate phase can be central to closing this gap~\citep{hoffer2017train}.  Constant-step SGD can be viewed as a stochastic process with an approximate stationary distribution~\citep{mandt2017sgd}.  As reviewed in Appendix~\ref{app:bea-modified-equations}, backward error analysis gives a complementary view: finite-step gradient methods approximately follow the gradient flow of a modified objective~\citep{barrett2021implicit}, and the SGD correction can be minibatch-dependent~\citep{smith2021origin}.  These works motivate our timescale language: raw epoch count is not the natural unit of forgetting; update count, learning rate, batch size, and explicit decay determine how quickly training leaves the initialization-controlled regime.

\paragraph{Adaptive methods.}
Adaptive optimizers use a geometry different from Euclidean gradient descent.  \citep{wilson2017marginal} showed that adaptive methods can select different solutions from SGD and can generalize differently even when they optimize training loss well.  Our results are consistent with this broader message: Adam-family methods do not merely train faster in our grid; they erase initialization-scale dependence more readily.  We interpret this as a forgetting property of the optimizer geometry, not as a universal claim that Adam is always preferable to SGD.

\subsection{Random seeds and initialization effects in language models}
\label{app:llm-seeds}

\paragraph{Fine-tuning and pretraining variability.}
Recent NLP work makes seed dependence concrete in large pretrained models.  \citep{dodge2020finetuning} showed that fine-tuning BERT can vary substantially across random seeds, with both weight initialization and data order contributing to downstream variance.  For decoder-only pretraining, \citep{vanderwal2025polypythias} introduced multiple Pythia-style pretraining runs across seeds and model sizes, finding broadly stable dynamics but also identifiable outlier runs.  \citep{fehlauer2025convergence} studied convergence and divergence of language models across random seeds through token-level distributional comparisons.  \citep{tong2026seedprints} go further, showing that models can retain fingerprints of their training seed.  These papers motivate the same broad question as ours in a different setting: which parts of final performance are due to the initial condition, and which are erased by training?

\paragraph{Controlled language-model training and architecture design.}
The Physics of Language Models series studies controlled mechanisms in language-model training, including knowledge storage and extraction~\citep{allenzhu2024physics31} and architecture design choices in transformer training~\citep{allenzhu2025physics41}.  These works are not direct substitutes for an initialization-scale sweep, but they reinforce the broader point that large-scale model behavior is shaped by a coupled training pipeline rather than by architecture alone.

\paragraph{Why our setting is deliberately smaller.}
Our paper studies CIFAR-10 BatchNorm ResNets rather than large language models because the goal is to sweep initialization scale, optimizer, batch size, learning rate, regularization, depth, and seeds extensively.  The resulting controlled study is not meant to replace LLM-scale evidence.  Instead, it isolates the mechanism: initialization can be a function prior, a trainability device, and a boundary condition for optimization; the observed performance effect depends on whether the training pipeline remembers that boundary condition.

\subsection{Position of this work}
\label{app:related-position}

\paragraph{The missing dynamical link.}
Prior work has studied architecture-induced priors, simplicity bias, trainability at initialization, optimizer implicit bias, scale-invariant optimization, and large-batch effects.  We connect these themes through one measurable quantity,
\[
    \mathsf{Mem}_{\mathrm{acc}}(\mathcal R,K),
\]
The dependence of the returned predictor on the initialization scale under a fixed training procedure $\mathcal R$.  The resulting statement is neither that initialization always matters nor that it disappears universally.  Initialization matters when the training dynamics remember it.

\paragraph{Summary.}
The literature above identifies many reasons why neural networks may avoid overfitting: classical capacity control, margins and norms, flat minima, compression, Bayesian priors, benign overfitting, function-space simplicity, spectral/geometric bias, stable signal propagation, and optimizer-induced implicit bias.  Our contribution is not to replace these explanations.  It is to add a dynamic measurement that connects them:
\[
    \text{initialization prior}
    \quad\xrightarrow{\text{training pipeline}}\quad
    \text{trained predictor}.
\]
Initialization memory measures how much of the left-hand side survives the arrow.  In this sense, initialization is neither universally decisive nor universally irrelevant.  It matters exactly when the training dynamics remember it.

\end{document}